\documentclass{article}
\usepackage{arxiv-a4}

\usepackage{xkeyval}
\usepackage{xstring}
\usepackage{iftex}
\usepackage{microtype}
\usepackage{etoolbox}
\usepackage{booktabs}
\usepackage{refcount}
\usepackage{totpages}
\usepackage{environ}
\usepackage{setspace}
\usepackage{textcase}
\usepackage{natbib}
\usepackage[pageanchor=false,colorlinks=true,allcolors=black,bookmarksnumbered,unicode]{hyperref}
\usepackage{graphicx}
\usepackage[prologue]{xcolor}

             \setcounter{topnumber}{4}
\setcounter{bottomnumber}{4}
\setcounter{totalnumber}{4}     \setcounter{dbltopnumber}{4}

\usepackage{latexsym}
\usepackage{amssymb}
\usepackage{amsmath}
\usepackage{amsthm}

\usepackage{amssymb}
\usepackage{bbold}
\usepackage{booktabs}
\usepackage{multirow}
\usepackage{siunitx}
\usepackage{framed}
\usepackage{graphicx}
\usepackage{subcaption} \usepackage{color}

\usepackage{xpatch}
\usepackage{thmtools}
\usepackage{apptools}

\makeatletter
\xpatchcmd{\thmt@restatable}{\csname #2\@xa\endcsname\ifx\@nx#1\@nx\else[{#1}]\fi}{\IfAppendix{\csname #2\@xa\endcsname}{\csname #2\@xa\endcsname\ifx\@nx#1\@nx\else[{#1}]\fi}}{}{} \makeatother

\newcommand{\labelT}[1]{\label{#1}}

\usepackage[normalem]{ulem}
\usepackage[inline]{enumitem}

\usepackage{lineno}   \usepackage{amsmath}  \usepackage{etoolbox} 

\newcommand*\linenomathpatch[1]{\cspreto{#1}{\linenomath}\cspreto{#1*}{\linenomath}\csappto{end#1}{\endlinenomath}\csappto{end#1*}{\endlinenomath}}

\linenomathpatch{equation}
\linenomathpatch{gather}
\linenomathpatch{multline}
\linenomathpatch{align}
\linenomathpatch{alignat}
\linenomathpatch{flalign}

\usepackage{adjustbox}

\usepackage{tikz}
\usepackage{pgfplots}
\usepackage{pgfplotstable}
\usetikzlibrary{arrows.meta}
\graphicspath{{./figures/}}

\usepackage[ruled,vlined,linesnumbered]{algorithm2e}
\SetKwProg{Fct}{Fct}{}{}
\let \oriAlgorithm=\algorithm
\renewcommand{\algorithm}{
  \oriAlgorithm  \DontPrintSemicolon
}

\mathchardef\mhyphen="2D 
\newcommand\simplex[1]{\Delta(#1)}
\def\cS{\mathcal{S}}
\def\cA{\mathcal{A}}

\def\cI{\mathcal{I}}

\def\obs{\text{\sc obs}}
\def\ag{\text{\sc ag}}
\def\pred{\text{pred}}
\def\target{\xi}
\def\Target{\Xi}
\usepackage{cancel}

\def\type{\theta}
\def\Type{\Theta}
\def\target{\psi}
\def\Target{\Psi}
\def\targetOf{\phi}
\def\candidates{C}

\def\Robs{R_{\obs}}
\def\Rag{R_{\ag}}
\def\Rleg{R_{\text{leg}}}
\def\Rexp{R_{\text{exp}}}
\def\Rpred{R_{\bullet\mhyphen\pred}}
\def\RApred{R_{a\mhyphen\pred}}
\def\RSpred{R_{s\mhyphen\pred}}
\def\piobs{\pi_{\obs}}

\usepackage{xspace}
\def\ie{{\em i.e.}\@\xspace}
\def\eg{{\em e.g.}\@\xspace}
\def\cf{{\em cf.}\@\xspace}

\newcommand{\eqdef}     {\triangleq}

\DeclareMathOperator*{\argmax}{arg\,max}
\newcommand{\upb}[1]{\overline{#1}}
\newcommand{\lob}[1]{\underline{#1}}

\newcommand{\norm}[1]{\|#1\|}

\usepackage[noabbrev,capitalise]{cleveref}

\newtheorem{theorem}{Theorem}

\newtheorem{corollary}[theorem]{Corollary}
\newtheorem{proposition}[theorem]{Proposition}

\DeclareRobustCommand{\abbrevcrefs}{\Crefname{appendix}{App.}{Apps.}\Crefname{section}{Sec.}{Secs.}\Crefname{equation}{Eq.}{Eqs.}\Crefname{figure}{Fig.}{Figs.}\Crefname{algorithm}{Alg.}{Algs.}\Crefname{tabular}{Tab.}{Tabs.}\Crefname{lemma}{Lem.}{Lems.}\Crefname{corollary}{Cor.}{Cors.}\Crefname{theorem}{Thm.}{Thms.}\Crefname{proposition}{Prop.}{Props.}\Crefname{line}{L.}{Ls.}\Crefname{item}{It.}{Its.}\crefname{appendix}{app.}{apps.}\crefname{section}{sec.}{secs.}\crefname{equation}{eq.}{eqs.}\crefname{figure}{fig.}{figs.}\crefname{algorithm}{alg.}{algs.}\crefname{tabular}{tab.}{tabs.}\crefname{lemma}{lem.}{lems.}\crefname{corollary}{cor.}{cors.}\crefname{theorem}{thm.}{thms.}\crefname{proposition}{prop.}{props.}\crefname{line}{l.}{ls.}\crefname{item}{it.}{its.}}

\DeclareRobustCommand{\Cshref}[1]{{\abbrevcrefs\Cref{#1}}}

\usepackage{xcolor}
\newcommand{\persComment}[3]{
	\ifmmode
	\text{\textcolor{#3}{[#2] #1}}
	\else
	\textcolor{#3}{[#2] \em #1}
	\fi
}

\def\largBande{1}

\newcommand{\ObservationLine}[2][]{

\addplot[#1,fill,white,opacity=0.5] coordinates {
  (#2+1-\largBande,1)
  (#2+1-\largBande,0)
  (#2+1,0)
  (#2+1,1)
  };

\addplot[#1] coordinates {
  (#2+1-\largBande,1)
  (#2+1-\largBande,0)
  (#2+1,0)
  (#2+1,1)
  };

}

\newcommand{\textBelief}[3]{
  \node at (rel axis cs:#2/#3+0.5/#3,0.1) {#1};
}

\title{ Observer-Aware Probabilistic Planning \linebreak
  under Partial Observability
}

\chead{}
\lhead{Observer-Aware Probabilistic Planning under Partial Observability}

\author{
  Salomé Lepers$^1$ \and
  Vincent Thomas$^1$
  \and
  Olivier Buffet$^1$
  \\
\begin{minipage}{.99\linewidth}
  \small \centering
  $^{(1)}$Université de Lorraine, CNRS, Inria, LORIA, F-54000 Nancy, France
  \end{minipage}
}

\begin{document}

\maketitle

\begin{abstract}
  In this article, we are interested in planning problems where the agent is aware of the presence of an observer, and where this observer is in a partial observability situation.
The agent has to choose its strategy so as to optimize the information transmitted by observations.
Building on observer-aware Markov decision processes (OAMDPs), we propose a framework to handle this type of problems and thus formalize properties such as legibility, explicability and predictability.
This extension of OAMDPs to partial observability can not only handle more realistic problems, but also permits considering dynamic hidden variables of interest.
These dynamic target variables allow, for instance, working with predictability, or with legibility problems where the goal might change during execution.
We discuss theoretical properties of PO-OAMDPs and, experimenting  with benchmark problems, we analyze HSVI's convergence behavior with dedicated initializations and study the resulting strategies.
\end{abstract}

\section{Introduction}

As explained by \citet{DBLP:journals/expert/KleinWBHF04}, efficient and safe human-agent collaboration requires behaviors that carry information such as intentions, abilities, current status or upcoming actions (see also \citep{SchReiHeyEve-acm21,DBLP:journals/ijrr/SingamaneniBMGSSA24}).
Various works in manipulation or mobile robotics try to derive behaviors with such properties  \citep{DBLP:conf/rss/DraganS13,DraLeeSri-hri13,FisacEtAl-AFR20,DBLP:journals/arobots/BeetzSEFKKMR10,DBLP:conf/iros/Angelopoulos0N022}.
An alternative is to explicitly communicate through language with the human \citep{DBLP:conf/ro-man/GongZ18}.

Here we consider an agent (robot or otherwise) observed by a passive human, as in \Cref{figOAMDP} (left).
In this setting, \citet{DBLP:conf/aips/ChakrabortiKSSK19,DBLP:journals/corr/abs-1811-09722}
build on previous work to derive a taxonomy of the concepts behind such information communication through the behavior.
In particular, they distinguish between 
\begin{enumerate}
	\item transmitting information, with properties such as
{\em legibility} (legible behaviors convey intentions, \ie, actual task at hand, via action choices),
{\em explicability} (explicable behaviors conform to observers’ expectations, \ie, they appear to have some purpose), and
{\em	predictability } (a behavior is predictable if it is easy to guess the end
	of an on-going trajectory); or
\item	hiding information, as through
{\em	obfuscation}, when the agent tries to hide its actual goal.
\end{enumerate}

They propose a general framework for such problems while assuming deterministic dynamics, and work mostly with plans (a sequence of actions, which induces a
sequence of states).
In their approach, the human is modeled by the robot as having a model
of the robot+environment system (including the robot’s possible
tasks), and is thus able to predict the robot's behavior.

\begin{figure}
	\centering
	\includegraphics[width=1.\columnwidth]{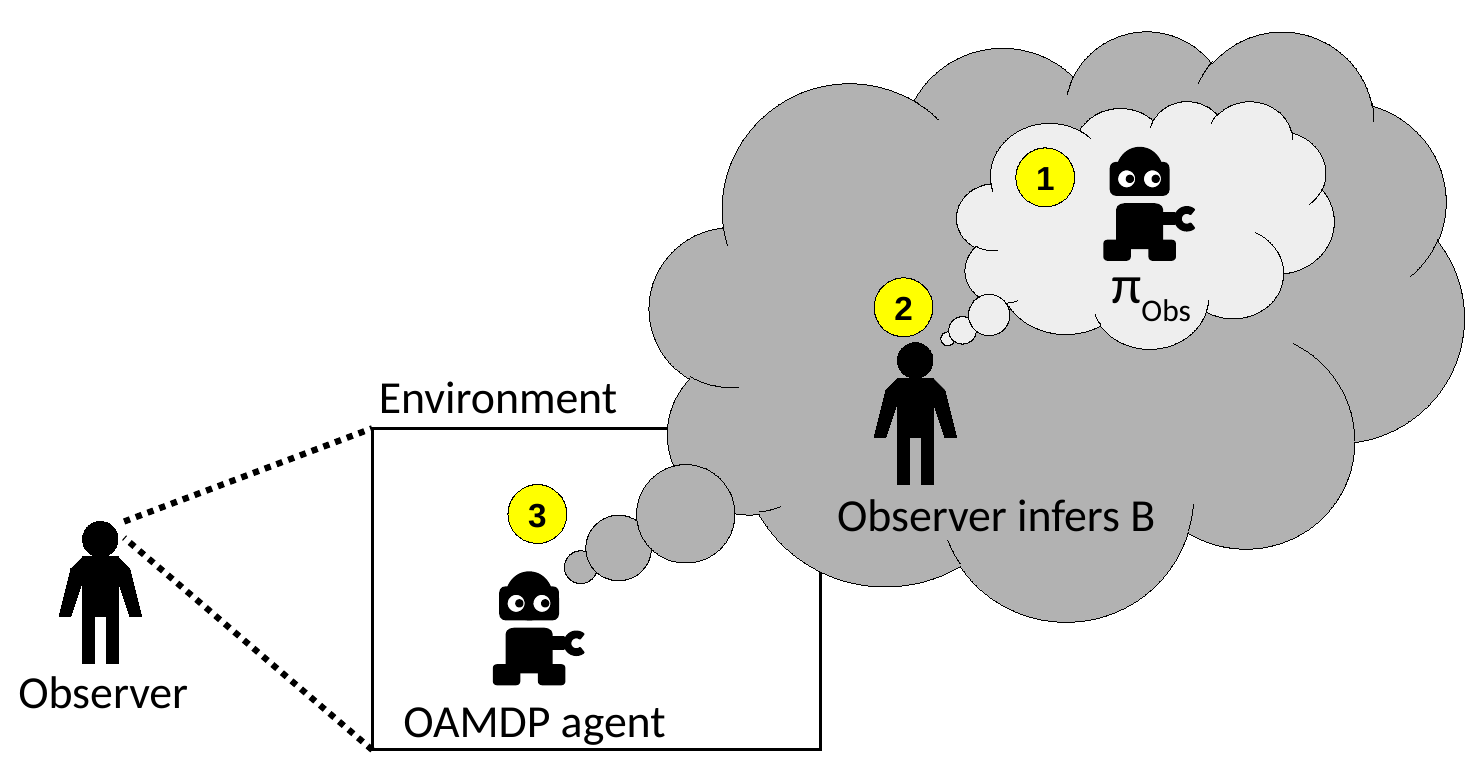}
\caption{An OAMDP agent (3) assumes that the observer expects (2) the agent to behave so as to achieve some task (1).}
	\label{figOAMDP}
\end{figure}

\citet{pmlr-v161-miura21a} build a unifying framework while assuming stochastic transitions, namely {\em observer-aware Markov decision processes} (OAMDPs), adopting a similar approach as \citet{DBLP:conf/aips/ChakrabortiKSSK19}, as illustrated in \Cref{figOAMDP}.
Among other things,  their work also covers legibility, explicability, and predictability.
To better handle predictability, \citet{LepLemThoBuf-arxiv24}
have recently proposed an approach that does not reason with complete trajectories, but with actions or states at each time step, thus being better suited to stochastic dynamics.
This implies reasoning on dynamic target variables, which requires introducing a variant of the OAMDP formalism, namely the pOAMDP (predictable OAMDP).

This paper proposes a formalism that can handle both
\begin{enumerate*}
\item problems with a static type (\eg, legibility, explicability as with OAMDPs) or a dynamic one (predictability as with pOAMDPs), and
\item problems with partial observability.
\end{enumerate*}
In this last situation, the observer may not have access to the state and the action of the agent but to an observation that depends on the transition, but the agent has access to all information, {\em including} the observer's observation.
Introducing partial observability allows considering more diverse and more realistic scenarios.
For example, one can consider settings where the PO-OAMDP agent is not always visible and needs to choose between several paths to be seen by the observer and allow her to better infer the current situation.

\Cref{sec|background} provides background on Markov decision processes and observer-aware MDPs.
The PO-OAMDP formalism is introduced  in \Cshref{sec|contribution}, before discussing theoretical properties of PO-OAMDPs and an example solving algorithm in \Cshref{sec|resolution}, presenting illustrative experiments in \Cshref{sec|XPs}, and concluding in \Cshref{sec|conclusion}.

\section{Background}
\label{sec|background}

\subsection{Markov Decision Processes}
A Markov decision process (MDP) \citep{Bellman-jmm57,Bertsekas-dpoc05} is specified through a tuple {$\langle \cS, \cA, T, R, \gamma, \cS_f \rangle$} where:
$\cS$ is a finite set of states;
$\cA$ is a finite set of actions;
$T: \cS \times  \cA \times \cS \to [0;1]$, the transition function, gives the
  probability $T(s,a,s')$ that action $a$ performed in state $s$ will
  lead to state $s'$;
$R : \cS \times \cA \times \cS \to \mathbb{R}$, the reward function, gives the immediate
  reward $R(s,a,s')$ received upon transition $(s,a,s')$;
$\gamma \in [0,1]$ is a discount factor; and
$\cS_f \subset \cS$ is a set of terminal states:
for all $s,a \in \cS_f \times \cA$, $T(s,a,s)=1$ and $R(s,a,s)=0$.

Then, a (stochastic) policy $\pi: \cS \to \simplex{\cA}$ maps states to distributions over actions, $\pi(a|s)$ denoting the probability to perform $a$ when in $s$.
When a policy is deterministic, $\pi$ denotes the only possible action in $s$.
Assuming $\gamma<1$, the value of a policy $\pi$ is the sum of discounted rewards over an infinite horizon:
\begin{align*}
	V^{\pi}(s)
	& \eqdef \mathbb{E}_\pi \left[ \sum_{t=0}^\infty \gamma^{t}r(S_t, A_t) | S_0=s  \right],
\end{align*}
and an optimal policy $\pi^*$ is such that, for all $s$, $V^{\pi*}(s) = \max_{\pi} V^{\pi}(s)$.

The {\em value iteration} (VI) algorithm approximates $V^*$, the value function common to all optimal policies, by iterating the following computation (where $k$ is the current iteration):
\begin{align*}
  V_{k+1}(s) & \gets \max_a \sum_{s'} T(s,a,s') \cdot \left( R(s,a,s') + \gamma V_k(s') \right).
  \intertext{
    Calculations stop when the {\em Bellman residual} is below a threshold:
$\max_s |V_{k+1}(s) - V_k(s)| \leq \frac{1-\gamma}{\gamma} \epsilon$,
an $\epsilon$-optimal deterministic policy being then obtained by acting greedily with respect to the solution value function $V_k$ , \ie, using
  }
  \pi^*(s) & \gets \argmax_a \sum_{s'} T(s,a,s') \cdot \left( R(s,a,s') + \gamma V^*(s') \right).
\end{align*}

These properties remain valid with $\gamma=1$ if
\begin{enumerate}
	\item $\cS_f$ is not empty; and
	\item $R$ is such that there exists at least one policy that reaches $\cS_f$ with probability $1$ from any state $s$, and
that the value of other policies diverge towards $-\infty$ in states from which $\cS_f$ is reachable with probability $<1$.
\end{enumerate}

Such problems are called {\em Stochastic Shortest Path problems} (SSPs).
In particular, we have an SSP if, for all $(s,a,s') \in (\cS \setminus \cS_f) \times \cA \times \cS$, we have $r(s,a,s')<0$, meaning that we are trying to reach a terminal state at ``minimum cost'' (on average).

Note: SSPs are more general than MDPs because any MDP can be turned into an SSP with, at any time step, a $1-\gamma$ probability to transition to a terminal state.

\subsection{Observer-Aware Markov Decision Processes}

An {\em Observer-Aware MDP} (OAMDP) \citep{pmlr-v161-miura21a} models a situation wherein an agent
is aware of the presence of an external observer, and
interacts with its environment while
attempting to maximize a performance criterion linked to the observer's belief about the agent's ``type'',
the {\em belief} about some variable being the probability distribution over this variable's possible values.

An OAMDP is formalized by a tuple {$\langle \cS, s_0, \cA, T, \gamma, \cS_f, \Type, B, \Rag \rangle$} where:
\begin{itemize}
	\item $\langle \cS, s_0, \cA, T, \gamma, \cS_f \rangle$ is an MDP with an initial state $s_0$ but {\em no reward function};
	\item $\Type$ is a finite set of {\em types} representing a characteristic of the
	agent such as possible goals, intentions or capabilities;
	\item $B: H^* \to \simplex{\Type}$ gives the assumed belief of the observer (about the agent's type) given a state-action history;
	\item $\Rag: \cS \times \cA \times \simplex{\Type} \to \mathbb{R}$ is the {\em ag}ent's reward function.
\end{itemize}

In most of the cases they consider, \citeauthor{pmlr-v161-miura21a} derive $B$ by relying on \citeauthor{BakSaxTen-cog09}'s ``BST'' Bayesian belief update rule \citep{BakSaxTen-cog09}, \ie, considering that, again from the agent’s viewpoint, the observer models the agent’s behavior for a given type $\type$ through an MDP by:
\begin{enumerate}
\item using a corresponding reward function $\Robs^\type$;
	\item solving MDP $\langle \cS, s_0, \cA, T^\type, \Robs^\type, \gamma, \cS_f^\type \rangle$  (where $\cS$, $\cA$ and $\gamma$ are as in the OAMDP definition)
	to obtain $V^{*,\type}_\obs$ for states reachable from $s_0$; and
\item 	building a stochastic “softmax” policy $\piobs^\type$ such that, $\forall(s,a)$,
	\begin{align*}
		\piobs^\type(a|s) & \eqdef
		\frac{
			e^{\frac{1}{\tau}Q^{*,\type}_\obs(s,a)}
		}{
			\sum_{a'} e^{\frac{1}{\tau}Q^{*,\type}_\obs(s,a')}
		},
\end{align*}
         where
         $ Q^{*,\type}_\obs(s,a) \eqdef \sum_{s'} T^\type(s,a,s') \cdot ( \Robs^\type(s,a,s') + \gamma V^{*,\type}_\obs(s') )$,
and temperature $\tau > 0$ allows tuning the policy’s optimality (thus the agent’s assumed rationality for the observer).
\end{enumerate}
The observer belief about the type can thus be obtained by Bayesian inference using $\piobs^\type$.
Note that, unless, for some $\type$, we have $T=T^\type$, $\Rag=\Robs^\type$, and $\cS_f=\cS_f^\type$, then there will likely be no perfect match between the agent's behavior and any of the types.
\citeauthor{pmlr-v161-miura21a} used this framework to formalize various observer-aware problems including legibility, explicability and predictability.

Note:
As done previously, ``\textsc{obs}'' is used to denote quantities associated to the observer viewpoint (as perceived by the agent), such as probabilities, denoted $P_\obs$.
Also, we will sometimes write a function $f(X,Y)$ describing a conditional probability distribution under the form $f(Y|X)$ to exhibit the dependence between variables.

\section{OAMDPs with Partial Observability}
\label{sec|contribution}

This section introduces the PO-OAMDP formalism, shows how the observer's belief about the target variable is maintained, and looks at some typical use cases.

\subsection{Formalism}

We describe the key ingredients of the PO-OAMDP framework before providing a formal definition.
\begin{enumerate*}
\item Within the PO-OAMDP framework, the agent has access to the complete state of the system, while the observer now only has a partial perception.
A set of observations and an observation function are thus added to the OAMDP formalism.
\item In this context, the {\em type} is replaced by a {\em target variable} which can change over time, contrary to OAMDP's static types.
For the definition of the target variable to be as generic as possible, its value at each time step is a function of the transition followed by the system.
The target variable can thus be a part of the system state (\eg, a non-observable variable for the observer), but it can also be linked to the action performed by the agent (for predictability), or to the state transition rather than to the state itself.
This variable can also gather several different variables.
But considering a single variable is without loss of generality.
\item Additionally, we assume that the agent has access not only to the complete state of the system, but also to the observations received by the observer (this is realistic in particular if the observation process is deterministic: in that case the observations received by the observer are easy to predict).
The agent can thus build the mental state of the observer during the execution of its behavior.
By having access to all of the problem's information (the system state, the chosen action and the observations perceived by the observer), the agent can make decisions to control the observer's inference about the target variable.
\end{enumerate*}

Formally, a PO-OAMDP is defined by a tuple $\langle \cS$, $s_0$, $\cA$, $T$, $\gamma$, $\cS_f$, $\Target$, $\Omega$, $O$, $B$, $\Rag$, $\targetOf \rangle$, where:
\begin{itemize}
	\item $\langle  \cS, s_0, \cA, T, \gamma, \cS_f \rangle$ is an MDP with an initial state $s_0$ but {\em no reward function};
	\item $\Target$ denotes both the (dynamic) {\em target variable} and the finite set of values it can take;
	\item $\targetOf: \cS \times \cA \times \cS \to \Target$ is a function that gives the value of the target variable given the transition: $\target_t=\targetOf(s_t,a_t,s_{t+1}$);
	\item $\Omega$ is the finite set of observations;
	\item $O: \cA \times \cS \times \Omega \to \mathbb{R}$  is the observation function;
          $O(a,s',o)$ is the probability of emitting observation $o$ if state $s'$ is reached while performing $a$;
 	\item $B: \Omega^* \to \simplex{\cS}$ gives the observer's belief on the state given an {\em observation} history;
	the belief on the target variable can be deduced from that {\em state belief} (see \Cshref{miseAjourB}), denoted $b$;
\item $\Rag: \cS\times \simplex{\Target} \times \cA \times \cS \times \simplex{\Target} \to \mathbb{R}$ is the agent's reward function under its most general form: $\Rag(s_t,\beta_t,a_t,s_{t+1},\beta_{t+1})$, where $\beta$ denotes a {\em target belief}.
\end{itemize}

Here, we assume that, through her observations, the observer knows at each time step whether a terminal state has been reached or not, without necessarily indicating which terminal state is concerned.

Unlike the OAMDP model, which needs an MDP for each possible type, the PO-OAMDP model is based on a single MDP.
However, within our framework with partial observability, using only one MDP is not restrictive, and,
as discussed in \Cshref{sec|BandRVariousProperties}, any OAMDP can provably be turned into an equivalent PO-OAMDP.

The next subsection describes how the observer's belief (on the state) can be updated and
how the belief on the target variable is deduced, which is used to evaluate the agent's reward attached to a transition.
Then, it illustrates the use of the PO-OAMDP formalism to model different scenarios.

\subsection{State- and Target-Belief Computation}
\label{miseAjourB}

\paragraph{BST belief state update}
Following \mbox{\citeauthor{pmlr-v161-miura21a}}, we employ the BST Bayesian belief update rule \citep{BakSaxTen-cog09},
thus introduce a reward function $\Robs: \cS\times\cA\times\cS \to \mathbb{R}$ assumed to be the agent's reward function according to the observer.
Then, the observer models the agent's behavior for a given task through an MDP by:
\begin{enumerate*}
\item solving the MDP with reward $\Robs$; and
\item deriving a softmax policy $\piobs$.
\end{enumerate*}

Note that, given the dynamics (transition + observation) of the PO-OAMDP and the presumed policy $\piobs$ of the agent, the observer faces a hidden Markov model (HMM) \citep{Rabiner89}:
she solves a {\em filtering} problem, using the observation's history $o_{1:t}$ to estimate her belief on the state $s_t$.
The observer belief can thus be computed with:
\begin{align*}
B(s_{t+1} | o_{1:t+1})
& = P(s_{t+1}|o_{1:t+1}) = \frac{P(s_{t+1},o_{1:t+1})} {\sum_{s_{t+1}} P(s_{t+1},o_{1:t+1})} \\
& = \frac{K(s_{t+1},o_{1:t+1}) \cancel{P(o_{1:t})}} {\sum_{s_{t+1}} K(s_{t+1},o_{1:t+1}) \cancel{P(o_{1:t})} },
\text{ where} \\
K(s_{t+1},o_{1:t+1})
& \eqdef \sum_{a_t}O(o_{t+1}|a_t,s_{t+1})\sum_{s_t}T(s_{t+1}|s_t,a_t) \cdot \\
& \qquad \piobs(a_t|s_t) \cdot B(s_t|o_{1:t}). \end{align*}

\paragraph{Belief on the target variable}
To evaluate the reward received during a transition, we need to evaluate the belief $\beta$ on the value that will be taken by the target value: $\Target_t=\targetOf(S_t,A_t,S_{t+1})$.
This can be done by starting with the observer's belief $b$ on the current state, $S_t$:
\begin{align}
& \beta(\target)
= \sum_{s,a,s'} \mathbb{1}_{\target=\targetOf(s,a,s')} \cdot P_{\obs}(s,a,s'|b) \nonumber \\
& = \sum_{s,a,s'} \mathbb{1}_{\target=\targetOf(s,a,s')} \cdot P_{\obs}(s'|s,a) \cdot P_{\obs}(a|s) \cdot P_{\obs}(s|b) \nonumber \\
& = \sum_{s,a,s'} \mathbb{1}_{\target=\targetOf(s,a,s')} \cdot T(s,a,s') \cdot \piobs(a|s) \cdot b(s), \label{eq|betaDeB}
\end{align}
where $\mathbb{1}_*$ is the indicator function.
\subsection{Relationship with POMDPs}
Despite similarities between POMDP and PO-OAMDP there are some key differences:
\begin{enumerate}
	\item in PO-OAMDPs, the reward function is typically not linear in belief space, 
	\item in PO-OAMDPs, the agent reasons about the observer's belief rather than its own belief, so that POMDPs are not a subclass of PO-OAMDPs. Also, the Bellman optimality operator for PO-OAMDPs does not preserve piecewise linearity and convexity as in POMDPs. The optimal value function may even exhibit local discontinuities (a property inherited from OAMDPs).
\end{enumerate}
\subsection{Implementation on Various Scenarios}
The PO-OAMDP model allows us to generate different behaviors by changing $\Target$ and $R$, and to work with different types of problems.
An important property, formally demonstrated in \Cref{app|generalization}, shows that PO-OAMDPs are at least as expressive as OAMDPs. 

\begin{restatable}[]{proposition}{propOAMDPequivPOOAMDP}
  \labelT{prop|OAMDPequivPOOAMDP}
  Any OAMDP $\mathcal M$ with BST belief update can be turned into an equivalent PO-OAMDP $\mathcal{M}'$, \ie, such that an optimal solution to one problem is optimal for the other problem.
\end{restatable}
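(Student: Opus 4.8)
\emph{Construction.} The plan is to absorb the static type of $\mathcal M = \langle \cS, s_0, \cA, T, \gamma, \cS_f, \Type, B, \Rag \rangle$ into a hidden, frozen coordinate of the PO-OAMDP's state. Set $\cS' \eqdef \cS \times \Type$, $s_0' \eqdef (s_0,\theta_0)$ for an arbitrary reference type $\theta_0$ (the true one, when the problem has one), $\cS_f' \eqdef \cS_f \times \Type$, and a transition that evolves the $\cS$-coordinate as in $\mathcal M$ while never touching the type, $T'((s,\theta),a,(s',\theta')) \eqdef \mathbb{1}_{\theta=\theta'}\, T(s,a,s')$. Let the target variable be the type itself, $\Target \eqdef \Type$ with $\targetOf((s,\theta),a,(s',\theta)) \eqdef \theta$. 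Let the observer see the $\cS$-coordinate of the reached state and the last action, but never the type: $\Omega \eqdef \cS \times \cA$ and $O(a,(s',\theta'),(\hat s,\hat a)) \eqdef \mathbb{1}_{\hat s = s'}\,\mathbb{1}_{\hat a = a}$. Give the observer the slice-wise presumed reward $\Robs'((s,\theta),a,(s',\theta)) \eqdef \Robs^\theta(s,a,s')$, so that the single MDP of $\mathcal M'$ splits into $|\Type|$ independent slices, the slice of $\theta$ being (a copy of) the MDP the OAMDP's observer solves for type $\theta$; its softmax policy then satisfies $\piobs'(a\mid(s,\theta)) = \piobs^\theta(a\mid s)$. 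Finally take the initial observer belief $B'$ at the empty history to be $\delta_{s_0} \otimes B(\epsilon)$ (the type prior), and let $\Rag'$ copy $\Rag$ on the matching arguments.

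\emph{Belief correspondence.} The technical core is an induction on $t$ showing that, under the obvious bijection between state--action histories $h_t = (s_0,a_0,\dots,s_t)$ of $\mathcal M$ and observation histories $o_{1:t}$ of $\mathcal M'$, the filtered state belief of $\mathcal M'$ factorizes as $B'((s,\theta)\mid o_{1:t}) = \mathbb{1}_{s = s_t}\cdot B(\theta\mid h_t)$. The base case is the choice of $B'$ at the empty history; for the step one substitutes the block-diagonal $T'$, the deterministic $O$, and the induction hypothesis into the HMM filtering recursion of \Cshref{miseAjourB}: the double sum collapses to the single term $(s_t,a_t,s_{t+1})$, the type coordinate is carried through unchanged, the $\theta$-independent factor $T(s_t,a_t,s_{t+1})$ cancels against the normalizer, and $\piobs'(a_t\mid(s_t,\theta)) = \piobs^\theta(a_t\mid s_t)$ survives, which is exactly the OAMDP's BST posterior update $B(\theta\mid h_{t+1}) \propto B(\theta\mid h_t)\,\piobs^\theta(a_t\mid s_t)$. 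Plugging this into \cref{eq|betaDeB} with $\targetOf((s,\theta),a,(s',\theta)) = \theta$ and summing out $s$ and $s'$ (each row of $T'$ sums to one) shows that the target belief of $\mathcal M'$ equals the type belief $B(h_t)$ of $\mathcal M$ at every reachable history.

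\emph{Equivalence of solutions.} The reachable decision states of the OAMDP --- pairs (current state, observer type-belief) --- correspond bijectively to those of $\mathcal M'$ via $(s,b)\leftrightarrow\big((s,\theta_0),\,\delta_s\otimes b\big)$, the frozen type coordinate and the degenerate $\cS$-marginal adding no information (and the agent, having access to the true state and the observer's observations, has access to exactly this pair in $\mathcal M'$). Because the type coordinate is static and $T'$ acts on the $\cS$-coordinate exactly as $T$, corresponding policies induce identical trajectory laws; $\gamma$ is unchanged; and by the belief correspondence $\Rag'$ along a trajectory of $\mathcal M'$ equals $\Rag$ along its image, so $V^{\pi'}_{\mathcal M'} = V^{\pi}_{\mathcal M}$ for corresponding policies and the two problems share their optimal solutions. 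The only care needed here is fixing whether $\Rag$ (and $\Rag'$) reads the pre- or post-transition belief, and translating accordingly.

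\emph{Main obstacle.} The delicate step is the third ingredient: forcing the \emph{single} presumed-behavior MDP of $\mathcal M'$ to reproduce \emph{all} the per-type softmax policies $\piobs^\theta$ simultaneously. This is immediate in the standard BST regime where the observer's per-type models differ only through $\Robs^\theta$, sharing $T$ and $\cS_f$ --- the setting described in \Cshref{sec|background} --- in which case the construction above works verbatim. If the OAMDP additionally allows type-dependent $T^\theta$ and $\cS_f^\theta$, one must instead make $T'$ block-diagonal with block $T^\theta$ on the slice of $\theta$ (which also changes which transition factor is cancelled in the belief induction, so it has to be matched to whether the OAMDP's BST likelihood includes the transition term, and one then needs $T^{\theta_0}=T$, $\cS_f^{\theta_0}=\cS_f$ on the reference slice), or directly back-solve for an $\Robs'$ whose slice-wise $Q$-values yield the prescribed $\piobs^\theta$. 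The remaining work is routine substitution into the recursion of \Cshref{miseAjourB} and into \cref{eq|betaDeB}.
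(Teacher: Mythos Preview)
Your construction follows the same blueprint as the paper's---freeze the type into a hidden coordinate of the state, let the observer see the $\cS$-coordinate, and set $\Robs'$ slice-wise so that the single softmax policy $\piobs'$ decomposes into the per-type $\piobs^\theta$---but two differences are worth flagging.

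First, you take $\Omega = \cS \times \cA$, whereas the paper takes $\Omega = \cS$. Your choice makes the belief induction cleaner: the OAMDP's BST update conditions on the observed action $a_t$, and revealing $a_t$ in the PO-OAMDP observation collapses the filter's $\sum_{a_t}$ in \Cshref{miseAjourB} to a single term, after which the $\theta$-independent transition factor cancels and you land exactly on $B(\theta\mid h_{t+1}) \propto \piobs^\theta(a_t\mid s_t)\,B(\theta\mid h_t)$. With $\Omega=\cS$ alone the filter marginalizes over actions, and matching the action-conditional OAMDP update then needs an extra argument.

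Second, for the general case with type-dependent $T^\theta$ and $\cS_f^\theta$ (which the paper's OAMDP definition explicitly allows), you correctly identify in your ``Main obstacle'' paragraph that the reference slice must carry the agent's actual dynamics, i.e.\ $T^{\theta_0}=T$ and $\cS_f^{\theta_0}=\cS_f$, but you stop just short of the paper's resolution. The paper \emph{adds} a fresh ``fake type'' $\tilde\theta$ to $\Theta$, equipped with $T^{\tilde\theta}\eqdef T$, $\cS_f^{\tilde\theta}\eqdef\cS_f$ (and an arbitrary reward, say constant $-1$), starts the agent at $(s_0,\tilde\theta)$, and gives $\tilde\theta$ zero mass in the observer's initial belief so that $\beta(\tilde\theta)\equiv 0$ throughout and $\Rag'(s,\beta,a,s',\beta')\eqdef\Rag(s,a,\beta_{-\tilde\theta})$ is well defined. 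That single move is what your sketch is missing; once made, your block-diagonal $T'$ variant goes through without further obstruction, and the rest of your argument (the factorized-belief induction and the policy-value correspondence) is sound.
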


A starting point of the proof is to turn the static {\em type} of an OAMDP into a (hidden) {\em target} state variable.
The following shows how to formulate legibility, explicability, and (state/action) predictability
while assuming (for the sake of clarity) that the transition and observation functions do not depend on the type.

\label{sec|BandRVariousProperties}

\paragraph{Legibility}
Assuming several possible objectives for the agent, legibility aims at reducing the observer's uncertainty about the agent's actual objective.

The target variable is thus part of the state, indicating the objective among a finite set of possible objectives,
and the {\em observer} reward function $\Robs$ depends on the target.
For the {\em agent} reward function,
\citeauthor{pmlr-v161-miura21a}
use the opposite of the Euclidean distance to the ``ideal belief''.
The ideal belief being defined by:
$\beta^*(s)= (0,\dots,0,1,0,\dots,0)$ (with a $1$ for component $\target=\targetOf(s)$), we thus have, for $\Rag$:
\begin{align*}
\Rleg(s,\beta,a,s',\beta')
& \eqdef -\norm{\beta-\beta^*(s)}_2.
\end{align*}

\paragraph{Explicability}
Assuming one or several possible objectives, an explicable behavior is a behavior coherent with the observer's expectations.

To express this idea, \citeauthor{pmlr-v161-miura21a} (following \citet{sreedharan2020bayesian}) propose minimizing the probability that the observed behavior corresponds to a random behavior, even if multiple behaviors are still likely.
As they do, we thus introduce a ``virtual'' target value $\target_0$ (in addition to the ones used for legibility) that corresponds to a random behavior (policy) in addition to the other (real) target values.
Then, the explicability criterion described above is obtained using
\begin{align*}
\Rexp(s,\beta,a,s',\beta')
& \eqdef - \beta(\target_0).
\end{align*}

\paragraph{Predictability}
A predictable behavior is typically a behavior whose end of trajectory is easy for the observer to guess.
\citeauthor{pmlr-v161-miura21a}'s discussion on predictability, which relies on work for deterministic settings and thus reasons on complete trajectories, does not provide a very convenient way of formalizing predictability under stochastic dynamics.
We rely instead on Lepers et al.'s work \citep{LepLemThoBuf-arxiv24}, as they propose a more satisfying approach relying on step-by-step predictions.

The starting point is that the observer tries, at each time step, to predict either the next action, or the next state, hence two different types of predictability.
For action predictability, we set $\Target=A$ and $\targetOf(s,a,s')=a$.
For state predictability, we set $\Target=S$ and $\targetOf(s,a,s')=s'$.
In both cases, to act optimally, the observer has to bet on the most probable next target values, and thus pick a value in the set of candidate values
\begin{align*}
\candidates_\Target(\beta_t)
& \eqdef \argmax_\target \beta_t(\target).
\end{align*}
Considering that the observer samples her prediction uniformly at random in the set $\candidates_\Target(\beta_t)$,
the probability that $\target$ is predicted is:
\begin{align*}
\pred(\target|\beta_t) & \eqdef
\frac{1}{|\candidates_\Target(\beta_t)|} \cdot \mathbb{1}_{\target \in \candidates_\Target(\beta_t)} .
\end{align*}

Then, defining
\begin{align*}
  \RApred(s,\beta,a,s',\beta')
  & \eqdef \pred(a|\beta)-1, \text{ or} \\
  \RSpred(s,\beta,a,s',\beta')
  & \eqdef \pred(s'|\beta)-1,
\end{align*}
the immediate reward is the opposite of the probability that, under the current transition, the bet of a rational observer will fail:
$\Rpred(s,\beta,a,s',\beta') = - P(\text{failed rational bet})$.

\medskip

Note: Other example scenarios formalized as PO-OAMDPs are presented in supplementary material, \Cref{app|moreExampleScenarios}.
They illustrate, among other things, the similarities with $\rho$-POMDPs \citep{AraBufThoCha-nips10}, a variant of the POMDP formalism where an agent's reward function depends on its own belief, which permits modeling active information-gathering problems.
Yet, the same differences between OAMDPs and $\rho$-POMDPs pointed out by \citeauthor{pmlr-v161-miura21a}
\citep{pmlr-v161-miura21a} still hold between PO-OAMDPs and $\rho$-POMDPs.

\section{Resolution}
\label{sec|resolution}

\subsection{Sequential Decision-Making Problem}

An OAMDP can be turned into an equivalent MDP using the state-action history $\langle s_{0:t}, a_{1:t} \rangle$ (\ie, all the raw information available to the agent at $t$) as information state, or the state-belief (over type) pair $\langle s,\beta \rangle$ when using the BST update \citep{pmlr-v161-miura21a,miuBufZil-uai24}.

Similarly, in a PO-OAMDP, the raw information available at $t$ is the state-action-observation history $\langle s_{0:t}, a_{1:t}, o_{1:t} \rangle$.
Yet, note that:
\begin{enumerate}[label=(\arabic*)]
\item \label{enum|Markov} the pair $\langle s_t, o_{1:t} \rangle$ induces a Markov process; and
\item the observer's beliefs ($b_t$, thus also $\beta_t$) depend on the observation history $o_{1:t} \equiv \langle o_1,\dots,o_t \rangle$;
\item \label{enum|reward} the reward is a function of the state and the target belief, thus of the observation history, not of the past states and actions.
\end{enumerate}
From \labelcref{enum|Markov,enum|reward}, the state and observation history pair $\langle s_t, o_{1:t} \rangle$ is a sufficient statistic for optimal decision-making.
What is more, when using the BST update, the state belief is Markovian (though not the target belief in general), so that $\langle s_t, b_t \rangle$ can be used instead.

Formally, we obtain an MDP $\langle \cI, i_0, \cA, T', R', \gamma, \cI_f \rangle$, where:
\begin{itemize}
\item $\cI\eqdef\cS\times B$ is the (infinite) set of states, with $i_0 = \langle s_0, b_0 \rangle$ the initial state;
\item $\cA$ is the PO-OAMDP's set of actions;
\item $T': \cI \times  \cA \times \cI \to [0;1]$ is the transition function defined by:
\begin{align*}
    T'(i'|i, a)
    & \eqdef Pr(s',b'|s, b, a) \\
    & = \sum_{o} \mathbb{1}_{b'=B(b,o)} O(o|s',a)P(s'|a,s);
  \end{align*}
\item $R': \cI \times \cA \times \cI \to \mathbb{R}$ is the reward function defined by:
  \begin{align*}
R' (s,b,a,s',b')
    & \eqdef \Rag(s,\beta(b),a,s',\beta(b')),
  \end{align*}
where $\beta(b)$ is the target belief that can be derived from $b$ as seen in \Cref{eq|betaDeB};
\item $\gamma \in [0,1]$ is the discount factor; and
\item $\cI_f \subset \cI$ is the set of elements $\langle s, b \rangle$ in $\cI$ such that $s\in \cS_f$.
\end{itemize}

We assume that $\Rag(s,\beta(b),a,s',\beta(b'))=0$ whenever $s\in \cS_f$.
As a consequence, when $i\equiv\langle s,b \rangle \in \cI_f$ is reached, since the state $s$ does not change anymore, and even if the state belief may still evolve, all future rewards will be null, so that we are in a ``terminal sub-set of states''.

In this setting, Bellman's optimality operator is thus written
\begin{align*}
  V^*(i) & =  \max_{a} \sum_{i'\in \text{nxt}(i,a)} T'(i,a,i') \cdot [ R'(i,a,i') + \gamma V^*(i')],
\end{align*}
where $\text{nxt}(i,a)$ is the (finite) set of possible next state-belief pairs when performing $a$ in $i$.

\subsection{SSPs}

Setting $\gamma=1$ raises the question whether the resulting problem is a valid SSP.
The following proposition answers positively while considering problems with a possibly infinite set of states reachable from initial state $\langle s_0, - \rangle$, where $-$ denotes the empty history.

\begin{proposition}
  \label{thm|POOASSP|valid}
  Assuming that $\Rag$ is bounded from above by $\Rag^{\max}<0$ (in non-terminal states), the PO-OASSP is a valid SSP.
\end{proposition}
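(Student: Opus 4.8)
The plan is to verify, for the PO-OASSP --- that is, the induced MDP $\langle \cI, i_0, \cA, T', R', \gamma, \cI_f\rangle$ with $\gamma = 1$ --- the two conditions characterising a valid SSP recalled in \Cref{sec|background}: that $\cI_f \neq \emptyset$; and that there is a policy reaching $\cI_f$ with probability $1$ from every state of $\cI$, while the value of every other policy diverges to $-\infty$ at each state from which that policy reaches $\cI_f$ with probability strictly below $1$. The first condition is immediate, since $\cS_f \neq \emptyset$ forces $\cI_f = \{\langle s,b\rangle \in \cI : s \in \cS_f\} \neq \emptyset$; moreover, by the standing assumption $\Rag = 0$ on $\cS_f$, the set $\cI_f$ is absorbing and cost-free (once entered, the state stays in $\cS_f$ and all later rewards vanish), hence behaves like a single terminal state even though it is not literally one. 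All the substance therefore lies in the second condition, and the one genuinely delicate feature is that $\cI$ is infinite, so finite-state SSP results do not apply verbatim.

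For the existence of a proper policy, I would first record the structural fact that the state component of $\cI$ evolves \emph{autonomously} of the belief component: summing $T'(s',b' \mid s,b,a)$ over $b'$, the indicator $\mathbb{1}_{b'=B(b,o)}$ collapses and $\sum_{o \in \Omega} O(o \mid s',a) = 1$ leaves exactly $T(s,a,s')$. Consequently, under any policy $\pi'$ on $\cI$ that ignores the belief, $\pi'(a \mid \langle s,b\rangle) \eqdef \bar\pi(a \mid s)$, the state trajectory $(s_t)_t$ has the same law as under $\bar\pi$ in the underlying (reward-free) MDP. Taking for $\bar\pi$ a policy reaching $\cS_f$ with probability $1$ from every state --- a property the underlying dynamics must satisfy for the problem to be an SSP --- the lifted $\pi'$ reaches $\cI_f$ with probability $1$ from every $i \in \cI$, because $\cI_f$ is entered precisely when $\cS_f$ is.

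For the divergence half, let $\pi$ be any policy and $i = \langle s,b\rangle \in \cI$ with $s \notin \cS_f$ from which $\pi$ reaches $\cI_f$ only with probability $p < 1$. On the complementary event, of probability $1 - p > 0$, the trajectory never enters $\cI_f$, \ie $s_t \notin \cS_f$ for all $t$; so every reward it incurs satisfies $R'(s_t,b_t,a_t,s_{t+1},b_{t+1}) = \Rag(s_t,\beta(b_t),a_t,s_{t+1},\beta(b_{t+1})) \leq \Rag^{\max} < 0$, whence the (undiscounted, as $\gamma = 1$) return of such a trajectory is $\sum_{t\geq 0} R'_t \leq \sum_{t\geq 0} \Rag^{\max} = -\infty$. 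Since, in addition, $\Rag \leq \Rag^{\max} < 0$ out of every non-terminal state and $\Rag = 0$ out of terminal ones, every trajectory has return $\leq 0$ --- so its positive part is integrable --- while the return is $-\infty$ on a set of positive probability; hence $V^\pi(i) = \mathbb{E}_\pi[\sum_{t\geq 0} R'_t \mid i_0 = i] = -\infty$, as required.

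The step I expect to be the real obstacle is precisely this treatment of the infinite belief-augmented state space: both halves are arranged to avoid finite-SSP machinery --- the first via the marginalisation identity $\sum_{b'} T'(\cdot \mid \cdot) = T(\cdot)$, so that reachability of terminal states transfers verbatim from the finite underlying MDP however many beliefs are reachable, the second via a purely trajectory-wise divergence argument using only the uniform strict bound $\Rag^{\max} < 0$. A secondary check is that $T'$ and $R'$ are well defined, the BST updates $B(b,o)$ being applied only to observations of positive probability along the filtering recursion; and, should one also want the optimal value to be finite, that the lifted proper policy has finite value --- which holds since a proper policy of the finite underlying MDP has finite expected hitting time, a property that carries over, provided $\Rag$ is also bounded below.
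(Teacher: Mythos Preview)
Your proof is correct and follows essentially the same approach as the paper: lift a proper policy of the underlying MDP to the belief-augmented process (since the $s$-component evolves autonomously), and for improper policies argue that the per-step reward is bounded above by $\Rag^{\max}<0$ on the event of non-absorption, forcing $V^\pi=-\infty$. You supply more detail than the paper does---the explicit marginalisation $\sum_{b'}T'(\cdot)=T(\cdot)$, the integrability of the positive part, and the remarks on the infinite state space---but the skeleton is the same.
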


\begin{proof}
  First, any reachable pair $\langle s, o_{1:t} \rangle$ with $s\in \cS_f$ is a terminal state of the PO-OASSP.

  Then, let $\hat\pi$ be a proper policy of the observer SSP.
  When in $\langle s, o_{1:t} \rangle$, one can apply $\hat\pi$ (thus ignoring observation histories), thus ensuring that a terminal state of the SSP is reached, which corresponds to a terminal state of the PO-OASSP.

  In the contrary, if, from $\langle s,o_{1:t} \rangle$, one applies a policy $\pi$ that reaches a terminal state only with probability $p<1$, then there is a probability $1-p$ to follow an infinite trajectory with a per-step cost $\Rag^{\max}<0$, so that the value at $\langle s, o_{1:t} \rangle$ diverges to $-\infty$.
\end{proof}

Note that ensuring that $\Rag$ only takes negative values is not sufficient to prove the above lemma, as not all infinite sums of negative values diverge.
For the $\Rag$ functions described in \Cshref{sec|contribution} for legibility, explicability and predictability, the least upper-bound is $0$, so that it is unclear whether all improper policies have diverging values.
In particular, \citeauthor{LepLemThoBuf-arxiv24}'s Proposition~2 in \citep{LepLemThoBuf-arxiv24}, which applies in our setting, states that state predictability can lead to an improper policy.
A simple trick to come back to a valid SSP is to linearly combine the invalid $\Rag$ with a valid $R: \cS\times \cA\times \cS \to \mathbb{R}^-$ using $\Rag' = \Rag + \lambda \cdot R$ for some small $\lambda >0$.

\subsection{Complexity}

\Cref{prop|OAMDPequivPOOAMDP} tells us that PO-OAMDPs cover a larger class of problems than OAMDPs.
Below we establish that PO-OAMDPs inherit the same main complexity results as OAMDPs,
results which require assuming {\em Bayesian updates} for the observer's belief, what we denote by PO-OAMDP$_{BU}$.
Such results are obtained considering the {\em value problem}, \ie, determining whether a policy exists that can achieve some pre-defined value.

\begin{theorem}
  \label{thm|PSpace}
  The ﬁnite-horizon value problem for PO-OAMDP$_{BU}$ is PSPACE as long as $R$ can be evaluated using polynomial space.
\end{theorem}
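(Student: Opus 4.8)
The plan is to adapt the depth‑first recursion that places the finite‑horizon POMDP and OAMDP value problems in PSPACE, running it on the induced MDP $\langle \cI, i_0, \cA, T', R', \gamma, \cI_f \rangle$ of \Cref{sec|resolution}, whose states are the information states $i=\langle s,b\rangle$. Because the action set is finite and only finitely many information states are reachable within the horizon $H$ (see below), backward induction yields an optimal history‑dependent policy attaining the finite‑horizon optimal value $V^*_H(i_0)$; hence the value problem — whether some policy achieves a prescribed threshold $v$ — reduces to computing $V^*_H(i_0)$ exactly and comparing it to $v$, and the whole point is to do this in polynomial space.

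First I would observe that the BST belief update $b_{t+1}=B(b_t,o_{t+1})$ depends on the observation history only (through the fixed quantities $\piobs$, $T$, $O$), so the set of information states reachable from $i_0$ in any number of steps is finite, and for every $\langle s,b\rangle$ and action $a$ the set $\text{nxt}(\langle s,b\rangle,a)$ has at most $|\cS|\cdot|\Omega|$ elements, one per pair $(s',o)$. Consequently Bellman's optimality operator of \Cref{sec|resolution} unrolls into a finite tree of depth $H$ with branching at most $|\cA|\cdot|\cS|\cdot|\Omega|$, and $V^*_H(i_0)$ is computed by the recursion $\text{Val}(\langle s,b\rangle,h)$ that returns $0$ when $h=0$ or $s\in\cS_f$ and otherwise
\[
\text{Val}(\langle s,b\rangle,h)=\max_a \sum_{s',o} T(s,a,s')\,O(o|a,s')\,\Big[\Rag(s,\beta(b),a,s',\beta(B(b,o)))+\gamma\,\text{Val}(\langle s',B(b,o)\rangle,h-1)\Big].
\]
Evaluating this top‑down, recomputing sub‑values rather than memoising them, is what keeps the memory footprint small.

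Then I would carry out the space accounting. The recursion stack holds at most $H$ frames; each frame stores the current information state $\langle s,b\rangle$ plus $O(1)$ loop counters (over $a$, $s'$, $o$) and running maxima and partial sums. The belief $b$ carried at depth $d\le H$ is obtained by at most $d$ BST updates of the polynomial‑size initial belief $b_0$ with the polynomial‑size parameters, so — assuming, as is standard, that $H$ is given in unary, and using the hypothesis that $R$ (hence, implicitly, the whole belief‑to‑reward pipeline: the softmax $\piobs$, the update $B$, and the map $b\mapsto\beta(b)$ of \Cref{eq|betaDeB}) is polynomial‑space computable — every quantity manipulated inside a frame fits in polynomial space. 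The total space used is therefore $H\cdot\text{poly}=\text{poly}$, the procedure is deterministic, and so the finite‑horizon value problem for PO-OAMDP$_{BU}$ lies in PSPACE, no appeal to Savitch's theorem being needed.

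The step I expect to be the main obstacle is the numerical‑precision bookkeeping hidden in the previous paragraph: one must argue that the beliefs generated along a root‑to‑leaf path do not blow up in bit‑length and that the irrational softmax values $\piobs(a|s)\propto e^{Q^{*}_{\obs}(s,a)/\tau}$ can be produced or approximated finely enough, within polynomial space, for the comparison between $V^*_H(i_0)$ and $v$ to be resolved correctly — which is precisely why the statement is made conditional on $R$ being polynomial‑space evaluable. The remaining ingredients — finiteness of $\text{nxt}$, the depth‑$H$ unrolling, and the $O(H)$‑frame stack — are routine and mirror the OAMDP case treated by \citet{pmlr-v161-miura21a}.
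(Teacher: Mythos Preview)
Your proposal is correct and takes essentially the same approach as the paper: both argue that the finite-horizon optimal value can be computed by a depth-first traversal of the depth-$H$ outcome tree (branching over actions and successor information states), keeping only a polynomial-space stack of frames and recomputing subvalues rather than storing them. Your write-up is considerably more detailed—making explicit the branching factor $|\cA|\cdot|\cS|\cdot|\Omega|$, the unary-horizon assumption, and the numerical-precision caveat about $\piobs$—whereas the paper's proof is a two-sentence sketch that simply cites the analogous OAMDP$_{BU}$ argument.
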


\begin{proof}

  As for OAMDP$_{BU}$s, a policy's possible outcomes can be expressed as a tree whose depth corresponds to the finite horizon, and
the policy's value can be computed in polynomial space through a tree traversal (provided $R$ can be evaluated in polynomial space as well).
PO-OAMDP$_{BU}$s are thus in PSPACE.
\end{proof}

\begin{theorem}
  \label{thm|PSpaceHard}
The ﬁnite-horizon value problem for PO-OAMDP$_{BU}$ is PSPACE-hard.
\end{theorem}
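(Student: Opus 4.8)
The plan is to prove PSPACE-hardness by a polynomial-time many-one reduction from the finite-horizon value problem for OAMDP$_{BU}$, which is already known to be PSPACE-hard from prior work on observer-aware planning \citep{pmlr-v161-miura21a,miuBufZil-uai24}. The engine of the reduction is \Cref{prop|OAMDPequivPOOAMDP}: its proof constructs, from any OAMDP $\mathcal M$ with BST update, an equivalent PO-OAMDP $\mathcal M'$. To upgrade that equivalence into a complexity reduction I would verify four points: (i) $\mathcal M \mapsto \mathcal M'$ is computable in time polynomial in $|\mathcal M|$; (ii) the construction keeps the Bayesian-update restriction, so $\mathcal M'$ is a PO-OAMDP$_{BU}$; (iii) it preserves the horizon; and (iv) the optimal horizon-$h$ values of $\mathcal M$ and $\mathcal M'$ coincide, so that ``there exists a policy of value $\ge v$'' transfers verbatim between the two value problems.

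For (i)--(iv) I would inspect the construction behind \Cref{prop|OAMDPequivPOOAMDP}: the static type $\theta$ is absorbed into the state as a hidden, never-changing component, yielding a state set of size $|\cS|\cdot|\Type|$ whose initial belief $b_0$ places $s_0$ with certainty and spreads the type prior over $\Type$; the transition and the per-type softmax policy $\piobs^\theta$ are duplicated across types; the observation function is chosen so that the PO-OAMDP observer receives exactly the information an OAMDP observer would feed to its BST update (so $|\Omega|$ stays polynomial in $|\cS|$ and $|\cA|$), whence the HMM filtering of \Cshref{miseAjourB} reproduces the OAMDP's type posterior; and $\Rag$ is re-expressed through the target belief $\beta$ recovered from $b$ via \Cref{eq|betaDeB}, which for a hidden-type target is just the marginal of $b$ onto $\Type$. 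Each of these objects has size polynomial in $|\mathcal M|$ and is produced by routine bookkeeping, giving (i); the observer update is a Bayesian filter by construction, giving (ii); no time indices are altered, giving (iii); and since the reachable pairs $\langle s_t, b_t\rangle$ of $\mathcal M'$ are in a value-preserving correspondence with the reachable histories of $\mathcal M$, the per-trajectory discounted rewards agree, giving (iv). Hence the OAMDP$_{BU}$ value problem reduces to the PO-OAMDP$_{BU}$ value problem, and PSPACE-hardness follows; combined with \Cref{thm|PSpace} this in fact makes the problem PSPACE-complete.

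The main obstacle is verification rather than a new idea: one must check that every component emitted by the proof of \Cref{prop|OAMDPequivPOOAMDP} really stays polynomially bounded --- in particular that $\Rag$ of $\mathcal M'$ remains polynomial-space evaluable, so that \Cref{thm|PSpace} applies to $\mathcal M'$ as well --- and that enlarging the state with the hidden type and adding the observation channel does not create reachable observer beliefs absent from the original problem, which would break equivalence. If a self-contained argument is preferred over routing through \Cref{prop|OAMDPequivPOOAMDP}, one can instead reduce directly from a PSPACE-complete problem such as TQBF (or the classical finite-horizon POMDP planning problem), encoding the quantifier alternation (resp. the hidden POMDP state) into the PO-OAMDP's hidden target variable and choosing an $\Rag$ depending on the observer's belief that scores exactly the accepting plays --- essentially the hardness construction known for OAMDPs, replayed with the more expressive PO-OAMDP dynamics.
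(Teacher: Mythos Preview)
Your proposal is correct and takes essentially the same approach as the paper: reduce from the PSPACE-hard OAMDP$_{BU}$ value problem via \Cref{prop|OAMDPequivPOOAMDP}, noting that the construction is a polynomial reduction. The paper's proof is just the two-sentence version of this, while you additionally spell out the verification points (i)--(iv) that the paper leaves implicit.
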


\begin{proof}
The proof of \Cshref{prop|OAMDPequivPOOAMDP} shows that any OAMDP$_{BU}$ can be turned into a PO-OAMDP$_{BU}$ through a polynomial reduction.
Then, as OAMDP$_{BU}$ is PSPACE-hard \citep{pmlr-v161-miura21a}, so is PO-OAMDP$_{BU}$.
\end{proof}

\begin{corollary}
  \label{thm|PSpaceComplete} The ﬁnite-horizon value problem for PO-OAMDP$_{BU}$ is PSPACE-complete when $R$ can be evaluated using polynomial space.
\end{corollary}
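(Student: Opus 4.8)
The plan is to obtain the result purely by combining the two preceding theorems, since PSPACE-completeness is by definition the conjunction of membership in PSPACE and PSPACE-hardness (with respect to polynomial-time, or logspace, many-one reductions). Both halves are already established: \Cref{thm|PSpace} shows that, under the hypothesis that the reward function $R$ can be evaluated in polynomial space, the finite-horizon value problem for PO-OAMDP$_{BU}$ lies in PSPACE; and \Cref{thm|PSpaceHard} shows that the same problem is PSPACE-hard. So the argument is a one-line assembly: assume $R$ is polynomial-space evaluable, invoke \Cref{thm|PSpace} for the upper bound and \Cref{thm|PSpaceHard} for the lower bound, and conclude.

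Concretely, first I would recall that the hypothesis ``$R$ can be evaluated using polynomial space'' is exactly the hypothesis under which \Cref{thm|PSpace} yields containment in PSPACE. Next I would note that \Cref{thm|PSpaceHard} gives hardness with no restriction on $R$: the reduction it relies on is the polynomial-time transformation of an arbitrary OAMDP$_{BU}$ into a PO-OAMDP$_{BU}$ coming from the proof of \Cref{prop|OAMDPequivPOOAMDP}, and that transformation produces instances whose reward functions are themselves polynomial-space (indeed, straightforwardly) evaluable, so the hardness lands inside the restricted class as well. Putting these together gives the claim.

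There is essentially no obstacle here — the content has all been done in \Cref{thm|PSpace} and \Cref{thm|PSpaceHard}. The only subtlety worth an explicit sentence is bookkeeping about where the ``$R$ polynomial-space evaluable'' assumption is actually used: it is needed only for the PSPACE-membership direction, while PSPACE-hardness holds unconditionally, so the assumption in the corollary's statement serves solely to pin down the upper bound. I would also make explicit that the reduction behind \Cref{thm|PSpaceHard} is polynomial time, hence compatible with the notion of reduction implicit in ``PSPACE-complete'', so that no gap is introduced when transferring hardness.
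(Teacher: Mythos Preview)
Your proposal is correct and takes essentially the same approach as the paper, which simply states that the corollary is a direct consequence of \Cref{thm|PSpace,thm|PSpaceHard}. You provide more bookkeeping than the paper does (in particular the observation that the hardness reduction lands inside the restricted class of polynomial-space-evaluable rewards), but the core argument is identical.
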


This is a direct consequence of \Cref{thm|PSpace,thm|PSpaceHard}.

\subsection{HSVI}

This section proposes solving discounted PO-OAMDPs ($\gamma<1$) using a variant of \citeauthor{SmiSim-uai04}'s {\em heuristic search value iteration} (HSVI) algorithm \citep{SmiSim-uai04,SmiSim-uai05,Smith-phd07}.
HSVI is generally used to solve POMDPs through equivalent belief MDPs, maintaining an upper and a lower bound of $V^*$, respectively denoted $\upb{V}$ and $\lob{V}$, and whose representations exploit $V^*$'s convexity in belief space.
As illustrated in \Cref{alg|HSVI|infinite} (where it is presented for MDPs, thus reasoning on states),
these bounds are updated (\cref{alg|HSVI|line|updateForward,alg|HSVI|line|updateBackward}) while simulating trajectories in a recursive manner (\cref{alg|HSVI|line|Recur}),
making decisions optimistically (\ie, acting greedily with respect to $\upb{V}$, \cf \cref{alg|HSVI|line|pickAction}) and
picking the next transition so as to provably reduce uncertainty about the value (\cref{alg|HSVI|line|pickState}).
It stops when $\upb{V}(b_0)-\lob{V}(b_0) \leq \epsilon$ for some positive $\epsilon$ (\cref{alg|HSVI|line|stoppingCriterion}).

\begin{algorithm}
    \caption{\texttt{HSVI} for (infinite-horizon) \texttt{MDP}s}
    \label{alg|HSVI|infinite}
    \SetKwInOut{Input}{Input}
    \SetKwInOut{Output}{Output}
    \SetKwFunction{Update}{Update}
    \SetKwFunction{Explore}{Explore}
    \SetKwFunction{Solve}{Solve}
    
    \Input{$s_0$ a state}
    
    \Fct{\Solve}{
      
      Initialize $ \overline{V} : S \to \mathbb{R}$ with optimistic value\;
      Initialize $ \underline{V} : S \to \mathbb{R}$ with pessimistic value\;
      
      \While{$\overline{V}(s_0) - \underline{V}(s_0) \geq \epsilon$ \nllabel{alg|HSVI|line|stoppingCriterion}}{
	\Explore($s_0,0$)\;
      }
      \Return{$\underline{V}$}
    }
    
    \Fct{$\Explore(s,t)$}{
      \If{$\overline{V}(s) - \underline{V}(s) \geq \epsilon\gamma^{-t}$}{ 
        
        \Update($s,t$) \nllabel{alg|HSVI|line|updateForward} \;
	$a^* \gets \argmax_{a} r(s,a) + \gamma \sum_{s'} T(s,a,s') \overline{V}(s')$ \nllabel{alg|HSVI|line|pickAction} \;
	$s^* \gets \argmax_{s'} T(s,a^*,s') \left[ \overline{V}(s') - \underline{V}(s') - \epsilon \cdot \gamma^{-(t+1)} \right] $ \nllabel{alg|HSVI|line|pickState} \;
	\Explore($s^*,t+1$) \nllabel{alg|HSVI|line|Recur} \;
        \Update($s,t$) \nllabel{alg|HSVI|line|updateBackward} \;
      }
      \Return
    }
    
    \Fct{\Update($s,t$)}{
      $\overline{V}(s) \gets \max_{a} r(s,a) + \gamma \sum_{s'} T(s,a,s') \overline{V}(s')$\;
      $\underline{V}(s) \gets \max_{a} r(s,a) + \gamma \sum_{s'} T(s,a,s') \underline{V}(s')$\;
    }
    
\end{algorithm}

Differences between POMDPs and PO-OAMDPs lead to several differences in HSVI.
\begin{enumerate*}
\item Our setting allows for observable terminal states, so that trajectories can be terminated when one is encountered, and $V^*(i)=0$ for $i\in\cS_f$.
\item $\lob{V}$ and $\upb{V}$ are expressed in information space $\cI \equiv \cS \times \simplex{\cS}$, not in $\simplex{\cS}$ alone.
\item PO-OAMDPs inherit local discontinuities in $\simplex{\cS}$ from OAMDPs \cite[Sec.~3.2]{miuBufZil-uai24}, so that we do not attempt to use generalizing representations (which typically rely on continuity properties), but only rely on pointwise representations.
\item Usual bound initializations do not apply, and others need to be introduced, as discussed next.
\end{enumerate*}

\paragraph{Initializing Bounds}
\label{sec|initBounds}

As in a standard discounted MDP, a first way to initialize the bounds is simply with, $\forall i$,
$\lob{V}(i) = {{R'}^{\min}}/{(1-\gamma)}$ and
$\upb{V}(i) = {{R'}^{\max}}/{(1-\gamma)}$,
where ${R'}^{\min} \eqdef \min_{i,a,i'} {R'}(i,a,i')$ and ${R'}^{\max} \eqdef \max_{i,a,i'} {R'}(i,a,i')$.
These {\em naive initializations} are very loose, thus far from informative.
Usual POMDP bounds rely on properties that do not hold in our setting (in particular the linearity of the reward function in belief space), and thus do not apply here.
In the following, we consider that a term that is not belief-dependent (noted $R_s$ and possibly equal to $\Robs$) can be isolated in the reward function, the other term being denoted $R_b$: ${R'}(s,b,a,s',b') = R_s(s,a,s') + R_b(s,b,a,s,b')$, and introduce so-called {\em combined initializations}.

Our decomposition ${R'}=R_s+R_b$ allows lower-bounding ${R'}(i,$ $a$, $i')$ with $R_s(s,a,s')+R_b^{\min}$, where $R_b^{\min} \eqdef \min_{i,a,i'} R_b(i,$ $a,$ $i')$,
so that $V^*(s,b)$ could be lower-bounded by $V^\pi_s(s)+{R_b^{\min}}/{(1-\gamma)}$, with $\pi$ some policy, for instance the solution $\pi_s^*$ of the MDP equipped with $R_s$.
But this lower bound can again be very loose when $\gamma$ is close to $1$.
To avoid the $\frac{1}{1-\gamma}$ term, we can work with some predefined policy $\pi$ and lower-bound its PO-OAMDP value as
\begin{align}
  V^\pi(s,b) & \geq V_s^\pi(s) + R_b^{\min} \cdot V_{costToGo}^\pi(s),
  \label{eq|lob}
\end{align}
where
$V_s^\pi$ evaluates $\pi$ with $R_s$, and
$V_{costToGo}^{\pi}$ evaluates $\pi$ with $R_{costToGo}(s,a,s') \eqdef \mathbb{1}_{s'\not\in \cS_f}$, \ie, $V_{costToGo}^{\pi}(s)$ is the average time before reaching a terminal state under $\pi$ if interpreting $1-\gamma$ as a termination probability at each time step.
Two possible choices for $\pi$, the best one depending on the situation at hand, are $\piobs$, so that $V_s^\pi=V_{\obs}$, and $\pi_s^*$.

To upper-bound $V^*$, a simple approach is to compute $V^*_s$, the optimal value function for the MDP with $R_s$, and then write:
\begin{align}
  V^*(s,b) & \leq V^*_s(s) + \frac{R_b^{\max}}{1-\gamma},
  \label{eq|upb}
\end{align}
where $R_b^{\max} \eqdef \max_{i,a,i'} r_b(i,a,i')$ ($=0$ for all the criteria we presented).

Using $\gamma=1$, thus for PO-OASSPs,
the lower bound (\Cshref{eq|lob}) requires the initializing policy $\pi$ to be proper, which is true when using $\piobs$, $\pi^*_s$, or a uniformly random policy,
and the upper bound (\Cshref{eq|upb}) requires replacing the belief-dependent term with $0$.

\section{Experiments}
\label{sec|XPs}

The conducted experiments will first allow looking at some resulting behaviors, thus demonstrating its possible benefits, and illustrating some encountered phenomena.
Then, they will show the influence of problem types and bound initializations on HSVI's runtime.
The source code is available at \url{https://gitlab.inria.fr/po-oamdp/po-oamdp_aamas25} .

\paragraph{Baseline Policies}
In any given problem instance, we first compute the softmax policy $\piobs$ obtained using value iteration and a softmax with a low temperature ($\tau=0.01$), so that sub-optimal actions are picked with low probability.

Then, this policy not only serves
\begin{enumerate*}
	\item to model the observer's belief update, but also
	\item as a baseline, and
        \item to compute HSVI's initial bounds as described in \Cshref{sec|initBounds}.
\end{enumerate*}

\paragraph{Algorithm Settings}

Our experiments focus on goal-oriented problems.
However we stick to using a (large) discount factor $\gamma=0.99$,
\begin{enumerate*}
\item to illustrate some pathological behaviors, and
\item to allow sticking to the standard HSVI algorithm.\footnote{Adapting \citeauthor{HorBosCha-ijcai18}'s Goal-HSVI \citep{HorBosCha-ijcai18} would allow solving such problems.}
\end{enumerate*}

Any instance of value iteration or policy evaluation \citep{Bertsekas-dpoc05} stops when the Bellman residual is below $\epsilon_{VI}=0.0001$.
Also, in all experiments, HSVI stops when the root gap $\epsilon_{HSVI}=0.001$ or with a 1\,h timeout.

\subsection{Benchmark Problems}

We now describe the underlying MDP, named {\em Maze}.

\subsubsection{Maze problems}
\label{sec|MazeProblems}

As illustrated in \Cshref{fig|legibility|trajectories,fig|legibility|trajectories|partialObs,fig|explicability|trajectories,fig|predictability|trajectories}, a maze is defined by a 4-connected grid world that contains walls (in dark grey), normal cells (in white), hidden cells (in cyan), and goal cells: the current {\em actual} goal (green disk) and {\em alternate} goals (pink diamonds).

More formally, in this SSP:
\begin{enumerate}
	\item each state $s$ in $\cS$ indicates
	\begin{enumerate*}
		\item the $(x, y)$ coordinates of the agent, which can be
		in a normal, hidden, or goal cell, and
		\item which goal cell is the actual goal $(x_g,y_g)$;
	\end{enumerate*}
	\item the only terminal states ($\cS_f$) are states $s$ such that the agent is in the actual goal ($(x,y)=(x_g,y_g)$);
	\item $\cA=\{$up, down, left, right$\}$;
	\item $T(s,a,s')$ encodes the agent's moves:
	an agent in a normal, hidden or alternate-goal cell moves in the direction indicated by its chosen action if no wall prevents it;
	an agent in an actual-goal cell, being in a terminal state, does not move;
	\item  $\Robs$, the observer reward function, returns a default penalty of  $-0.01$ for each move, $-1$ when the agent hits a wall, and $0$ when in a terminal state.
\end{enumerate}
To this SSP we add:
\begin{enumerate}[resume]
	\item a set of observations $\Omega = \{(x,y) | (x,y)$ is a visible or actual-goal cell$\} \cup \{ \text{\texttt{none}}\}$; and
\item an observation function $O$ which returns the agent's location (with probability $p_\obs$, set by default to $1$) when it is in a visible cell  or in a goal cell, and the ``\texttt{none}'' observation otherwise; and
        \item
          the belief $b_0$ uniform over states $s={(x_0, y_0, x_{g_i},y_{g_i})}$, $(x_0,y_0)$ being the initial cell (known by the observer, but hidden) and $i$ indexing possible goals.
\end{enumerate}
Note that, as required,
\begin{enumerate*}
\item the observer knows when a terminal state is reached, \ie, when the agent has reached the actual goal; and
\item there are several possible goals in legibility and explicability scenarios,
but a single one for predictability.
\end{enumerate*}

The observer policy $\piobs$ just quickly reaches the actual goal, and can thus follow any of possibly many shortest paths (with deviations due to the softmax), ignoring the observer's viewpoint.

\subsubsection{Grids used}

\paragraph{Legibility and Explicability}
The maze (\Cshref{fig|legibility|trajectories}) consists of an open space with 3 possible goal states and a single row (\#5) of visible cells which the agent may want to exploit.
We will consider the actual goal being either the left one or the middle one.

\paragraph{Legibility with Stochastic Observations}
We examine stochastic observations ($p_\obs=0.5$) only in a legibility problem (\Cshref{fig|legibility|trajectories|partialObs}) with two paths for the left goal, the longest one (right) having more visible cells.

\paragraph{Predictability}

For action- and state-predictability, the maze (\Cshref{fig|predictability|trajectories}) consists of mostly hidden corridors, only three cells being visible for the observer: $(D,10)$, $(D,2)$ and $(B,2)$, the actual-goal cell.

\subsection{Solution Evaluation}

\Cref{tab|valuesPolicy} provides
\begin{enumerate*}
\item the reward functions in use in each setting, the choice of a combination with $\Robs$ being discussed below, and
\item the value at $i_0$ in each case for the PO-OAMDP policy vs the baseline $\piobs$ vs the underlying MDP optimal policy (estimated through 1\,000 simulated trajectories).
\end{enumerate*}
The main observation is that the PO-OAMDP agent consistently and significantly outperforms the baseline.

\begin{table*}[ht!]
	\caption{$V^\pi(i_0)$ for various problems }
	\label{tab|valuesPolicy}
	\centering

	\adjustbox{max width=1.\linewidth}
	{

\sisetup{
  round-mode = places,
  round-precision = 2,
  table-format=2.2,
}%
\begin{tabular}{ c c S S S S S S}
  \toprule
  && \multicolumn{3}{c}{Legibility} & {Explicability} & {Action pred.} & {State pred.} \\
 criterion& Policies
  & \multicolumn{3}{c}{$\Robs+\Rleg$} & {$\Rexp$} & {$\Robs+\RApred$} & {$\RSpred$} \\
  && {\scriptsize left goal} & {\scriptsize middle goal} & {\scriptsize $p_\obs=0.5$} & \\
  \cmidrule(lr){1-2} \cmidrule(lr){3-5} \cmidrule(lr){6-6} \cmidrule(lr){7-7} \cmidrule(lr){8-8}
 \multirow{3}{*}{$V_\text{PO-OAMDP}$} &PO-OAMDP  & -3.586 &  -3.098 & -3.017& -1.052 & -1.607 & -2.455  \\  
 & MDP ($\piobs$)  & -6.759659049006518 & -7.588678920801781 &-4.317103341563281 & -3.093 & -6.3032299984279065 & -10.991001072113848 \\
   &  MDP  & -4.118667541924231 &  -3.556822164869111  & -3.2578291177308922 & -1.507 & -3.4077520293885484 & -8.39787827021881 \\ 
  \cmidrule(lr){1-2} \cmidrule(lr){3-5} \cmidrule(lr){6-6} \cmidrule(lr){7-7} \cmidrule(lr){8-8}
 \multirow{3}{*}{$V_\obs$} &  PO-OAMDP  & -0.14854222890512436 &  -0.18209306240276898 & -0.09723767328753627& -0.1312541872310219 & -0.1312541872310219& -0.16548623854991223 \\ 
 &  MDP ($\piobs$)   & -0.19521241072754467 &  -0.19835530765305032 & -0.10374775194854641& -0.1965231537635659& -0.16444602061778624 & -0.16681150671459172 \\ 
  &  MDP  & -0.131 &  -0.114  & -0.077 & -0.131 & -0.131 & -0.131 \\ 
  \bottomrule
\end{tabular}

	}
\end{table*}

In the following, we mainly look at example trajectories obtained using a PO-OAMDP agent, along with the evolution of the belief about the target variable.
Corresponding observer MDP policies $\piobs$ are provided in \Cref{app|XP}.
White stripes appearing on belief evolutions (\Cshref{fig|legibility|trajectories,fig|explicability|trajectories,fig|legibility|trajectories|partialObs,fig|predictability|trajectories}) correspond to time steps where the agent has been observed.

\paragraph{Legibility}
In the grid \Cref{fig|legibility|trajectories|left}, if row 5 is crossed on the left, the observer strongly believes  in the left goal, and rewards become small, which makes it harder for HSVI to converge to a proper policy.
We thus combined $\Robs$ with the legibility reward.

For the left actual goal \Cshref{fig|legibility|trajectories|left}, the agent does not go directly up to visible cell $(D,5)$, what would slightly increase the probability of the middle goal.
It goes to the left-most visible cell $(B,5)$, and goes back to it multiple times to increase the belief in the left goal before actually reaching that actual goal.
Disappearing from $(B,5)$ (rather than appearing in $(C,5)$) also increases the belief in the actual goal.
As illustrated in \Cref{app|XP|legibility}, \Cshref{RLegibility}, there are no such ``oscillations'' when the remaining path to the goal is short.

For the middle actual goal \Cshref{fig|legibility|trajectories|middle}, the agent has a similar behavior, but going up to visible cell $(D,5)$ instead of $(B,5)$.

As can be observed in \Cshref{tab|valuesPolicy}, those PO-OAMDP policies have significantly better values than the default MDP policies, which do not attempt to increase the observer's certainty before traversing to the goal.

\def\mycolwidth{0.45\columnwidth}
\def\mazewidth{0.40\columnwidth}
\def\beliefgraphwidth{0.45\columnwidth}

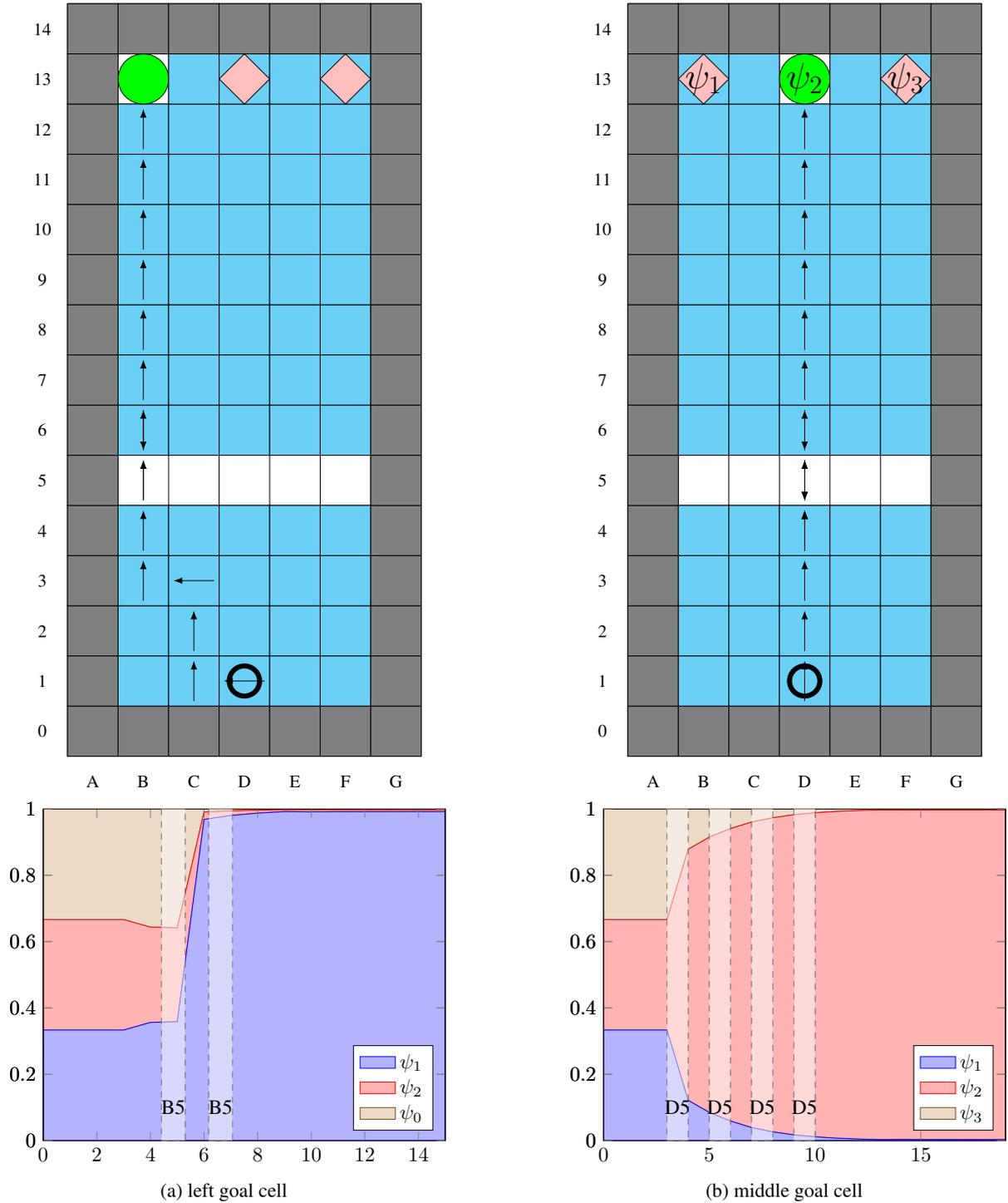
\begin{figure}
\subcaptionbox{
    left goal cell
        \label{fig|legibility|trajectories|left}
  }[\mycolwidth]{
    \adjustbox{width=\mazewidth}{
      \begin{tikzpicture}
\draw (0,0) grid (7,15);
\draw[fill=gray] (0,0) rectangle (1,1);
\draw[fill=gray] (1,0) rectangle (2,1);
\draw[fill=gray] (2,0) rectangle (3,1);
\draw[fill=gray] (3,0) rectangle (4,1);
\draw[fill=gray] (4,0) rectangle (5,1);
\draw[fill=gray] (5,0) rectangle (6,1);
\draw[fill=gray] (6,0) rectangle (7,1);
\draw[fill=gray] (0,1) rectangle (1,2);
\draw[fill=cyan!50] (1,1) rectangle (2,2);
\draw[fill=cyan!50] (2,1) rectangle (3,2);
\draw[fill=cyan!50] (3,1) rectangle (4,2);
\draw[fill=cyan!50] (4,1) rectangle (5,2);
\draw[fill=cyan!50] (5,1) rectangle (6,2);
\draw[fill=gray] (6,1) rectangle (7,2);
\draw[fill=gray] (0,2) rectangle (1,3);
\draw[fill=cyan!50] (1,2) rectangle (2,3);
\draw[fill=cyan!50] (2,2) rectangle (3,3);
\draw[fill=cyan!50] (3,2) rectangle (4,3);
\draw[fill=cyan!50] (4,2) rectangle (5,3);
\draw[fill=cyan!50] (5,2) rectangle (6,3);
\draw[fill=gray] (6,2) rectangle (7,3);
\draw[fill=gray] (0,3) rectangle (1,4);
\draw[fill=cyan!50] (1,3) rectangle (2,4);
\draw[fill=cyan!50] (2,3) rectangle (3,4);
\draw[fill=cyan!50] (3,3) rectangle (4,4);
\draw[fill=cyan!50] (4,3) rectangle (5,4);
\draw[fill=cyan!50] (5,3) rectangle (6,4);
\draw[fill=gray] (6,3) rectangle (7,4);
\draw[fill=gray] (0,4) rectangle (1,5);
\draw[fill=cyan!50] (1,4) rectangle (2,5);
\draw[fill=cyan!50] (2,4) rectangle (3,5);
\draw[fill=cyan!50] (3,4) rectangle (4,5);
\draw[fill=cyan!50] (4,4) rectangle (5,5);
\draw[fill=cyan!50] (5,4) rectangle (6,5);
\draw[fill=gray] (6,4) rectangle (7,5);
\draw[fill=gray] (0,5) rectangle (1,6);
\draw[fill=gray] (6,5) rectangle (7,6);
\draw[fill=gray] (0,6) rectangle (1,7);
\draw[fill=cyan!50] (1,6) rectangle (2,7);
\draw[fill=cyan!50] (2,6) rectangle (3,7);
\draw[fill=cyan!50] (3,6) rectangle (4,7);
\draw[fill=cyan!50] (4,6) rectangle (5,7);
\draw[fill=cyan!50] (5,6) rectangle (6,7);
\draw[fill=gray] (6,6) rectangle (7,7);
\draw[fill=gray] (0,7) rectangle (1,8);
\draw[fill=cyan!50] (1,7) rectangle (2,8);
\draw[fill=cyan!50] (2,7) rectangle (3,8);
\draw[fill=cyan!50] (3,7) rectangle (4,8);
\draw[fill=cyan!50] (4,7) rectangle (5,8);
\draw[fill=cyan!50] (5,7) rectangle (6,8);
\draw[fill=gray] (6,7) rectangle (7,8);
\draw[fill=gray] (0,8) rectangle (1,9);
\draw[fill=cyan!50] (1,8) rectangle (2,9);
\draw[fill=cyan!50] (2,8) rectangle (3,9);
\draw[fill=cyan!50] (3,8) rectangle (4,9);
\draw[fill=cyan!50] (4,8) rectangle (5,9);
\draw[fill=cyan!50] (5,8) rectangle (6,9);
\draw[fill=gray] (6,8) rectangle (7,9);
\draw[fill=gray] (0,9) rectangle (1,10);
\draw[fill=cyan!50] (1,9) rectangle (2,10);
\draw[fill=cyan!50] (2,9) rectangle (3,10);
\draw[fill=cyan!50] (3,9) rectangle (4,10);
\draw[fill=cyan!50] (4,9) rectangle (5,10);
\draw[fill=cyan!50] (5,9) rectangle (6,10);
\draw[fill=gray] (6,9) rectangle (7,10);
\draw[fill=gray] (0,10) rectangle (1,11);
\draw[fill=cyan!50] (1,10) rectangle (2,11);
\draw[fill=cyan!50] (2,10) rectangle (3,11);
\draw[fill=cyan!50] (3,10) rectangle (4,11);
\draw[fill=cyan!50] (4,10) rectangle (5,11);
\draw[fill=cyan!50] (5,10) rectangle (6,11);
\draw[fill=gray] (6,10) rectangle (7,11);
\draw[fill=gray] (0,11) rectangle (1,12);
\draw[fill=cyan!50] (1,11) rectangle (2,12);
\draw[fill=cyan!50] (2,11) rectangle (3,12);
\draw[fill=cyan!50] (3,11) rectangle (4,12);
\draw[fill=cyan!50] (4,11) rectangle (5,12);
\draw[fill=cyan!50] (5,11) rectangle (6,12);
\draw[fill=gray] (6,11) rectangle (7,12);
\draw[fill=gray] (0,12) rectangle (1,13);
\draw[fill=cyan!50] (1,12) rectangle (2,13);
\draw[fill=cyan!50] (2,12) rectangle (3,13);
\draw[fill=cyan!50] (3,12) rectangle (4,13);
\draw[fill=cyan!50] (4,12) rectangle (5,13);
\draw[fill=cyan!50] (5,12) rectangle (6,13);
\draw[fill=gray] (6,12) rectangle (7,13);
\draw[fill=gray] (0,13) rectangle (1,14);
\draw[fill=green] (1.500000,13.500000) circle (0.500000);
\draw[fill=cyan!50] (2,13) rectangle (3,14);
\draw[fill=cyan!50] (3,13) rectangle (4,14);
\begin{scope}[shift={(3.500000 ,13)}] \draw[fill=pink,rotate=45] rectangle (0.707, 0.707);
\end{scope}
\draw[fill=cyan!50] (4,13) rectangle (5,14);
\draw[fill=cyan!50] (5,13) rectangle (6,14);
\begin{scope}[shift={(5.500000 ,13)}] \draw[fill=pink,rotate=45] rectangle (0.707, 0.707);
\end{scope}
\draw[fill=gray] (6,13) rectangle (7,14);
\draw[fill=gray] (0,14) rectangle (1,15);
\draw[fill=gray] (1,14) rectangle (2,15);
\draw[fill=gray] (2,14) rectangle (3,15);
\draw[fill=gray] (3,14) rectangle (4,15);
\draw[fill=gray] (4,14) rectangle (5,15);
\draw[fill=gray] (5,14) rectangle (6,15);
\draw[fill=gray] (6,14) rectangle (7,15);
\node [align=center] at(-0.500000,0.500000) {0};
\node [align=center] at(-0.500000,1.500000) {1};
\node [align=center] at(-0.500000,2.500000) {2};
\node [align=center] at(-0.500000,3.500000) {3};
\node [align=center] at(-0.500000,4.500000) {4};
\node [align=center] at(-0.500000,5.500000) {5};
\node [align=center] at(-0.500000,6.500000) {6};
\node [align=center] at(-0.500000,7.500000) {7};
\node [align=center] at(-0.500000,8.500000) {8};
\node [align=center] at(-0.500000,9.500000) {9};
\node [align=center] at(-0.500000,10.500000) {10};
\node [align=center] at(-0.500000,11.500000) {11};
\node [align=center] at(-0.500000,12.500000) {12};
\node [align=center] at(-0.500000,13.500000) {13};
\node [align=center] at(-0.500000,14.500000) {14};
\node [align=center] at(0.500000,-0.500000) {A};
\node [align=center] at(1.500000,-0.500000) {B};
\node [align=center] at(2.500000,-0.500000) {C};
\node [align=center] at(3.500000,-0.500000) {D};
\node [align=center] at(4.500000,-0.500000) {E};
\node [align=center] at(5.500000,-0.500000) {F};
\node [align=center] at(6.500000,-0.500000) {G};
\draw[line width=1mm] (3.500000,1.500000) circle (0.300000);
\draw[-Latex] (3.900000,1.500000)--(3.100000,1.500000);
\draw[-Latex] (2.500000,1.100000)--(2.500000,1.900000);
\draw[-Latex] (2.500000,2.100000)--(2.500000,2.900000);
\draw[-Latex] (2.900000,3.500000)--(2.100000,3.500000);
\draw[-Latex] (1.500000,3.100000)--(1.500000,3.900000);
\draw[-Latex] (1.500000,4.100000)--(1.500000,4.900000);
\draw[-Latex] (1.500000,5.100000)--(1.500000,5.900000);
\draw[-Latex] (1.500000,6.900000)--(1.500000,6.100000);
\draw[-Latex] (1.500000,5.100000)--(1.500000,5.900000);
\draw[-Latex] (1.500000,6.100000)--(1.500000,6.900000);
\draw[-Latex] (1.500000,7.100000)--(1.500000,7.900000);
\draw[-Latex] (1.500000,8.100000)--(1.500000,8.900000);
\draw[-Latex] (1.500000,9.100000)--(1.500000,9.900000);
\draw[-Latex] (1.500000,10.100000)--(1.500000,10.900000);
\draw[-Latex] (1.500000,11.100000)--(1.500000,11.900000);
\draw[-Latex] (1.500000,12.100000)--(1.500000,12.900000);
\end{tikzpicture}
    }

    \adjustbox{width=\beliefgraphwidth}{
      \begin{tikzpicture}
\begin{axis}[
ymin=0,
ymax=1,
stack plots=y,
area style,
enlarge x limits=false,
legend pos=south east
]
\addplot coordinates
{(0,0.333333) (1,0.333333) (2,0.333333) (3,0.333333) (4,0.356331) (5,0.358380) (6,0.968442) (7,0.980429) (8,0.987508) (9,0.992310) (10,0.991583) (11,0.991975) (12,0.992081) (13,0.992189) (14,0.992261) (15,0.992299) }
\closedcycle;
\addlegendentry{$\psi_1$}\addplot coordinates
{(0,0.333333) (1,0.333333) (2,0.333333) (3,0.333333) (4,0.287338) (5,0.283240) (6,0.022645) (7,0.014034) (8,0.008886) (9,0.005466) (10,0.005991) (11,0.005715) (12,0.005614) (13,0.005552) (14,0.005500) (15,0.005480) }
\closedcycle;
\addlegendentry{$\psi_2$}\addplot coordinates
{(0,0.333333) (1,0.333333) (2,0.333333) (3,0.333333) (4,0.356331) (5,0.358380) (6,0.008913) (7,0.005537) (8,0.003606) (9,0.002223) (10,0.002426) (11,0.002310) (12,0.002305) (13,0.002259) (14,0.002238) (15,0.002221) }
\closedcycle;
\addlegendentry{$\psi_0 $}\end{axis}
\begin{axis}[
	ymin=0,
	ymax=1,
	xmin=0,
	xmax=17,
	stack plots=y,
	area style,
	enlarge x limits=false,
	legend pos=south east,
	axis x line=none,
	]
	\textBelief{B5}{5}{17}
	\ObservationLine[dashed]{5}
	\textBelief{B5}{7}{17}
	\ObservationLine[dashed]{7}
\end{axis}
\end{tikzpicture}
    }
  }
  \hfill
  \subcaptionbox{
    middle goal cell
    \label{fig|legibility|trajectories|middle}
  }[\mycolwidth]{
    \adjustbox{width=\mazewidth}{
      \begin{tikzpicture}
\draw (0,0) grid (7,15);
\draw[fill=gray] (0,0) rectangle (1,1);
\draw[fill=gray] (1,0) rectangle (2,1);
\draw[fill=gray] (2,0) rectangle (3,1);
\draw[fill=gray] (3,0) rectangle (4,1);
\draw[fill=gray] (4,0) rectangle (5,1);
\draw[fill=gray] (5,0) rectangle (6,1);
\draw[fill=gray] (6,0) rectangle (7,1);
\draw[fill=gray] (0,1) rectangle (1,2);
\draw[fill=cyan!50] (1,1) rectangle (2,2);
\draw[fill=cyan!50] (2,1) rectangle (3,2);
\draw[fill=cyan!50] (3,1) rectangle (4,2);
\draw[fill=cyan!50] (4,1) rectangle (5,2);
\draw[fill=cyan!50] (5,1) rectangle (6,2);
\draw[fill=gray] (6,1) rectangle (7,2);
\draw[fill=gray] (0,2) rectangle (1,3);
\draw[fill=cyan!50] (1,2) rectangle (2,3);
\draw[fill=cyan!50] (2,2) rectangle (3,3);
\draw[fill=cyan!50] (3,2) rectangle (4,3);
\draw[fill=cyan!50] (4,2) rectangle (5,3);
\draw[fill=cyan!50] (5,2) rectangle (6,3);
\draw[fill=gray] (6,2) rectangle (7,3);
\draw[fill=gray] (0,3) rectangle (1,4);
\draw[fill=cyan!50] (1,3) rectangle (2,4);
\draw[fill=cyan!50] (2,3) rectangle (3,4);
\draw[fill=cyan!50] (3,3) rectangle (4,4);
\draw[fill=cyan!50] (4,3) rectangle (5,4);
\draw[fill=cyan!50] (5,3) rectangle (6,4);
\draw[fill=gray] (6,3) rectangle (7,4);
\draw[fill=gray] (0,4) rectangle (1,5);
\draw[fill=cyan!50] (1,4) rectangle (2,5);
\draw[fill=cyan!50] (2,4) rectangle (3,5);
\draw[fill=cyan!50] (3,4) rectangle (4,5);
\draw[fill=cyan!50] (4,4) rectangle (5,5);
\draw[fill=cyan!50] (5,4) rectangle (6,5);
\draw[fill=gray] (6,4) rectangle (7,5);
\draw[fill=gray] (0,5) rectangle (1,6);
\draw[fill=gray] (6,5) rectangle (7,6);
\draw[fill=gray] (0,6) rectangle (1,7);
\draw[fill=cyan!50] (1,6) rectangle (2,7);
\draw[fill=cyan!50] (2,6) rectangle (3,7);
\draw[fill=cyan!50] (3,6) rectangle (4,7);
\draw[fill=cyan!50] (4,6) rectangle (5,7);
\draw[fill=cyan!50] (5,6) rectangle (6,7);
\draw[fill=gray] (6,6) rectangle (7,7);
\draw[fill=gray] (0,7) rectangle (1,8);
\draw[fill=cyan!50] (1,7) rectangle (2,8);
\draw[fill=cyan!50] (2,7) rectangle (3,8);
\draw[fill=cyan!50] (3,7) rectangle (4,8);
\draw[fill=cyan!50] (4,7) rectangle (5,8);
\draw[fill=cyan!50] (5,7) rectangle (6,8);
\draw[fill=gray] (6,7) rectangle (7,8);
\draw[fill=gray] (0,8) rectangle (1,9);
\draw[fill=cyan!50] (1,8) rectangle (2,9);
\draw[fill=cyan!50] (2,8) rectangle (3,9);
\draw[fill=cyan!50] (3,8) rectangle (4,9);
\draw[fill=cyan!50] (4,8) rectangle (5,9);
\draw[fill=cyan!50] (5,8) rectangle (6,9);
\draw[fill=gray] (6,8) rectangle (7,9);
\draw[fill=gray] (0,9) rectangle (1,10);
\draw[fill=cyan!50] (1,9) rectangle (2,10);
\draw[fill=cyan!50] (2,9) rectangle (3,10);
\draw[fill=cyan!50] (3,9) rectangle (4,10);
\draw[fill=cyan!50] (4,9) rectangle (5,10);
\draw[fill=cyan!50] (5,9) rectangle (6,10);
\draw[fill=gray] (6,9) rectangle (7,10);
\draw[fill=gray] (0,10) rectangle (1,11);
\draw[fill=cyan!50] (1,10) rectangle (2,11);
\draw[fill=cyan!50] (2,10) rectangle (3,11);
\draw[fill=cyan!50] (3,10) rectangle (4,11);
\draw[fill=cyan!50] (4,10) rectangle (5,11);
\draw[fill=cyan!50] (5,10) rectangle (6,11);
\draw[fill=gray] (6,10) rectangle (7,11);
\draw[fill=gray] (0,11) rectangle (1,12);
\draw[fill=cyan!50] (1,11) rectangle (2,12);
\draw[fill=cyan!50] (2,11) rectangle (3,12);
\draw[fill=cyan!50] (3,11) rectangle (4,12);
\draw[fill=cyan!50] (4,11) rectangle (5,12);
\draw[fill=cyan!50] (5,11) rectangle (6,12);
\draw[fill=gray] (6,11) rectangle (7,12);
\draw[fill=gray] (0,12) rectangle (1,13);
\draw[fill=cyan!50] (1,12) rectangle (2,13);
\draw[fill=cyan!50] (2,12) rectangle (3,13);
\draw[fill=cyan!50] (3,12) rectangle (4,13);
\draw[fill=cyan!50] (4,12) rectangle (5,13);
\draw[fill=cyan!50] (5,12) rectangle (6,13);
\draw[fill=gray] (6,12) rectangle (7,13);
\draw[fill=gray] (0,13) rectangle (1,14);
\draw[fill=cyan!50] (5,13) rectangle (6,14);
\draw[fill=cyan!50] (1,13) rectangle (2,14);
\begin{scope}[shift={(1.500000 ,13)}] \draw[fill=pink,rotate=45] rectangle (0.707, 0.707);
\end{scope}
\draw[fill=cyan!50] (2,13) rectangle (3,14);
\draw[fill=green] (3.500000,13.500000) circle (0.500000);
\draw[fill=cyan!50] (4,13) rectangle (5,14);
\begin{scope}[shift={(5.500000 ,13)}] \draw[fill=pink,rotate=45] rectangle (0.707, 0.707);
\end{scope}
\draw[fill=gray] (6,13) rectangle (7,14);
\draw[fill=gray] (0,14) rectangle (1,15);
\draw[fill=gray] (1,14) rectangle (2,15);
\draw[fill=gray] (2,14) rectangle (3,15);
\draw[fill=gray] (3,14) rectangle (4,15);
\draw[fill=gray] (4,14) rectangle (5,15);
\draw[fill=gray] (5,14) rectangle (6,15);
\draw[fill=gray] (6,14) rectangle (7,15);
\node [align=center] at(-0.500000,0.500000) {0};
\node [align=center] at(-0.500000,1.500000) {1};
\node [align=center] at(-0.500000,2.500000) {2};
\node [align=center] at(-0.500000,3.500000) {3};
\node [align=center] at(-0.500000,4.500000) {4};
\node [align=center] at(-0.500000,5.500000) {5};
\node [align=center] at(-0.500000,6.500000) {6};
\node [align=center] at(-0.500000,7.500000) {7};
\node [align=center] at(-0.500000,8.500000) {8};
\node [align=center] at(-0.500000,9.500000) {9};
\node [align=center] at(-0.500000,10.500000) {10};
\node [align=center] at(-0.500000,11.500000) {11};
\node [align=center] at(-0.500000,12.500000) {12};
\node [align=center] at(-0.500000,13.500000) {13};
\node [align=center] at(-0.500000,14.500000) {14};
\node [align=center] at(0.500000,-0.500000) {A};
\node [align=center] at(1.500000,-0.500000) {B};
\node [align=center] at(2.500000,-0.500000) {C};
\node [align=center] at(3.500000,-0.500000) {D};
\node [align=center] at(4.500000,-0.500000) {E};
\node [align=center] at(5.500000,-0.500000) {F};
\node [align=center] at(6.500000,-0.500000) {G};
\draw[line width=1mm] (3.500000,1.500000) circle (0.300000);
\draw[-Latex] (3.500000,1.100000)--(3.500000,1.900000);
\draw[-Latex] (3.500000,2.100000)--(3.500000,2.900000);
\draw[-Latex] (3.500000,3.100000)--(3.500000,3.900000);
\draw[-Latex] (3.500000,4.100000)--(3.500000,4.900000);
\draw[-Latex] (3.500000,5.900000)--(3.500000,5.100000);
\draw[-Latex] (3.500000,4.100000)--(3.500000,4.900000);
\draw[-Latex] (3.500000,5.900000)--(3.500000,5.100000);
\draw[-Latex] (3.500000,4.100000)--(3.500000,4.900000);
\draw[-Latex] (3.500000,5.100000)--(3.500000,5.900000);
\draw[-Latex] (3.500000,6.900000)--(3.500000,6.100000);
\draw[-Latex] (3.500000,5.900000)--(3.500000,5.100000);
\draw[-Latex] (3.500000,4.100000)--(3.500000,4.900000);
\draw[-Latex] (3.500000,5.100000)--(3.500000,5.900000);
\draw[-Latex] (3.500000,6.100000)--(3.500000,6.900000);
\draw[-Latex] (3.500000,7.100000)--(3.500000,7.900000);
\draw[-Latex] (3.500000,8.100000)--(3.500000,8.900000);
\draw[-Latex] (3.500000,9.100000)--(3.500000,9.900000);
\draw[-Latex] (3.500000,10.100000)--(3.500000,10.900000);
\draw[-Latex] (3.500000,11.100000)--(3.500000,11.900000);
\draw[-Latex] (3.500000,12.100000)--(3.500000,12.900000);

\node at (1.500000,13.500000) {\huge$\psi_1$};
\node at (3.500000,13.500000) {\huge$\psi_2$};
\node at (5.500000,13.500000) {\huge$\psi_3$};
\end{tikzpicture}
    }

    \adjustbox{width=\beliefgraphwidth}{
      \begin{tikzpicture}
\begin{axis}[
ymin=0,
ymax=1,
xmin=0,
xmax=19,
stack plots=y,
area style,
enlarge x limits=false,
legend pos=south east
]
\addplot coordinates
{(0,0.333333) (1,0.333333) (2,0.333333) (3,0.333333) (4,0.121543) (5,0.084971) (6,0.059093) (7,0.039358) (8,0.026485) (9,0.017215) (10,0.011405) (11,0.007331) (12,0.004824) (13,0.003086) (14,0.003325) (15,0.003199) (16,0.003131) (17,0.003111) (18,0.003082) (19,0.003078) }
\closedcycle;
\addlegendentry{$\psi_1$}\addplot coordinates
{(0,0.333333) (1,0.333333) (2,0.333333) (3,0.333333) (4,0.756913) (5,0.830058) (6,0.881813) (7,0.921284) (8,0.947030) (9,0.965571) (10,0.977190) (11,0.985338) (12,0.990353) (13,0.993828) (14,0.993350) (15,0.993602) (16,0.993737) (17,0.993777) (18,0.993837) (19,0.993845) }
\closedcycle;
\addlegendentry{$\psi_2$}\addplot coordinates
{(0,0.333333) (1,0.333333) (2,0.333333) (3,0.333333) (4,0.121543) (5,0.084971) (6,0.059093) (7,0.039358) (8,0.026485) (9,0.017215) (10,0.011405) (11,0.007331) (12,0.004824) (13,0.003086) (14,0.003325) (15,0.003199) (16,0.003131) (17,0.003111) (18,0.003082) (19,0.003078) }
\closedcycle;
\addlegendentry{$\psi_3 $}\end{axis}


\begin{axis}[
ymin=0,
ymax=1,
xmin=0,
xmax=19,
stack plots=y,
area style,
enlarge x limits=false,
legend pos=south east,
  axis x line=none,
]
\textBelief{D5}{3}{19}
\ObservationLine[dashed]{3}

\textBelief{D5}{5}{19}
\ObservationLine[dashed]{5}

\textBelief{D5}{7}{19}
\ObservationLine[dashed]{7}

\textBelief{D5}{9}{19}
\ObservationLine[dashed]{9}
\end{axis}


\end{tikzpicture}
    }
  }

  \caption{PO-OAMDP trajectories and corresponding belief evolutions for the {\em legibility} task
with $p_\obs=1$ (so that the evolution is deterministic) for two different goal cells
    \label{fig|legibility|trajectories}
  }
\end{figure}

\paragraph{Legibility with Stochastic Observability}

As shown in \Cshref{fig|legibility|trajectories}, with the actual goal on the left, the PO-OAMDP policy depends on $p_{\obs}$.
When $p_\obs=1$ (left), the agent prefers the left (and shortest) path to the goal, where it is easily seen in $(B,4)$.
When $p_\obs=0.5$ (right), the agent prefers the less likely right path, where it is more likely to be seen without having to wait.
The belief evolution is less sudden with the right path, which is less likely for the observer.

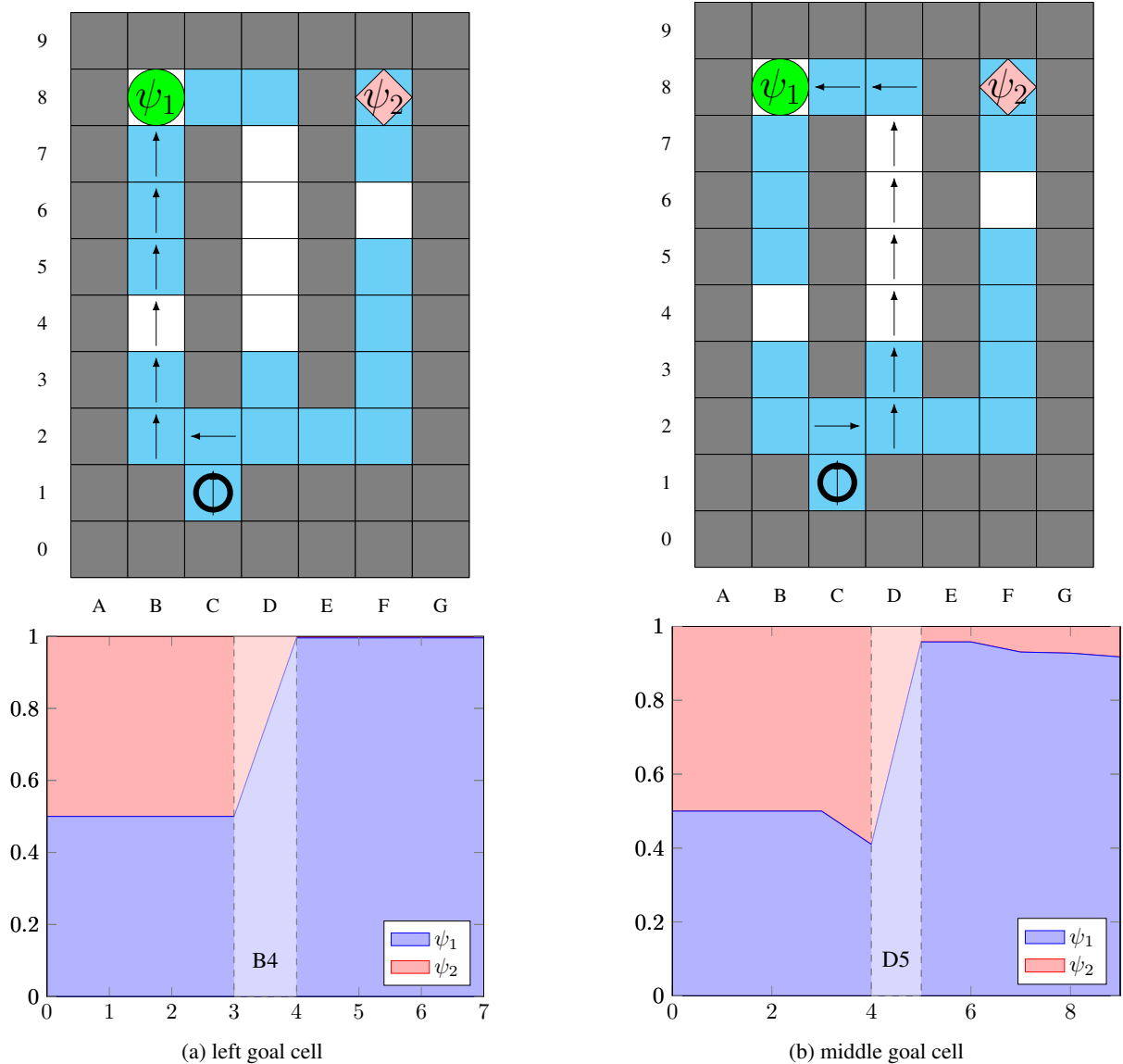
\begin{figure}

	\subcaptionbox{
		left goal cell
	}[\mycolwidth]{
		\adjustbox{width=\mazewidth}{
			\begin{tikzpicture}
\draw (0,0) grid (7,10);
\draw[fill=gray] (0,0) rectangle (1,1);
\draw[fill=gray] (1,0) rectangle (2,1);
\draw[fill=gray] (2,0) rectangle (3,1);
\draw[fill=gray] (3,0) rectangle (4,1);
\draw[fill=gray] (4,0) rectangle (5,1);
\draw[fill=gray] (5,0) rectangle (6,1);
\draw[fill=gray] (6,0) rectangle (7,1);
\draw[fill=gray] (0,1) rectangle (1,2);
\draw[fill=gray] (1,1) rectangle (2,2);
\draw[fill=cyan!50] (2,1) rectangle (3,2);
\draw[fill=gray] (3,1) rectangle (4,2);
\draw[fill=gray] (4,1) rectangle (5,2);
\draw[fill=gray] (5,1) rectangle (6,2);
\draw[fill=gray] (6,1) rectangle (7,2);
\draw[fill=gray] (0,2) rectangle (1,3);
\draw[fill=cyan!50] (1,2) rectangle (2,3);
\draw[fill=cyan!50] (2,2) rectangle (3,3);
\draw[fill=cyan!50] (3,2) rectangle (4,3);
\draw[fill=cyan!50] (4,2) rectangle (5,3);
\draw[fill=cyan!50] (5,2) rectangle (6,3);
\draw[fill=cyan!50] (5,8) rectangle (6,9);
\draw[fill=gray] (6,2) rectangle (7,3);
\draw[fill=gray] (0,3) rectangle (1,4);
\draw[fill=cyan!50] (1,3) rectangle (2,4);
\draw[fill=gray] (2,3) rectangle (3,4);
\draw[fill=cyan!50] (3,3) rectangle (4,4);
\draw[fill=gray] (4,3) rectangle (5,4);
\draw[fill=cyan!50] (5,3) rectangle (6,4);
\draw[fill=gray] (6,3) rectangle (7,4);
\draw[fill=gray] (0,4) rectangle (1,5);
\draw[fill=gray] (2,4) rectangle (3,5);
\draw[fill=gray] (4,4) rectangle (5,5);
\draw[fill=cyan!50] (5,4) rectangle (6,5);
\draw[fill=gray] (6,4) rectangle (7,5);
\draw[fill=gray] (0,5) rectangle (1,6);
\draw[fill=cyan!50] (1,5) rectangle (2,6);
\draw[fill=gray] (2,5) rectangle (3,6);
\draw[fill=gray] (4,5) rectangle (5,6);
\draw[fill=cyan!50] (5,5) rectangle (6,6);
\draw[fill=gray] (6,5) rectangle (7,6);
\draw[fill=gray] (0,6) rectangle (1,7);
\draw[fill=cyan!50] (1,6) rectangle (2,7);
\draw[fill=gray] (2,6) rectangle (3,7);
\draw[fill=gray] (4,6) rectangle (5,7);
\draw[fill=gray] (6,6) rectangle (7,7);
\draw[fill=gray] (0,7) rectangle (1,8);
\draw[fill=cyan!50] (1,7) rectangle (2,8);
\draw[fill=gray] (2,7) rectangle (3,8);
\draw[fill=gray] (4,7) rectangle (5,8);
\draw[fill=cyan!50] (5,7) rectangle (6,8);
\draw[fill=gray] (6,7) rectangle (7,8);
\draw[fill=gray] (0,8) rectangle (1,9);
\draw[fill=green] (1.500000,8.500000) circle (0.500000);
\draw[fill=cyan!50] (2,8) rectangle (3,9);
\draw[fill=cyan!50] (3,8) rectangle (4,9);
\draw[fill=gray] (4,8) rectangle (5,9);
\begin{scope}[shift={(5.500000 ,8)}] \draw[fill=pink,rotate=45] rectangle (0.707, 0.707);
\end{scope}
\draw[fill=gray] (6,8) rectangle (7,9);
\draw[fill=gray] (0,9) rectangle (1,10);
\draw[fill=gray] (1,9) rectangle (2,10);
\draw[fill=gray] (2,9) rectangle (3,10);
\draw[fill=gray] (3,9) rectangle (4,10);
\draw[fill=gray] (4,9) rectangle (5,10);
\draw[fill=gray] (5,9) rectangle (6,10);
\draw[fill=gray] (6,9) rectangle (7,10);
\node [align=center] at(-0.500000,0.500000) {0};
\node [align=center] at(-0.500000,1.500000) {1};
\node [align=center] at(-0.500000,2.500000) {2};
\node [align=center] at(-0.500000,3.500000) {3};
\node [align=center] at(-0.500000,4.500000) {4};
\node [align=center] at(-0.500000,5.500000) {5};
\node [align=center] at(-0.500000,6.500000) {6};
\node [align=center] at(-0.500000,7.500000) {7};
\node [align=center] at(-0.500000,8.500000) {8};
\node [align=center] at(-0.500000,9.500000) {9};
\node [align=center] at(0.500000,-0.500000) {A};
\node [align=center] at(1.500000,-0.500000) {B};
\node [align=center] at(2.500000,-0.500000) {C};
\node [align=center] at(3.500000,-0.500000) {D};
\node [align=center] at(4.500000,-0.500000) {E};
\node [align=center] at(5.500000,-0.500000) {F};
\node [align=center] at(6.500000,-0.500000) {G};
\draw[line width=1mm] (2.500000,1.500000) circle (0.300000);
\draw[-Latex] (2.500000,1.100000)--(2.500000,1.900000);
\draw[-Latex] (2.900000,2.500000)--(2.100000,2.500000);
\draw[-Latex] (1.500000,2.100000)--(1.500000,2.900000);
\draw[-Latex] (1.500000,3.100000)--(1.500000,3.900000);
\draw[-Latex] (1.500000,4.100000)--(1.500000,4.900000);
\draw[-Latex] (1.500000,5.100000)--(1.500000,5.900000);
\draw[-Latex] (1.500000,6.100000)--(1.500000,6.900000);
\draw[-Latex] (1.500000,7.100000)--(1.500000,7.900000);

\node at (1.500000,8.500000) {\huge$\psi_1$};
\node at (5.500000,8.500000) {\huge$\psi_2$};
\end{tikzpicture}
		}

		\adjustbox{width=\beliefgraphwidth}{
			\begin{tikzpicture}
\begin{axis}[
ymin=0,
ymax=1,
stack plots=y,
area style,
enlarge x limits=false,
legend pos=south east
]
\addplot coordinates
{(0,0.500000) (1,0.500000) (2,0.500000) (3,0.500000) (4,0.995652) (5,0.995652) (6,0.995793) (7,0.995793) }
\closedcycle;
\addlegendentry{$\psi_1$}\addplot coordinates
{(0,0.500000) (1,0.500000) (2,0.500000) (3,0.500000) (4,0.004348) (5,0.004348) (6,0.004207) (7,0.004207) }
\closedcycle;
\addlegendentry{$\psi_2 $}\end{axis}

\begin{axis}[
ymin=0,
ymax=1,
xmin=0,
xmax=7,
stack plots=y,
area style,
enlarge x limits=false,
legend pos=south east,
  axis x line=none,
]
\textBelief{B4}{3}{7}
\ObservationLine[dashed]{3}
\end{axis}


\end{tikzpicture}
		}
	}
	\hfill
	\subcaptionbox{
		middle goal cell
	}[\mycolwidth]{
		\adjustbox{width=\mazewidth}{
			\begin{tikzpicture}
\draw (0,0) grid (7,10);
\draw[fill=gray] (0,0) rectangle (1,1);
\draw[fill=gray] (1,0) rectangle (2,1);
\draw[fill=gray] (2,0) rectangle (3,1);
\draw[fill=gray] (3,0) rectangle (4,1);
\draw[fill=gray] (4,0) rectangle (5,1);
\draw[fill=gray] (5,0) rectangle (6,1);
\draw[fill=gray] (6,0) rectangle (7,1);
\draw[fill=gray] (0,1) rectangle (1,2);
\draw[fill=gray] (1,1) rectangle (2,2);
\draw[fill=cyan!50] (2,1) rectangle (3,2);
\draw[fill=gray] (3,1) rectangle (4,2);
\draw[fill=gray] (4,1) rectangle (5,2);
\draw[fill=gray] (5,1) rectangle (6,2);
\draw[fill=gray] (6,1) rectangle (7,2);
\draw[fill=gray] (0,2) rectangle (1,3);
\draw[fill=cyan!50] (1,2) rectangle (2,3);
\draw[fill=cyan!50] (2,2) rectangle (3,3);
\draw[fill=cyan!50] (3,2) rectangle (4,3);
\draw[fill=cyan!50] (4,2) rectangle (5,3);
\draw[fill=cyan!50] (5,2) rectangle (6,3);
\draw[fill=gray] (6,2) rectangle (7,3);
\draw[fill=gray] (0,3) rectangle (1,4);
\draw[fill=cyan!50] (1,3) rectangle (2,4);
\draw[fill=gray] (2,3) rectangle (3,4);
\draw[fill=cyan!50] (3,3) rectangle (4,4);
\draw[fill=gray] (4,3) rectangle (5,4);
\draw[fill=cyan!50] (5,3) rectangle (6,4);
\draw[fill=gray] (6,3) rectangle (7,4);
\draw[fill=gray] (0,4) rectangle (1,5);
\draw[fill=gray] (2,4) rectangle (3,5);
\draw[fill=gray] (4,4) rectangle (5,5);
\draw[fill=cyan!50] (5,4) rectangle (6,5);
\draw[fill=gray] (6,4) rectangle (7,5);
\draw[fill=gray] (0,5) rectangle (1,6);
\draw[fill=cyan!50] (1,5) rectangle (2,6);
\draw[fill=gray] (2,5) rectangle (3,6);
\draw[fill=gray] (4,5) rectangle (5,6);
\draw[fill=cyan!50] (5,5) rectangle (6,6);
\draw[fill=gray] (6,5) rectangle (7,6);
\draw[fill=gray] (0,6) rectangle (1,7);
\draw[fill=cyan!50] (1,6) rectangle (2,7);
\draw[fill=gray] (2,6) rectangle (3,7);
\draw[fill=gray] (4,6) rectangle (5,7);
\draw[fill=gray] (6,6) rectangle (7,7);
\draw[fill=gray] (0,7) rectangle (1,8);
\draw[fill=cyan!50] (1,7) rectangle (2,8);
\draw[fill=gray] (2,7) rectangle (3,8);
\draw[fill=gray] (4,7) rectangle (5,8);
\draw[fill=cyan!50] (5,7) rectangle (6,8);
\draw[fill=cyan!50] (5,8) rectangle (6,9);
\draw[fill=gray] (6,7) rectangle (7,8);
\draw[fill=gray] (0,8) rectangle (1,9);
\draw[fill=green] (1.500000,8.500000) circle (0.500000);
\draw[fill=cyan!50] (2,8) rectangle (3,9);
\draw[fill=cyan!50] (3,8) rectangle (4,9);
\draw[fill=gray] (4,8) rectangle (5,9);
\begin{scope}[shift={(5.500000 ,8)}] \draw[fill=pink,rotate=45] rectangle (0.707, 0.707);
\end{scope}
\draw[fill=gray] (6,8) rectangle (7,9);
\draw[fill=gray] (0,9) rectangle (1,10);
\draw[fill=gray] (1,9) rectangle (2,10);
\draw[fill=gray] (2,9) rectangle (3,10);
\draw[fill=gray] (3,9) rectangle (4,10);
\draw[fill=gray] (4,9) rectangle (5,10);
\draw[fill=gray] (5,9) rectangle (6,10);
\draw[fill=gray] (6,9) rectangle (7,10);
\node [align=center] at(-0.500000,0.500000) {0};
\node [align=center] at(-0.500000,1.500000) {1};
\node [align=center] at(-0.500000,2.500000) {2};
\node [align=center] at(-0.500000,3.500000) {3};
\node [align=center] at(-0.500000,4.500000) {4};
\node [align=center] at(-0.500000,5.500000) {5};
\node [align=center] at(-0.500000,6.500000) {6};
\node [align=center] at(-0.500000,7.500000) {7};
\node [align=center] at(-0.500000,8.500000) {8};
\node [align=center] at(-0.500000,9.500000) {9};
\node [align=center] at(0.500000,-0.500000) {A};
\node [align=center] at(1.500000,-0.500000) {B};
\node [align=center] at(2.500000,-0.500000) {C};
\node [align=center] at(3.500000,-0.500000) {D};
\node [align=center] at(4.500000,-0.500000) {E};
\node [align=center] at(5.500000,-0.500000) {F};
\node [align=center] at(6.500000,-0.500000) {G};
\draw[line width=1mm] (2.500000,1.500000) circle (0.300000);
\draw[-Latex] (2.500000,1.100000)--(2.500000,1.900000);
\draw[-Latex] (2.100000,2.500000)--(2.900000,2.500000);
\draw[-Latex] (3.500000,2.100000)--(3.500000,2.900000);
\draw[-Latex] (3.500000,3.100000)--(3.500000,3.900000);
\draw[-Latex] (3.500000,4.100000)--(3.500000,4.900000);
\draw[-Latex] (3.500000,5.100000)--(3.500000,5.900000);
\draw[-Latex] (3.500000,6.100000)--(3.500000,6.900000);
\draw[-Latex] (3.500000,7.100000)--(3.500000,7.900000);
\draw[-Latex] (3.900000,8.500000)--(3.100000,8.500000);
\draw[-Latex] (2.900000,8.500000)--(2.100000,8.500000);

\node at (1.500000,8.500000) {\huge$\psi_1$};
\node at (5.500000,8.500000) {\huge$\psi_2$};
\end{tikzpicture}
		}

		\adjustbox{width=\beliefgraphwidth}{
			\begin{tikzpicture}
\begin{axis}[
ymin=0,
ymax=1,
stack plots=y,
area style,
enlarge x limits=false,
legend pos=south east
]
\addplot coordinates
{(0,0.500000) (1,0.500000) (2,0.500000) (3,0.500000) (4,0.409873) (5,0.958118) (6,0.958118) (7,0.930649) (8,0.927698) (9,0.917432) }
\closedcycle;
\addlegendentry{$\psi_1$}\addplot coordinates
{(0,0.500000) (1,0.500000) (2,0.500000) (3,0.500000) (4,0.590127) (5,0.041882) (6,0.041882) (7,0.069351) (8,0.072302) (9,0.082568) }
\closedcycle;
\addlegendentry{$\psi_2 $}\end{axis}

\begin{axis}[
ymin=0,
ymax=1,
xmin=0,
xmax=9,
stack plots=y,
area style,
enlarge x limits=false,
legend pos=south east,
  axis x line=none,
]
\textBelief{D5}{4}{9}
\ObservationLine[dashed]{4}
\end{axis}

\end{tikzpicture}
		}
	}

	\caption{PO-OAMDP trajectories and corresponding belief evolutions for the {\em legibility} task
with $p_\obs=1$ and $p_\obs=0.5$  (in this last case, only a sampled belief evolution---in which the agent has been observed in $(D,5)$---is shown).
		\label{fig|legibility|trajectories|partialObs}
	}
\end{figure}

\paragraph{Explicability}

Here, the explicability reward function $\Rexp$ alone was sufficient to obtain proper policies.
This is because a behavior is explicable if it appears to be normal, thus, here, to reach a terminal state as fast as $\piobs$ would do.

As shown in \Cshref{fig|explicability|trajectories}, the agent goes directly to visible cell $(D,5)$, which is consistent with all three possible goals, thus quickly and significantly decreasing the probability of a random behavior (target value $\target_0$), and not trying to bring information about the actual goal (which is then reached as fast as possible).

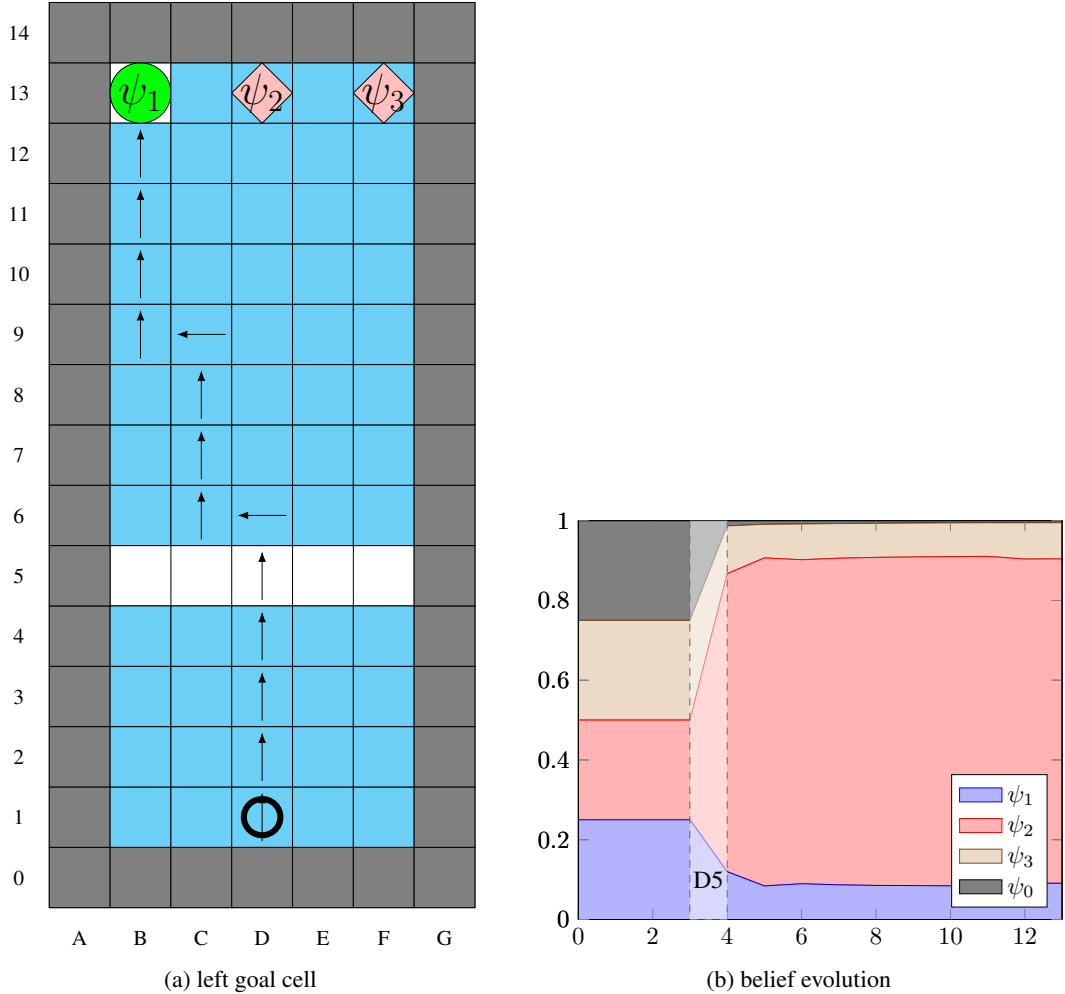
\begin{figure}

	\subcaptionbox{
		left goal cell
	}[\mycolwidth]{
		\adjustbox{width=\mazewidth}{
			\begin{tikzpicture}
\draw (0,0) grid (7,15);
\draw[fill=gray] (0,0) rectangle (1,1);
\draw[fill=gray] (1,0) rectangle (2,1);
\draw[fill=gray] (2,0) rectangle (3,1);
\draw[fill=gray] (3,0) rectangle (4,1);
\draw[fill=gray] (4,0) rectangle (5,1);
\draw[fill=gray] (5,0) rectangle (6,1);
\draw[fill=gray] (6,0) rectangle (7,1);
\draw[fill=gray] (0,1) rectangle (1,2);
\draw[fill=cyan!50] (1,1) rectangle (2,2);
\draw[fill=cyan!50] (2,1) rectangle (3,2);
\draw[fill=cyan!50] (3,1) rectangle (4,2);
\draw[fill=cyan!50] (4,1) rectangle (5,2);
\draw[fill=cyan!50] (5,1) rectangle (6,2);
\draw[fill=gray] (6,1) rectangle (7,2);
\draw[fill=gray] (0,2) rectangle (1,3);
\draw[fill=cyan!50] (1,2) rectangle (2,3);
\draw[fill=cyan!50] (2,2) rectangle (3,3);
\draw[fill=cyan!50] (3,2) rectangle (4,3);
\draw[fill=cyan!50] (4,2) rectangle (5,3);
\draw[fill=cyan!50] (5,2) rectangle (6,3);
\draw[fill=gray] (6,2) rectangle (7,3);
\draw[fill=gray] (0,3) rectangle (1,4);
\draw[fill=cyan!50] (1,3) rectangle (2,4);
\draw[fill=cyan!50] (2,3) rectangle (3,4);
\draw[fill=cyan!50] (3,3) rectangle (4,4);
\draw[fill=cyan!50] (4,3) rectangle (5,4);
\draw[fill=cyan!50] (5,3) rectangle (6,4);
\draw[fill=gray] (6,3) rectangle (7,4);
\draw[fill=gray] (0,4) rectangle (1,5);
\draw[fill=cyan!50] (1,4) rectangle (2,5);
\draw[fill=cyan!50] (2,4) rectangle (3,5);
\draw[fill=cyan!50] (3,4) rectangle (4,5);
\draw[fill=cyan!50] (4,4) rectangle (5,5);
\draw[fill=cyan!50] (5,4) rectangle (6,5);
\draw[fill=gray] (6,4) rectangle (7,5);
\draw[fill=gray] (0,5) rectangle (1,6);
\draw[fill=gray] (6,5) rectangle (7,6);
\draw[fill=gray] (0,6) rectangle (1,7);
\draw[fill=cyan!50] (1,6) rectangle (2,7);
\draw[fill=cyan!50] (2,6) rectangle (3,7);
\draw[fill=cyan!50] (3,6) rectangle (4,7);
\draw[fill=cyan!50] (4,6) rectangle (5,7);
\draw[fill=cyan!50] (5,6) rectangle (6,7);
\draw[fill=gray] (6,6) rectangle (7,7);
\draw[fill=gray] (0,7) rectangle (1,8);
\draw[fill=cyan!50] (1,7) rectangle (2,8);
\draw[fill=cyan!50] (2,7) rectangle (3,8);
\draw[fill=cyan!50] (3,7) rectangle (4,8);
\draw[fill=cyan!50] (4,7) rectangle (5,8);
\draw[fill=cyan!50] (5,7) rectangle (6,8);
\draw[fill=gray] (6,7) rectangle (7,8);
\draw[fill=gray] (0,8) rectangle (1,9);
\draw[fill=cyan!50] (1,8) rectangle (2,9);
\draw[fill=cyan!50] (2,8) rectangle (3,9);
\draw[fill=cyan!50] (3,8) rectangle (4,9);
\draw[fill=cyan!50] (4,8) rectangle (5,9);
\draw[fill=cyan!50] (5,8) rectangle (6,9);
\draw[fill=gray] (6,8) rectangle (7,9);
\draw[fill=gray] (0,9) rectangle (1,10);
\draw[fill=cyan!50] (1,9) rectangle (2,10);
\draw[fill=cyan!50] (2,9) rectangle (3,10);
\draw[fill=cyan!50] (3,9) rectangle (4,10);
\draw[fill=cyan!50] (4,9) rectangle (5,10);
\draw[fill=cyan!50] (5,9) rectangle (6,10);
\draw[fill=gray] (6,9) rectangle (7,10);
\draw[fill=gray] (0,10) rectangle (1,11);
\draw[fill=cyan!50] (1,10) rectangle (2,11);
\draw[fill=cyan!50] (2,10) rectangle (3,11);
\draw[fill=cyan!50] (3,10) rectangle (4,11);
\draw[fill=cyan!50] (4,10) rectangle (5,11);
\draw[fill=cyan!50] (5,10) rectangle (6,11);
\draw[fill=gray] (6,10) rectangle (7,11);
\draw[fill=gray] (0,11) rectangle (1,12);
\draw[fill=cyan!50] (1,11) rectangle (2,12);
\draw[fill=cyan!50] (2,11) rectangle (3,12);
\draw[fill=cyan!50] (3,11) rectangle (4,12);
\draw[fill=cyan!50] (4,11) rectangle (5,12);
\draw[fill=cyan!50] (5,11) rectangle (6,12);
\draw[fill=gray] (6,11) rectangle (7,12);
\draw[fill=gray] (0,12) rectangle (1,13);
\draw[fill=cyan!50] (1,12) rectangle (2,13);
\draw[fill=cyan!50] (2,12) rectangle (3,13);
\draw[fill=cyan!50] (3,12) rectangle (4,13);
\draw[fill=cyan!50] (4,12) rectangle (5,13);
\draw[fill=cyan!50] (5,12) rectangle (6,13);
\draw[fill=cyan!50] (5,13) rectangle (6,14);
\draw[fill=cyan!50] (3,13) rectangle (4,14);
\draw[fill=gray] (6,12) rectangle (7,13);
\draw[fill=gray] (0,13) rectangle (1,14);
\draw[fill=green] (1.500000,13.500000) circle (0.500000);
\draw[fill=cyan!50] (2,13) rectangle (3,14);
\begin{scope}[shift={(3.500000 ,13)}] \draw[fill=pink,rotate=45] rectangle (0.707, 0.707);
\end{scope}
\draw[fill=cyan!50] (4,13) rectangle (5,14);
\begin{scope}[shift={(5.500000 ,13)}] \draw[fill=pink,rotate=45] rectangle (0.707, 0.707);
\end{scope}
\draw[fill=gray] (6,13) rectangle (7,14);
\draw[fill=gray] (0,14) rectangle (1,15);
\draw[fill=gray] (1,14) rectangle (2,15);
\draw[fill=gray] (2,14) rectangle (3,15);
\draw[fill=gray] (3,14) rectangle (4,15);
\draw[fill=gray] (4,14) rectangle (5,15);
\draw[fill=gray] (5,14) rectangle (6,15);
\draw[fill=gray] (6,14) rectangle (7,15);
\node [align=center] at(-0.500000,0.500000) {0};
\node [align=center] at(-0.500000,1.500000) {1};
\node [align=center] at(-0.500000,2.500000) {2};
\node [align=center] at(-0.500000,3.500000) {3};
\node [align=center] at(-0.500000,4.500000) {4};
\node [align=center] at(-0.500000,5.500000) {5};
\node [align=center] at(-0.500000,6.500000) {6};
\node [align=center] at(-0.500000,7.500000) {7};
\node [align=center] at(-0.500000,8.500000) {8};
\node [align=center] at(-0.500000,9.500000) {9};
\node [align=center] at(-0.500000,10.500000) {10};
\node [align=center] at(-0.500000,11.500000) {11};
\node [align=center] at(-0.500000,12.500000) {12};
\node [align=center] at(-0.500000,13.500000) {13};
\node [align=center] at(-0.500000,14.500000) {14};
\node [align=center] at(0.500000,-0.500000) {A};
\node [align=center] at(1.500000,-0.500000) {B};
\node [align=center] at(2.500000,-0.500000) {C};
\node [align=center] at(3.500000,-0.500000) {D};
\node [align=center] at(4.500000,-0.500000) {E};
\node [align=center] at(5.500000,-0.500000) {F};
\node [align=center] at(6.500000,-0.500000) {G};
\draw[line width=1mm] (3.500000,1.500000) circle (0.300000);
\draw[-Latex] (3.500000,1.100000)--(3.500000,1.900000);
\draw[-Latex] (3.500000,2.100000)--(3.500000,2.900000);
\draw[-Latex] (3.500000,3.100000)--(3.500000,3.900000);
\draw[-Latex] (3.500000,4.100000)--(3.500000,4.900000);
\draw[-Latex] (3.500000,5.100000)--(3.500000,5.900000);
\draw[-Latex] (3.900000,6.500000)--(3.100000,6.500000);
\draw[-Latex] (2.500000,6.100000)--(2.500000,6.900000);
\draw[-Latex] (2.500000,7.100000)--(2.500000,7.900000);
\draw[-Latex] (2.500000,8.100000)--(2.500000,8.900000);
\draw[-Latex] (2.900000,9.500000)--(2.100000,9.500000);
\draw[-Latex] (1.500000,9.100000)--(1.500000,9.900000);
\draw[-Latex] (1.500000,10.100000)--(1.500000,10.900000);
\draw[-Latex] (1.500000,11.100000)--(1.500000,11.900000);
\draw[-Latex] (1.500000,12.100000)--(1.500000,12.900000);

\node at (1.500000,13.500000) {\huge$\psi_1$};
\node at (3.500000,13.500000) {\huge$\psi_2$};
\node at (5.500000,13.500000) {\huge$\psi_3$};
\end{tikzpicture}
		}
  }
  \subcaptionbox{
  belief evolution
  }
  [\mycolwidth]{
		\adjustbox{width=\beliefgraphwidth}{
			\begin{tikzpicture}
\begin{axis}[
ymin=0,
ymax=1,
stack plots=y,
area style,
enlarge x limits=false,
legend pos=south east
]
\addplot coordinates
{(0,0.250000) (1,0.250000) (2,0.250000) (3,0.250000) (4,0.119949) (5,0.084188) (6,0.089647) (7,0.086919) (8,0.085443) (9,0.085037) (10,0.084398) (11,0.084334) (12,0.090864) (13,0.090880) }
\closedcycle;
\addlegendentry{$\psi_1$}\addplot coordinates
{(0,0.250000) (1,0.250000) (2,0.250000) (3,0.250000) (4,0.746983) (5,0.822415) (6,0.812344) (7,0.818910) (8,0.822470) (9,0.823879) (10,0.825571) (11,0.826079) (12,0.812915) (13,0.813172) }
\closedcycle;
\addlegendentry{$\psi_2$}\addplot coordinates
{(0,0.250000) (1,0.250000) (2,0.250000) (3,0.250000) (4,0.119949) (5,0.084188) (6,0.089647) (7,0.086919) (8,0.085443) (9,0.085037) (10,0.084398) (11,0.084334) (12,0.090864) (13,0.090880) }
\closedcycle;
\addlegendentry{$\psi_3$}\addplot coordinates
{(0,0.250000) (1,0.250000) (2,0.250000) (3,0.250000) (4,0.013119) (5,0.009208) (6,0.008362) (7,0.007252) (8,0.006645) (9,0.006046) (10,0.005634) (11,0.005253) (12,0.005356) (13,0.005068) }
\closedcycle;
\addlegendentry{$\psi_0 $}\end{axis}

\begin{axis}[
ymin=0,
ymax=1,
xmin=0,
xmax=13,
stack plots=y,
area style,
enlarge x limits=false,
legend pos=south east,
  axis x line=none,
]
\textBelief{D5}{3}{13}
\ObservationLine[dashed]{3}

\end{axis}

\end{tikzpicture}
		}
	}
	\caption{PO-OAMDP trajectory and corresponding belief evolution for the {\em explicability} task
with $p_\obs=1$ (so that the evolution is deterministic) for the left goal cell
		\label{fig|explicability|trajectories}
	}
\end{figure}

\paragraph{Action Predictability}
As shown in \Cshref{fig|predictability|trajectories|actions|noRobs,fig|predictability|trajectories|actions}, action predictability here requires combining $\RApred$ with $\Robs$ to obtain a proper policy.
Indeed, without $\Robs$, the observer can keep on correctly predicting action $down$, believing it is most probably in cell $(B,2)$, while it is actually in $(F,1)$ (\Cshref{fig|predictability|trajectories|actions|noRobs}, truncated trajectory).
This rightmost trajectory if preferred over going through the empty room, where the action sequence is less predictable.
Adding $\Robs$ fixes this issue by making sure that an infinite trajectory induces an infinite cost.
Then, the best option is to go through $(D,10)$ so as to reduce the uncertainty about the trajectory early on, despite the traversal of the empty room afterwards.
Note: In a smaller version of this environment, with no empty room, adding $\Robs$ is not required, \cf \Cshref{app|XP|predictability}, \Cshref{RPredictability}.

\paragraph{State Predictability}
\Cshref{fig|predictability|trajectories|states} shows a behavior very similar to \Cshref{fig|predictability|trajectories|actions}, but for coming back once in $(B,5)$ after reaching $(B,6)$ for the first time (see also $(B,2)$).
This is because $\piobs$ makes it likely enough that the agent randomly moved backward at some point.

\def\mycolwidthC{0.32\columnwidth}
\def\mazewidthC{0.32\columnwidth}
\def\beliefgraphwidthC{0.32\columnwidth}

\begin{figure}

  \subcaptionbox{
    action pred.\\
    without $\Robs$
    \label{fig|predictability|trajectories|actions|noRobs}
  }[\mycolwidthC]{
    \adjustbox{width=\mazewidthC}{
      \begin{tikzpicture}
\draw (0,0) grid (7,13);
\draw[fill=gray] (0,0) rectangle (1,1);
\draw[fill=gray] (1,0) rectangle (2,1);
\draw[fill=gray] (2,0) rectangle (3,1);
\draw[fill=gray] (3,0) rectangle (4,1);
\draw[fill=gray] (4,0) rectangle (5,1);
\draw[fill=gray] (5,0) rectangle (6,1);
\draw[fill=gray] (6,0) rectangle (7,1);
\draw[fill=gray] (0,1) rectangle (1,2);
\draw[fill=green] (1.500000,1.500000) circle (0.500000);
\draw[fill=cyan!50] (2,1) rectangle (3,2);
\draw[fill=cyan!50] (3,1) rectangle (4,2);
\draw[fill=cyan!50] (4,1) rectangle (5,2);
\draw[fill=cyan!50] (5,1) rectangle (6,2);
\draw[fill=gray] (6,1) rectangle (7,2);
\draw[fill=gray] (0,2) rectangle (1,3);
\draw[fill=cyan!50] (1,2) rectangle (2,3);
\draw[fill=gray] (2,2) rectangle (3,3);
\draw[fill=gray] (4,2) rectangle (5,3);
\draw[fill=cyan!50] (5,2) rectangle (6,3);
\draw[fill=gray] (6,2) rectangle (7,3);
\draw[fill=gray] (0,3) rectangle (1,4);
\draw[fill=cyan!50] (1,3) rectangle (2,4);
\draw[fill=gray] (2,3) rectangle (3,4);
\draw[fill=cyan!50] (3,3) rectangle (4,4);
\draw[fill=cyan!50] (4,3) rectangle (5,4);
\draw[fill=cyan!50] (5,3) rectangle (6,4);
\draw[fill=gray] (6,3) rectangle (7,4);
\draw[fill=gray] (0,4) rectangle (1,5);
\draw[fill=cyan!50] (1,4) rectangle (2,5);
\draw[fill=gray] (2,4) rectangle (3,5);
\draw[fill=gray] (3,4) rectangle (4,5);
\draw[fill=gray] (4,4) rectangle (5,5);
\draw[fill=cyan!50] (5,4) rectangle (6,5);
\draw[fill=gray] (6,4) rectangle (7,5);
\draw[fill=gray] (0,5) rectangle (1,6);
\draw[fill=cyan!50] (1,5) rectangle (2,6);
\draw[fill=cyan!50] (2,5) rectangle (3,6);
\draw[fill=cyan!50] (3,5) rectangle (4,6);
\draw[fill=gray] (4,5) rectangle (5,6);
\draw[fill=cyan!50] (5,5) rectangle (6,6);
\draw[fill=gray] (6,5) rectangle (7,6);
\draw[fill=gray] (0,6) rectangle (1,7);
\draw[fill=cyan!50] (1,6) rectangle (2,7);
\draw[fill=cyan!50] (2,6) rectangle (3,7);
\draw[fill=cyan!50] (3,6) rectangle (4,7);
\draw[fill=gray] (4,6) rectangle (5,7);
\draw[fill=cyan!50] (5,6) rectangle (6,7);
\draw[fill=gray] (6,6) rectangle (7,7);
\draw[fill=gray] (0,7) rectangle (1,8);
\draw[fill=cyan!50] (1,7) rectangle (2,8);
\draw[fill=cyan!50] (2,7) rectangle (3,8);
\draw[fill=cyan!50] (3,7) rectangle (4,8);
\draw[fill=gray] (4,7) rectangle (5,8);
\draw[fill=cyan!50] (5,7) rectangle (6,8);
\draw[fill=gray] (6,7) rectangle (7,8);
\draw[fill=gray] (0,8) rectangle (1,9);
\draw[fill=cyan!50] (1,8) rectangle (2,9);
\draw[fill=cyan!50] (2,8) rectangle (3,9);
\draw[fill=cyan!50] (3,8) rectangle (4,9);
\draw[fill=gray] (4,8) rectangle (5,9);
\draw[fill=cyan!50] (5,8) rectangle (6,9);
\draw[fill=gray] (6,8) rectangle (7,9);
\draw[fill=gray] (0,9) rectangle (1,10);
\draw[fill=cyan!50] (1,9) rectangle (2,10);
\draw[fill=cyan!50] (2,9) rectangle (3,10);
\draw[fill=cyan!50] (3,9) rectangle (4,10);
\draw[fill=gray] (4,9) rectangle (5,10);
\draw[fill=cyan!50] (5,9) rectangle (6,10);
\draw[fill=gray] (6,9) rectangle (7,10);
\draw[fill=gray] (0,10) rectangle (1,11);
\draw[fill=cyan!50] (1,10) rectangle (2,11);
\draw[fill=gray] (2,10) rectangle (3,11);
\draw[fill=gray] (4,10) rectangle (5,11);
\draw[fill=cyan!50] (5,10) rectangle (6,11);
\draw[fill=gray] (6,10) rectangle (7,11);
\draw[fill=gray] (0,11) rectangle (1,12);
\draw[fill=cyan!50] (1,11) rectangle (2,12);
\draw[fill=cyan!50] (2,11) rectangle (3,12);
\draw[fill=cyan!50] (3,11) rectangle (4,12);
\draw[fill=cyan!50] (4,11) rectangle (5,12);
\draw[fill=cyan!50] (5,11) rectangle (6,12);
\draw[fill=gray] (6,11) rectangle (7,12);
\draw[fill=gray] (0,12) rectangle (1,13);
\draw[fill=gray] (1,12) rectangle (2,13);
\draw[fill=gray] (2,12) rectangle (3,13);
\draw[fill=gray] (3,12) rectangle (4,13);
\draw[fill=gray] (4,12) rectangle (5,13);
\draw[fill=gray] (5,12) rectangle (6,13);
\draw[fill=gray] (6,12) rectangle (7,13);
\node [align=center] at(-0.500000,0.500000) {0};
\node [align=center] at(-0.500000,1.500000) {1};
\node [align=center] at(-0.500000,2.500000) {2};
\node [align=center] at(-0.500000,3.500000) {3};
\node [align=center] at(-0.500000,4.500000) {4};
\node [align=center] at(-0.500000,5.500000) {5};
\node [align=center] at(-0.500000,6.500000) {6};
\node [align=center] at(-0.500000,7.500000) {7};
\node [align=center] at(-0.500000,8.500000) {8};
\node [align=center] at(-0.500000,9.500000) {9};
\node [align=center] at(-0.500000,10.500000) {10};
\node [align=center] at(-0.500000,11.500000) {11};
\node [align=center] at(-0.500000,12.500000) {12};
\node [align=center] at(0.500000,-0.500000) {A};
\node [align=center] at(1.500000,-0.500000) {B};
\node [align=center] at(2.500000,-0.500000) {C};
\node [align=center] at(3.500000,-0.500000) {D};
\node [align=center] at(4.500000,-0.500000) {E};
\node [align=center] at(5.500000,-0.500000) {F};
\node [align=center] at(6.500000,-0.500000) {G};
\draw[line width=1mm] (5.500000,11.500000) circle (0.300000);
\draw[-Latex] (5.500000,11.900000)--(5.500000,11.100000);
\draw[-Latex] (5.500000,10.900000)--(5.500000,10.100000);
\draw[-Latex] (5.500000,9.900000)--(5.500000,9.100000);
\draw[-Latex] (5.500000,8.900000)--(5.500000,8.100000);
\draw[-Latex] (5.500000,7.900000)--(5.500000,7.100000);
\draw[-Latex] (5.500000,6.900000)--(5.500000,6.100000);
\draw[-Latex] (5.500000,5.900000)--(5.500000,5.100000);
\draw[-Latex] (5.500000,4.900000)--(5.500000,4.100000);
\draw[-Latex] (5.500000,3.900000)--(5.500000,3.100000);
\draw[-Latex] (5.500000,2.900000)--(5.500000,2.100000);
\draw[-Latex] (5.500000,1.900000)--(5.500000,1.100000);
\draw[-Latex] (5.500000,1.900000)--(5.500000,1.100000);
\draw[-Latex] (5.500000,1.900000)--(5.500000,1.100000);
\draw[-Latex] (5.500000,1.900000)--(5.500000,1.100000);
\draw[-Latex] (5.500000,1.900000)--(5.500000,1.100000);
\draw[-Latex] (5.500000,1.900000)--(5.500000,1.100000);
\draw[-Latex] (5.500000,1.900000)--(5.500000,1.100000);
\draw[-Latex] (5.500000,1.900000)--(5.500000,1.100000);
\draw[-Latex] (5.500000,1.900000)--(5.500000,1.100000);
\draw[-Latex] (5.500000,1.900000)--(5.500000,1.100000);
\end{tikzpicture}
    }
    \adjustbox{width=\beliefgraphwidthC}{
      \pgfplotstableread{
Label up down left right
0 2.1209224212338978E-44 0.5 0.5 2.1209224212338978E-44 
1 0.07425933903635351 0.4257406609636465 0.4257406609636465 0.07425933903635351 
2 0.062286637229939934 0.6337830516499655 0.2703290279162589 0.0336012832038356 
3 0.09036840192179904 0.5315760540638989 0.32298721953906157 0.05506832447524064 
4 0.08559199481624659 0.7816278288130167 0.0913722022163842 0.04140797415435256 
5 0.12138019463958422 0.7360299802146985 0.12188056588230635 0.020709259263411134 
6 0.11218672448315092 0.8190073942220453 0.028473136937879805 0.040332744356924105 
7 0.1267215876134224 0.7971318090293172 0.05209302277412693 0.02405358058313358 
8 0.10734909105719201 0.7262137826585235 0.12909245852538717 0.03734466775889713 
9 0.11152895573588535 0.7161007956212284 0.1293318981998892 0.04303835044299715 
10 0.09567963754304136 0.6348582316429456 0.22226480277484134 0.04719732803917152 
11 0.09506468326874089 0.6068434428250686 0.23946421849946609 0.05862765540672446 
12 0.09446986935770588 0.5570637221374637 0.29195628565555104 0.05651012284927935 
13 0.08228245390777465 0.5214063816675447 0.3292219924474568 0.06708917197722342 
14 0.09723248178559661 0.5187579193470419 0.32286982680078663 0.06113977206657493 
15 0.07700323733091632 0.48746514714083594 0.36684134696493653 0.06869026856331142 
16 0.10052534761961097 0.509433555429464 0.32744378003599756 0.06259731691492745 
17 0.07661291891701026 0.4865819713544796 0.37012122598809555 0.06668388374041444 
18 0.10348059504253171 0.5175457917979317 0.3166795792627402 0.062294033896796615 
19 0.07881234195879351 0.5028770752874915 0.3552715930993853 0.0630389896543295 
}\testdata\begin{tikzpicture}
\begin{axis}[
ybar stacked,
 ymin=0,
ymax=1,
 xtick=data,
legend style={cells={anchor=west}, legend pos=north west},
reverse legend=true,
xticklabels from table={\testdata}{Label},
xticklabel style={text width=2cm,align=center},
]
\addplot [fill=green!80] table [y=up, meta=Label, x expr=\coordindex] {\testdata};
\addlegendentry{up}
\addplot [fill=blue!60] table [y=down, meta=Label, x expr=\coordindex] {\testdata};
\addlegendentry{down}
\addplot [fill=red!60] table [y=left, meta=Label, x expr=\coordindex] {\testdata};
\addlegendentry{left}
\addplot [fill=yellow!60] table [y=right, meta=Label, x expr=\coordindex] {\testdata};
\addlegendentry{right}
\end{axis}\end{tikzpicture}
    }
  }
  \hfill
  \subcaptionbox{
    action pred.
    \label{fig|predictability|trajectories|actions}
  }[\mycolwidthC]{
    \adjustbox{width=\mazewidthC}{
      \begin{tikzpicture}
\draw (0,0) grid (7,13);
\draw[fill=gray] (0,0) rectangle (1,1);
\draw[fill=gray] (1,0) rectangle (2,1);
\draw[fill=gray] (2,0) rectangle (3,1);
\draw[fill=gray] (3,0) rectangle (4,1);
\draw[fill=gray] (4,0) rectangle (5,1);
\draw[fill=gray] (5,0) rectangle (6,1);
\draw[fill=gray] (6,0) rectangle (7,1);
\draw[fill=gray] (0,1) rectangle (1,2);
\draw[fill=green] (1.500000,1.500000) circle (0.500000);
\draw[fill=cyan!50] (2,1) rectangle (3,2);
\draw[fill=cyan!50] (3,1) rectangle (4,2);
\draw[fill=cyan!50] (4,1) rectangle (5,2);
\draw[fill=cyan!50] (5,1) rectangle (6,2);
\draw[fill=gray] (6,1) rectangle (7,2);
\draw[fill=gray] (0,2) rectangle (1,3);
\draw[fill=cyan!50] (1,2) rectangle (2,3);
\draw[fill=gray] (2,2) rectangle (3,3);
\draw[fill=gray] (4,2) rectangle (5,3);
\draw[fill=cyan!50] (5,2) rectangle (6,3);
\draw[fill=gray] (6,2) rectangle (7,3);
\draw[fill=gray] (0,3) rectangle (1,4);
\draw[fill=cyan!50] (1,3) rectangle (2,4);
\draw[fill=gray] (2,3) rectangle (3,4);
\draw[fill=cyan!50] (3,3) rectangle (4,4);
\draw[fill=cyan!50] (4,3) rectangle (5,4);
\draw[fill=cyan!50] (5,3) rectangle (6,4);
\draw[fill=gray] (6,3) rectangle (7,4);
\draw[fill=gray] (0,4) rectangle (1,5);
\draw[fill=cyan!50] (1,4) rectangle (2,5);
\draw[fill=gray] (2,4) rectangle (3,5);
\draw[fill=gray] (3,4) rectangle (4,5);
\draw[fill=gray] (4,4) rectangle (5,5);
\draw[fill=cyan!50] (5,4) rectangle (6,5);
\draw[fill=gray] (6,4) rectangle (7,5);
\draw[fill=gray] (0,5) rectangle (1,6);
\draw[fill=cyan!50] (1,5) rectangle (2,6);
\draw[fill=cyan!50] (2,5) rectangle (3,6);
\draw[fill=cyan!50] (3,5) rectangle (4,6);
\draw[fill=gray] (4,5) rectangle (5,6);
\draw[fill=cyan!50] (5,5) rectangle (6,6);
\draw[fill=gray] (6,5) rectangle (7,6);
\draw[fill=gray] (0,6) rectangle (1,7);
\draw[fill=cyan!50] (1,6) rectangle (2,7);
\draw[fill=cyan!50] (2,6) rectangle (3,7);
\draw[fill=cyan!50] (3,6) rectangle (4,7);
\draw[fill=gray] (4,6) rectangle (5,7);
\draw[fill=cyan!50] (5,6) rectangle (6,7);
\draw[fill=gray] (6,6) rectangle (7,7);
\draw[fill=gray] (0,7) rectangle (1,8);
\draw[fill=cyan!50] (1,7) rectangle (2,8);
\draw[fill=cyan!50] (2,7) rectangle (3,8);
\draw[fill=cyan!50] (3,7) rectangle (4,8);
\draw[fill=gray] (4,7) rectangle (5,8);
\draw[fill=cyan!50] (5,7) rectangle (6,8);
\draw[fill=gray] (6,7) rectangle (7,8);
\draw[fill=gray] (0,8) rectangle (1,9);
\draw[fill=cyan!50] (1,8) rectangle (2,9);
\draw[fill=cyan!50] (2,8) rectangle (3,9);
\draw[fill=cyan!50] (3,8) rectangle (4,9);
\draw[fill=gray] (4,8) rectangle (5,9);
\draw[fill=cyan!50] (5,8) rectangle (6,9);
\draw[fill=gray] (6,8) rectangle (7,9);
\draw[fill=gray] (0,9) rectangle (1,10);
\draw[fill=cyan!50] (1,9) rectangle (2,10);
\draw[fill=cyan!50] (2,9) rectangle (3,10);
\draw[fill=cyan!50] (3,9) rectangle (4,10);
\draw[fill=gray] (4,9) rectangle (5,10);
\draw[fill=cyan!50] (5,9) rectangle (6,10);
\draw[fill=gray] (6,9) rectangle (7,10);
\draw[fill=gray] (0,10) rectangle (1,11);
\draw[fill=cyan!50] (1,10) rectangle (2,11);
\draw[fill=gray] (2,10) rectangle (3,11);
\draw[fill=gray] (4,10) rectangle (5,11);
\draw[fill=cyan!50] (5,10) rectangle (6,11);
\draw[fill=gray] (6,10) rectangle (7,11);
\draw[fill=gray] (0,11) rectangle (1,12);
\draw[fill=cyan!50] (1,11) rectangle (2,12);
\draw[fill=cyan!50] (2,11) rectangle (3,12);
\draw[fill=cyan!50] (3,11) rectangle (4,12);
\draw[fill=cyan!50] (4,11) rectangle (5,12);
\draw[fill=cyan!50] (5,11) rectangle (6,12);
\draw[fill=gray] (6,11) rectangle (7,12);
\draw[fill=gray] (0,12) rectangle (1,13);
\draw[fill=gray] (1,12) rectangle (2,13);
\draw[fill=gray] (2,12) rectangle (3,13);
\draw[fill=gray] (3,12) rectangle (4,13);
\draw[fill=gray] (4,12) rectangle (5,13);
\draw[fill=gray] (5,12) rectangle (6,13);
\draw[fill=gray] (6,12) rectangle (7,13);
\node [align=center] at(-0.500000,0.500000) {0};
\node [align=center] at(-0.500000,1.500000) {1};
\node [align=center] at(-0.500000,2.500000) {2};
\node [align=center] at(-0.500000,3.500000) {3};
\node [align=center] at(-0.500000,4.500000) {4};
\node [align=center] at(-0.500000,5.500000) {5};
\node [align=center] at(-0.500000,6.500000) {6};
\node [align=center] at(-0.500000,7.500000) {7};
\node [align=center] at(-0.500000,8.500000) {8};
\node [align=center] at(-0.500000,9.500000) {9};
\node [align=center] at(-0.500000,10.500000) {10};
\node [align=center] at(-0.500000,11.500000) {11};
\node [align=center] at(-0.500000,12.500000) {12};
\node [align=center] at(0.500000,-0.500000) {A};
\node [align=center] at(1.500000,-0.500000) {B};
\node [align=center] at(2.500000,-0.500000) {C};
\node [align=center] at(3.500000,-0.500000) {D};
\node [align=center] at(4.500000,-0.500000) {E};
\node [align=center] at(5.500000,-0.500000) {F};
\node [align=center] at(6.500000,-0.500000) {G};
\draw[line width=1mm] (5.500000,11.500000) circle (0.300000);
\draw[-Latex] (5.900000,11.500000)--(5.100000,11.500000);
\draw[-Latex] (4.900000,11.500000)--(4.100000,11.500000);
\draw[-Latex] (3.500000,11.900000)--(3.500000,11.100000);
\draw[-Latex] (3.500000,10.900000)--(3.500000,10.100000);
\draw[-Latex] (3.500000,9.900000)--(3.500000,9.100000);
\draw[-Latex] (3.500000,8.900000)--(3.500000,8.100000);
\draw[-Latex] (3.500000,7.900000)--(3.500000,7.100000);
\draw[-Latex] (3.500000,6.900000)--(3.500000,6.100000);
\draw[-Latex] (3.900000,5.500000)--(3.100000,5.500000);
\draw[-Latex] (2.900000,5.500000)--(2.100000,5.500000);
\draw[-Latex] (1.500000,5.900000)--(1.500000,5.100000);
\draw[-Latex] (1.500000,4.900000)--(1.500000,4.100000);
\draw[-Latex] (1.500000,3.900000)--(1.500000,3.100000);
\draw[-Latex] (1.500000,2.900000)--(1.500000,2.100000);
\end{tikzpicture}
    }

    \adjustbox{width=\beliefgraphwidthC}{
      \pgfplotstableread{
Label up down left right
0 2.1209224212338978E-44 0.5 0.5 2.1209224212338978E-44 
1 0.07425933903635351 0.4257406609636465 0.4257406609636465 0.07425933903635351 
2 0.062286637229939934 0.6337830516499655 0.2703290279162589 0.0336012832038356 
3 0.09620747102702867 0.5668279475292698 0.2878816485218043 0.049082932921897374 
4 0.08339264769393778 0.7468477601426343 0.1316162102080164 0.03814338195541162 
5 0.11372017894799637 0.7098480145487641 0.15394407983856184 0.022487726664677698 
6 0.10671401217298966 0.7773591941606546 0.07427125089414634 0.04165554277220887 
7 0.11968284416741962 0.7637113927719823 0.08753846540991501 0.029067297650683485 
8 0.1046161587053023 0.7045491352849218 0.14780305080943912 0.043031655200336826 
9 0.10908185351448457 0.6973582045442289 0.14457795295508688 0.048981988986199834 
10 0.09618366426985556 0.6351496170550601 0.21488798696631514 0.05377873170876888 
11 0.09801458535799021 0.6205656136913287 0.22221770397807158 0.05920209697260946 
12 0.09647627713881365 0.5623726190726385 0.2787561089728161 0.062394994815731804 
13 0.08721220977851685 0.5514905617979337 0.2976456787280396 0.06365154969551026 
}\testdata\begin{tikzpicture}
\begin{axis}[
ybar stacked,
 ymin=0,
ymax=1,
 xtick=data,
legend style={cells={anchor=west}, legend pos=north west},
reverse legend=true,
xticklabels from table={\testdata}{Label},
xticklabel style={text width=2cm,align=center},
]
\addplot [fill=green!80] table [y=up, meta=Label, x expr=\coordindex] {\testdata};
\addlegendentry{up}
\addplot [fill=blue!60] table [y=down, meta=Label, x expr=\coordindex] {\testdata};
\addlegendentry{down}
\addplot [fill=red!60] table [y=left, meta=Label, x expr=\coordindex] {\testdata};
\addlegendentry{left}
\addplot [fill=yellow!60] table [y=right, meta=Label, x expr=\coordindex] {\testdata};
\addlegendentry{right}
\end{axis}\end{tikzpicture}
    }
  }
  \hfill
  \subcaptionbox{
    state pred.
    \label{fig|predictability|trajectories|states}
  }[\mycolwidthC]{
    \adjustbox{width=\mazewidthC}{
      \begin{tikzpicture}
\draw (0,0) grid (7,13);
\draw[fill=gray] (0,0) rectangle (1,1);
\draw[fill=gray] (1,0) rectangle (2,1);
\draw[fill=gray] (2,0) rectangle (3,1);
\draw[fill=gray] (3,0) rectangle (4,1);
\draw[fill=gray] (4,0) rectangle (5,1);
\draw[fill=gray] (5,0) rectangle (6,1);
\draw[fill=gray] (6,0) rectangle (7,1);
\draw[fill=gray] (0,1) rectangle (1,2);
\draw[fill=green] (1.500000,1.500000) circle (0.500000);
\draw[fill=cyan!50] (2,1) rectangle (3,2);
\draw[fill=cyan!50] (3,1) rectangle (4,2);
\draw[fill=cyan!50] (4,1) rectangle (5,2);
\draw[fill=cyan!50] (5,1) rectangle (6,2);
\draw[fill=gray] (6,1) rectangle (7,2);
\draw[fill=gray] (0,2) rectangle (1,3);
\draw[fill=cyan!50] (1,2) rectangle (2,3);
\draw[fill=gray] (2,2) rectangle (3,3);
\draw[fill=gray] (4,2) rectangle (5,3);
\draw[fill=cyan!50] (5,2) rectangle (6,3);
\draw[fill=gray] (6,2) rectangle (7,3);
\draw[fill=gray] (0,3) rectangle (1,4);
\draw[fill=cyan!50] (1,3) rectangle (2,4);
\draw[fill=gray] (2,3) rectangle (3,4);
\draw[fill=cyan!50] (3,3) rectangle (4,4);
\draw[fill=cyan!50] (4,3) rectangle (5,4);
\draw[fill=cyan!50] (5,3) rectangle (6,4);
\draw[fill=gray] (6,3) rectangle (7,4);
\draw[fill=gray] (0,4) rectangle (1,5);
\draw[fill=cyan!50] (1,4) rectangle (2,5);
\draw[fill=gray] (2,4) rectangle (3,5);
\draw[fill=gray] (3,4) rectangle (4,5);
\draw[fill=gray] (4,4) rectangle (5,5);
\draw[fill=cyan!50] (5,4) rectangle (6,5);
\draw[fill=gray] (6,4) rectangle (7,5);
\draw[fill=gray] (0,5) rectangle (1,6);
\draw[fill=cyan!50] (1,5) rectangle (2,6);
\draw[fill=cyan!50] (2,5) rectangle (3,6);
\draw[fill=cyan!50] (3,5) rectangle (4,6);
\draw[fill=gray] (4,5) rectangle (5,6);
\draw[fill=cyan!50] (5,5) rectangle (6,6);
\draw[fill=gray] (6,5) rectangle (7,6);
\draw[fill=gray] (0,6) rectangle (1,7);
\draw[fill=cyan!50] (1,6) rectangle (2,7);
\draw[fill=cyan!50] (2,6) rectangle (3,7);
\draw[fill=cyan!50] (3,6) rectangle (4,7);
\draw[fill=gray] (4,6) rectangle (5,7);
\draw[fill=cyan!50] (5,6) rectangle (6,7);
\draw[fill=gray] (6,6) rectangle (7,7);
\draw[fill=gray] (0,7) rectangle (1,8);
\draw[fill=cyan!50] (1,7) rectangle (2,8);
\draw[fill=cyan!50] (2,7) rectangle (3,8);
\draw[fill=cyan!50] (3,7) rectangle (4,8);
\draw[fill=gray] (4,7) rectangle (5,8);
\draw[fill=cyan!50] (5,7) rectangle (6,8);
\draw[fill=gray] (6,7) rectangle (7,8);
\draw[fill=gray] (0,8) rectangle (1,9);
\draw[fill=cyan!50] (1,8) rectangle (2,9);
\draw[fill=cyan!50] (2,8) rectangle (3,9);
\draw[fill=cyan!50] (3,8) rectangle (4,9);
\draw[fill=gray] (4,8) rectangle (5,9);
\draw[fill=cyan!50] (5,8) rectangle (6,9);
\draw[fill=gray] (6,8) rectangle (7,9);
\draw[fill=gray] (0,9) rectangle (1,10);
\draw[fill=cyan!50] (1,9) rectangle (2,10);
\draw[fill=cyan!50] (2,9) rectangle (3,10);
\draw[fill=cyan!50] (3,9) rectangle (4,10);
\draw[fill=gray] (4,9) rectangle (5,10);
\draw[fill=cyan!50] (5,9) rectangle (6,10);
\draw[fill=gray] (6,9) rectangle (7,10);
\draw[fill=gray] (0,10) rectangle (1,11);
\draw[fill=cyan!50] (1,10) rectangle (2,11);
\draw[fill=gray] (2,10) rectangle (3,11);
\draw[fill=gray] (4,10) rectangle (5,11);
\draw[fill=cyan!50] (5,10) rectangle (6,11);
\draw[fill=gray] (6,10) rectangle (7,11);
\draw[fill=gray] (0,11) rectangle (1,12);
\draw[fill=cyan!50] (1,11) rectangle (2,12);
\draw[fill=cyan!50] (2,11) rectangle (3,12);
\draw[fill=cyan!50] (3,11) rectangle (4,12);
\draw[fill=cyan!50] (4,11) rectangle (5,12);
\draw[fill=cyan!50] (5,11) rectangle (6,12);
\draw[fill=gray] (6,11) rectangle (7,12);
\draw[fill=gray] (0,12) rectangle (1,13);
\draw[fill=gray] (1,12) rectangle (2,13);
\draw[fill=gray] (2,12) rectangle (3,13);
\draw[fill=gray] (3,12) rectangle (4,13);
\draw[fill=gray] (4,12) rectangle (5,13);
\draw[fill=gray] (5,12) rectangle (6,13);
\draw[fill=gray] (6,12) rectangle (7,13);
\node [align=center] at(-0.500000,0.500000) {0};
\node [align=center] at(-0.500000,1.500000) {1};
\node [align=center] at(-0.500000,2.500000) {2};
\node [align=center] at(-0.500000,3.500000) {3};
\node [align=center] at(-0.500000,4.500000) {4};
\node [align=center] at(-0.500000,5.500000) {5};
\node [align=center] at(-0.500000,6.500000) {6};
\node [align=center] at(-0.500000,7.500000) {7};
\node [align=center] at(-0.500000,8.500000) {8};
\node [align=center] at(-0.500000,9.500000) {9};
\node [align=center] at(-0.500000,10.500000) {10};
\node [align=center] at(-0.500000,11.500000) {11};
\node [align=center] at(-0.500000,12.500000) {12};
\node [align=center] at(0.500000,-0.500000) {A};
\node [align=center] at(1.500000,-0.500000) {B};
\node [align=center] at(2.500000,-0.500000) {C};
\node [align=center] at(3.500000,-0.500000) {D};
\node [align=center] at(4.500000,-0.500000) {E};
\node [align=center] at(5.500000,-0.500000) {F};
\node [align=center] at(6.500000,-0.500000) {G};
\draw[line width=1mm] (5.500000,11.500000) circle (0.300000);
\draw[-Latex] (5.900000,11.500000)--(5.100000,11.500000);
\draw[-Latex] (4.900000,11.500000)--(4.100000,11.500000);
\draw[-Latex] (3.500000,11.900000)--(3.500000,11.100000);
\draw[-Latex] (3.500000,10.900000)--(3.500000,10.100000);
\draw[-Latex] (3.500000,9.900000)--(3.500000,9.100000);
\draw[-Latex] (3.900000,8.500000)--(3.100000,8.500000);
\draw[-Latex] (2.900000,8.500000)--(2.100000,8.500000);
\draw[-Latex] (1.500000,8.900000)--(1.500000,8.100000);
\draw[-Latex] (1.500000,7.900000)--(1.500000,7.100000);
\draw[-Latex] (1.500000,6.900000)--(1.500000,6.100000);
\draw[-Latex] (1.500000,5.100000)--(1.500000,5.900000);
\draw[-Latex] (1.500000,6.900000)--(1.500000,6.100000);
\draw[-Latex] (1.500000,5.900000)--(1.500000,5.100000);
\draw[-Latex] (1.500000,4.900000)--(1.500000,4.100000);
\draw[-Latex] (1.500000,3.900000)--(1.500000,3.100000);
\draw[-Latex] (1.500000,2.100000)--(1.500000,2.900000);
\draw[-Latex] (1.500000,3.900000)--(1.500000,3.100000);
\draw[-Latex] (1.500000,2.900000)--(1.500000,2.100000);
\end{tikzpicture}
    }

    \medskip

\noindent\fbox{\begin{minipage}{\dimexpr\mazewidthC-2\fboxsep-2\fboxrule\relax}
      {\scriptsize
        The evolution of the belief over the target variable (\ie, the state) is too complex to display.
      }
      \end{minipage}
    }

    \medskip

  }

  \caption{PO-OAMDP trajectories and corresponding belief evolutions for {\em predictability} tasks
\label{fig|predictability|trajectories}
  }
\end{figure}
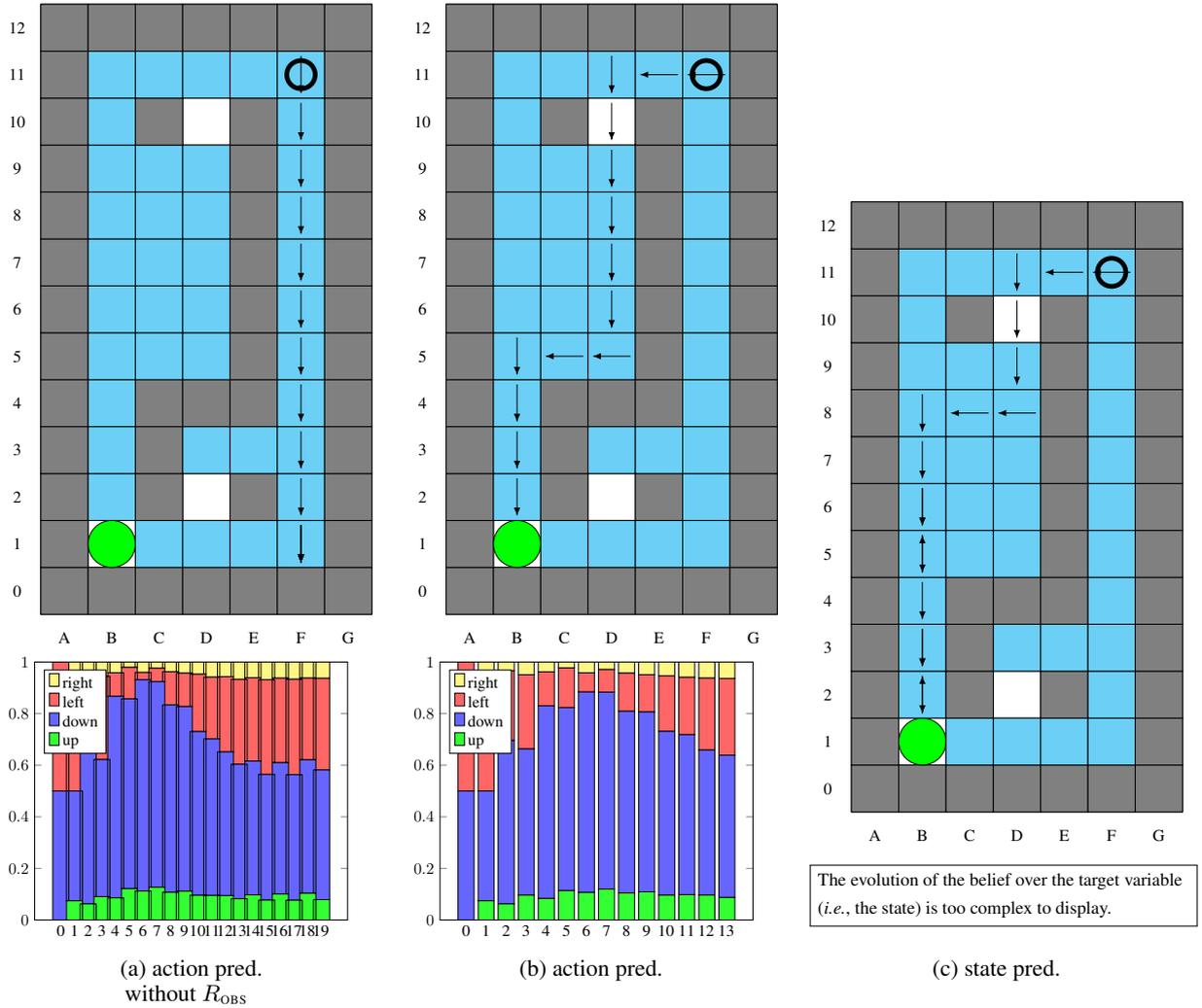

\subsection{Computation Time}
\label{sec|computationTime}

\Cshref{fig|bounds|legibility} shows the evolution of the upper and lower bound during HSVI's convergence on the legibility task, with typical monotonic step-wise behaviors on both sides.
\Cshref{fig|ErrorGap} shows the evolution of the error gap in the four problems involving deterministic observations for the two proposed bound initializations.

\paragraph{Criteria and Grid}
A first comment is that the convergence is much faster for the predictability criteria.
This is likely due
\begin{enumerate*}
\item to the lack of open spaces in the corresponding grid (so that less trajectories need to be considered) and
\item to the reward functions possibly better guiding the decisions.
\end{enumerate*}

\paragraph{Initializations}
Overall, the combined initialization has a better anytime behavior than the naive one.
This is true in particular in the most complex problems (legibility and explicability), even if the curves sometimes cross each other.

\begin{figure}
	\centering

	\includegraphics[width=0.7\columnwidth]{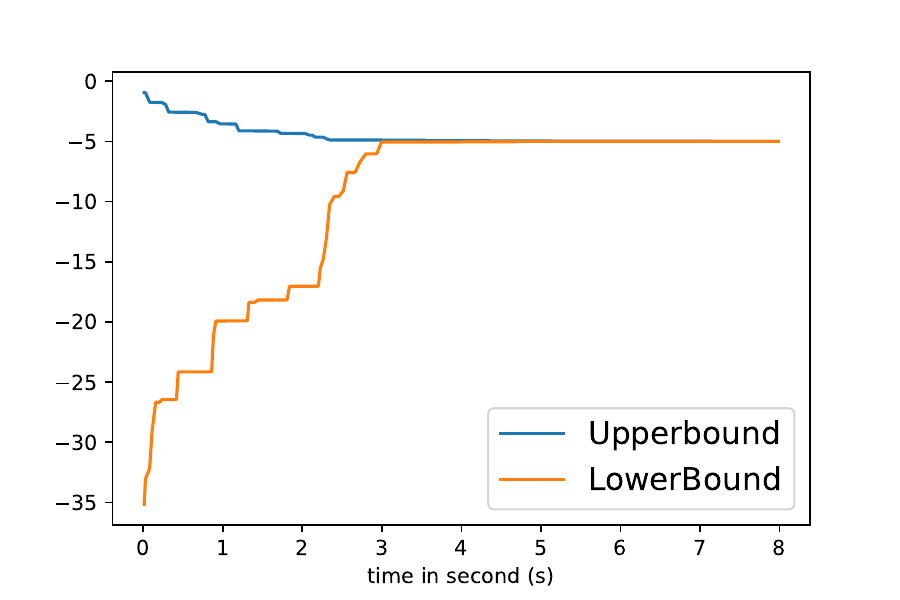}

	\caption{
          Evolution of the upper and lower bounds for legibility on the first maze using the combined initialization
          \label{fig|bounds|legibility}
	}
\end{figure}

\def\mycolwidthB{0.47\columnwidth}
\def\graphwidth{0.47\columnwidth}

\begin{figure}
  \centering

  \subcaptionbox{ Legibility
  }[\mycolwidthB]{
    \centering
    \includegraphics[width=\graphwidth]{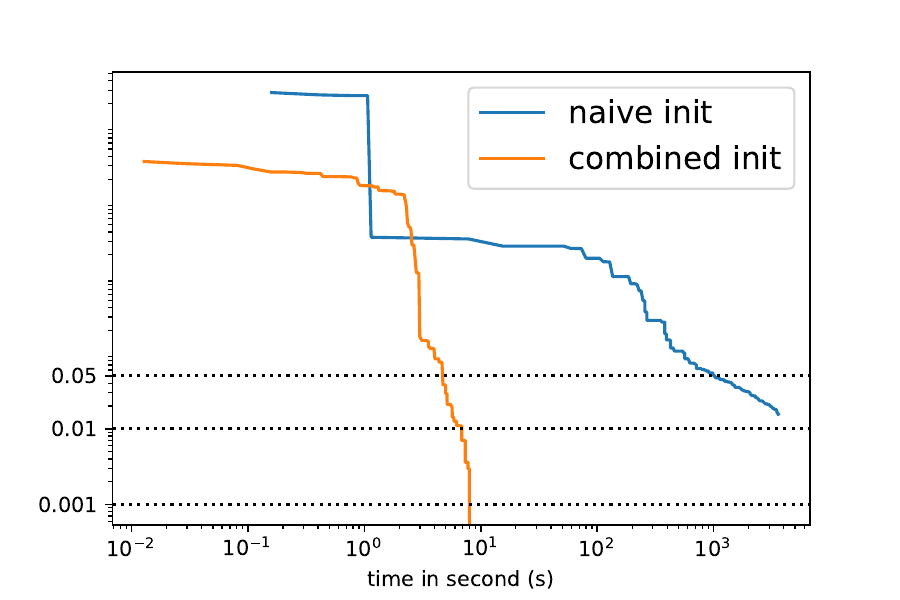}
  }
  \hfill
  \subcaptionbox{
    Explicability
  }[\mycolwidthB]{
    \includegraphics[width=\graphwidth]{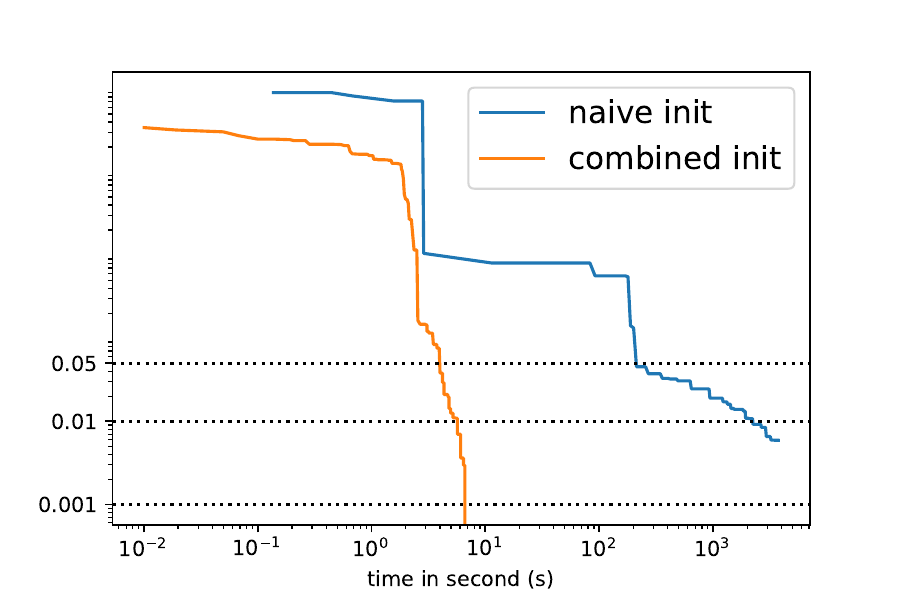}
  }

  \subcaptionbox{
    Action predictability
  }[\mycolwidthB]{
    \includegraphics[width=\graphwidth]{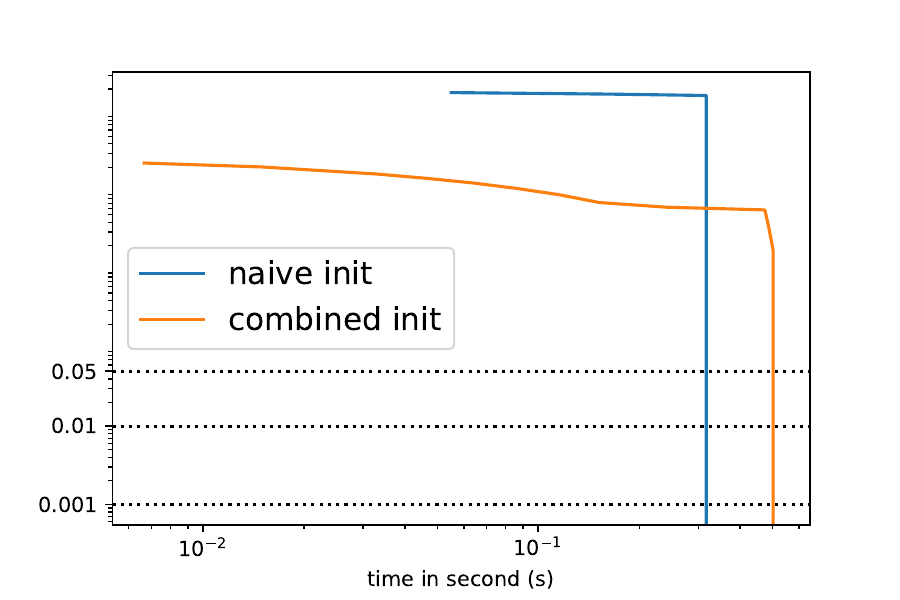}
  }
  \hfill
  \subcaptionbox{
    State predictability
  }[\mycolwidthB]{
    \includegraphics[width=\graphwidth]{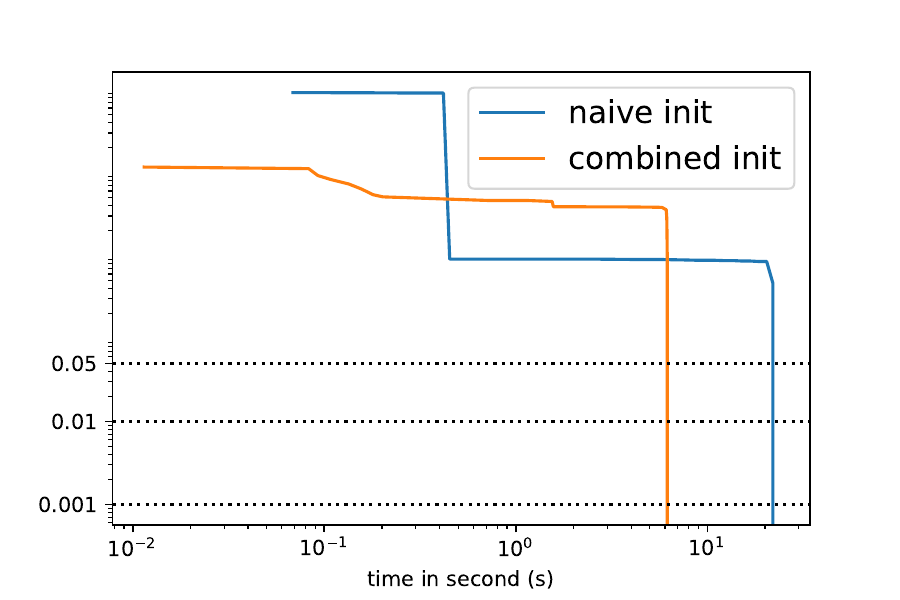}
  }

  \caption{Evolution of the error gap as a function of time (in seconds) for different initializations (log scales)
    \label{fig|ErrorGap}
  }
\end{figure}

\def\mpwidth{.45\columnwidth}
\def\aboxwidth{1.0\textwidth}

\section{Conclusion}
\label{sec|conclusion}

We have introduced the novel framework of observer-aware MDPs under partial observability (PO-OAMDPs), which allows addressing (among other things) legibility, explicability and predictability problems when the observer has only a limited perception of the agent and its environment.
This framework more than just generalizes \citeauthor{pmlr-v161-miura21a}'s OAMDPs (with similar complexity results) as the target variable is now transition-dependent, and can thus be dynamic, contrary to the original agent type.
This enables formalizing a wider range of problems (\eg, action and state predictability as defined by \citet{LepLemThoBuf-arxiv24}, but see also \Cshref{app|moreExampleScenarios}).

Assuming a BST model of the observer, we show how to update her state belief (known to the agent), and derive her belief over the target variable.
This leads to turning a PO-OAMDP into an equivalent state+belief MDP whose virtual state is a (state, state-belief) pair,
allowing to adapt \citeauthor{SmiSim-uai04}'s HSVI algorithm \citep{SmiSim-uai04} with dedicated upper- and lower-bound initializations.

Experiments show the benefits of these initializations and illustrate the PO-OAMDP framework by demonstrating resulting non-trivial behaviors with several criteria (legibility, explicability and predictability), with a significant benefit compared to default policies.
Among other things, they show that these criteria do not necessarily induce valid SSPs, an issue that can be provably be alleviated by simply adding another cost function.

Future work includes further exploring the possibilities offered by PO-OAMDPs, and
improving solution techniques.
In particular, we aim at exploiting the continuity of $V^*$ in belief space, even if it may not be convex and may exhibit local discontinuities \citep{miuBufZil-uai24}.

\bibliographystyle{plainnat}

\renewcommand{\labelT}[1]{} 

\appendix

\section{OAMDPs expressed as PO-OAMDPs}
\label{app|generalization}

This appendix demonstrates that, assuming the BST belief update is used, any OAMDP can be turned into an equivalent PO-OAMDP (cf. \Cref{sec|contribution}).

\propOAMDPequivPOOAMDP

\begin{proof}
  Let $\mathcal{M} \equiv \langle \cS, s_0, \cA, T, \gamma, \cS_f, \Type, B, \Rag \rangle$ be an OAMDP and, for each $\type$ in the set of possible types $\Type$, let $\mathcal{M}^\type \equiv \langle \cS$, $s_0$, $\cA$, $T^\type$, $\gamma$, $\cS_f^\type$, $\Robs^\type \rangle$ be the corresponding MDP.

  Let us now introduce a new type $\tilde\type$ and the MDP $\mathcal{M}^{\tilde\type} \equiv \langle \cS$, $s_0$, $\cA$, $T^{\tilde\type}\eqdef T$, $\gamma$, $\cS_f^{\tilde\type}$ $\eqdef \cS_f$, $\Robs^{\tilde\type} \rangle$, where $\Robs^{\tilde\type}$ is the reward function that returns $-1$ at each time step until a terminal state is reached (to ensure that we have a valid SSP if needed).
  We can now define the PO-OAMDP $\mathcal{M}' \equiv \langle \cS', s'_0, \cA', T', \gamma, \cS'_f, \Robs', (\Target\equiv) \Type', \Omega, O', B', \Rag', \targetOf \rangle$ where:
  \begin{align*}
    \Type'
    &\eqdef \Type \cup \{\tilde\type\}, \\
    \cS'
    & \eqdef \cS \times \Type', \\
    s'_0
    & \eqdef (s_0, \tilde\type), \\
    \cA'
    & \eqdef \cA, \\
    T'((s,\type),a,(s',\type'))
    & \eqdef
    \mathbb{1}_{\type=\type'} \cdot T^\type(s,a,s'), \\
    \cS'_f
    & \eqdef
    \{ (s,\type) | s \in \cS_f^\type \}, \\
    \Robs'((s,\type),a,(s',\type'))
    & \eqdef
    \mathbb{1}_{\type=\type'} \cdot \Robs^\type(s,a,s'), \\
    \Omega
    & \eqdef
    \cS,
    \\
    O'(a,s',o)
    & \eqdef
    \mathbb{1}_{o=s'},
    \\
    B'((s,\type)| - )
    & \eqdef
    \begin{cases}
      B((s,\type)| - ) \cdot \mathbb{1}_{s=s_0} & \text{if } \type \in \Type, \text{ and} \\
      0 & \text{if } \type = \tilde\type,
    \end{cases}
    \intertext{(with beliefs at time steps beyond $t=0$ computed by Bayesian belief updates),}
    \Rag'(s,\beta,a,s',\beta')
    & \eqdef
    \Rag(s,a,\beta_{-\tilde\type}),
    \intertext{where $\beta_{-\tilde\type}$ is the belief vector $\beta$ deprived of the $\tilde\type$ component, and}
    \targetOf((s,\type),a,(s',\type'))
    & \eqdef
    \type.
  \end{align*}
  Note that the ``fake type'' $\tilde\type$ only serves to ensure that the actual dynamics (transition function and set of terminal states) exist, as it could be that none of the ``true types'' is attached to them.

  Then, solving the underlying MDP with reward $\Robs'$ (either for all states, or for all states reachable from the states in $b_0$) is equivalent to solving each MDP $\mathcal{M}^\type$, and the softmax policy $\piobs'$ is equivalent to the softmax policies $\piobs^\type$.
Thus, the BST update is the same in both cases.

  As can be noted, the belief over targets/types will always have value 0 for $\tilde\type$, so that $\beta_{-\tilde\type}$ will correspond to the belief over types for the OAMDP, and, as a consequence, the agent reward is equivalent in both settings.
Then because the initial state of the PO-OAMDP is $(s_0,\tilde\type)$, and because the belief updates are equivalent, the dynamics of the PO-OAMDP are equivalent to those of the original OAMDP, so that solving the one is equivalent to solving the other.
\end{proof}

Note that the fake type $\tilde\type$ is, in a sense, the actual type of the (observer-aware) agent, which is ignored by the observer because the observer does not model the agent as optimizing $\Rag'$.

\section{Complementary Experimental Results}
\label{app|XP}

The following sections present some complementary experimental results.
\Cref{subsc|MDP Policies} shows an illustration of the observer's softmax MDP policies $\piobs$ used in legibility and explicability tasks.
The following subsections present the results obtained with smaller versions of the grids used in the main experiments (\Cref{sec|XPs}).
These subsections illustrate that back and forth movements observed in previous tasks (\Cshref{fig|legibility|trajectories})  are not always necessary and are the consequences of uncertainties on the target variable and induced costs.

\subsection{Observer's Softmax MDP Policies}
\label{subsc|MDP Policies}

\begin{figure}
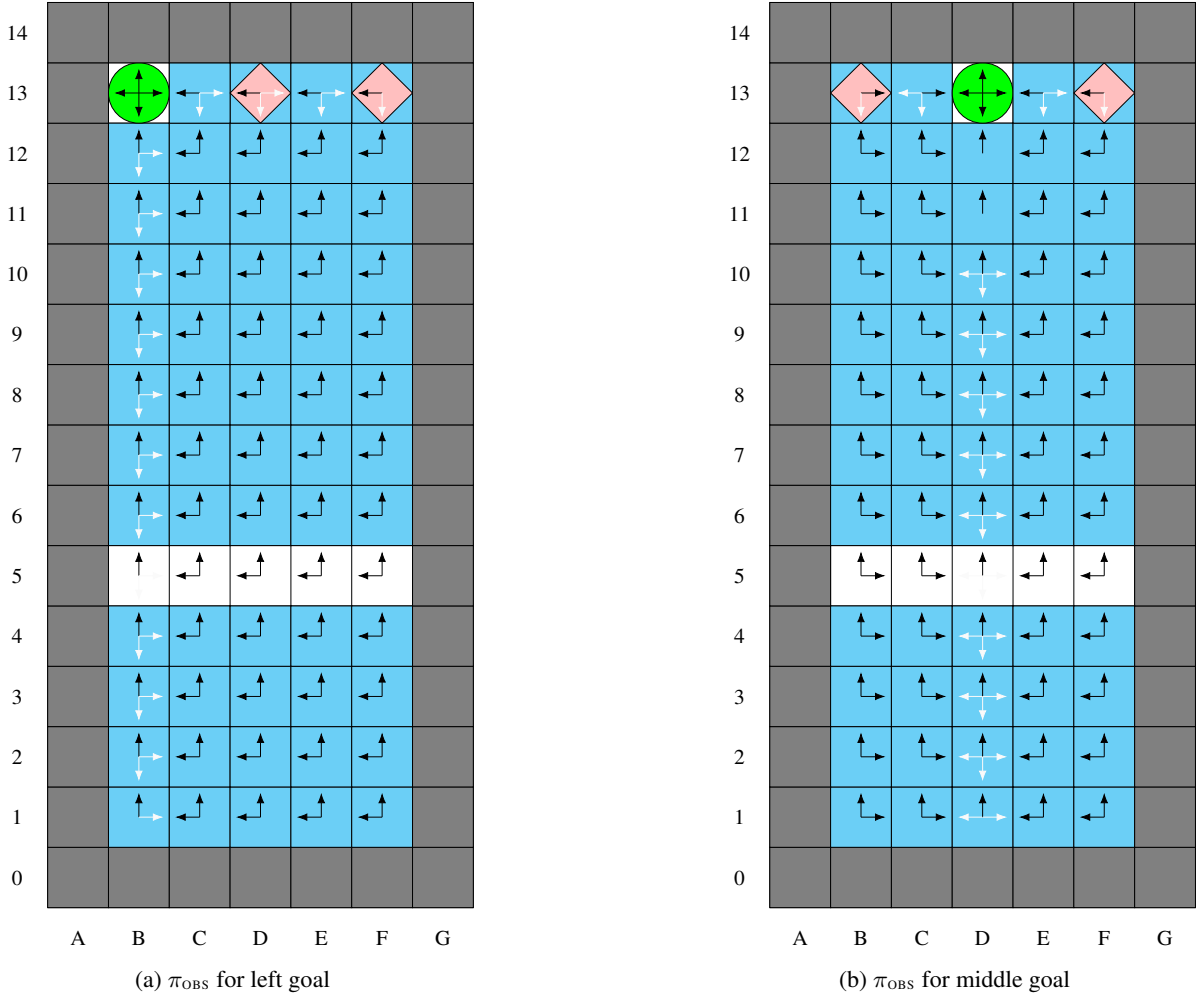

  \subcaptionbox{
    $\piobs$ for left goal
    \label{fig|MDPPolicies|left}
  }{
    \centering
    \adjustbox{width=\mazewidth}{
      \input{XPHSVI/Legibility/7_MDP_policy.tikz}
    }
  }
  \hfill
  \subcaptionbox{
    $\piobs$ for middle goal
    \label{fig|MDPPolicies|middle}
  }{
    \centering
    \adjustbox{width=\mazewidth}{
      \input{XPHSVI/Legibility/10_MDP_policy.tikz}
    }
  }
  \caption{Observer's Softmax MDP policies of the legibility and explicability task  for different actual goals
    \label{fig|MDPPolicies}
  }
\end{figure}

\Cshref{fig|MDPPolicies|left,fig|MDPPolicies|middle} illustrate the $\piobs$ policies that are used for both legibility and explicability tasks (because the task considered by the observer is the same in both scenarios).
In these figures, the higher the probability to select an action, the darker the corresponding arrow, and, when the probability is below a threshold of $0.1$, the corresponding arrow is not shown.

This highlights that the observer models the agent as following a stochastic policy, and it must be noted that, due to negative rewards when hitting a wall, probabilities to go away from the wall are larger than the aforementioned probability threshold.

\subsection{Legibility}
\label{app|XP|legibility}

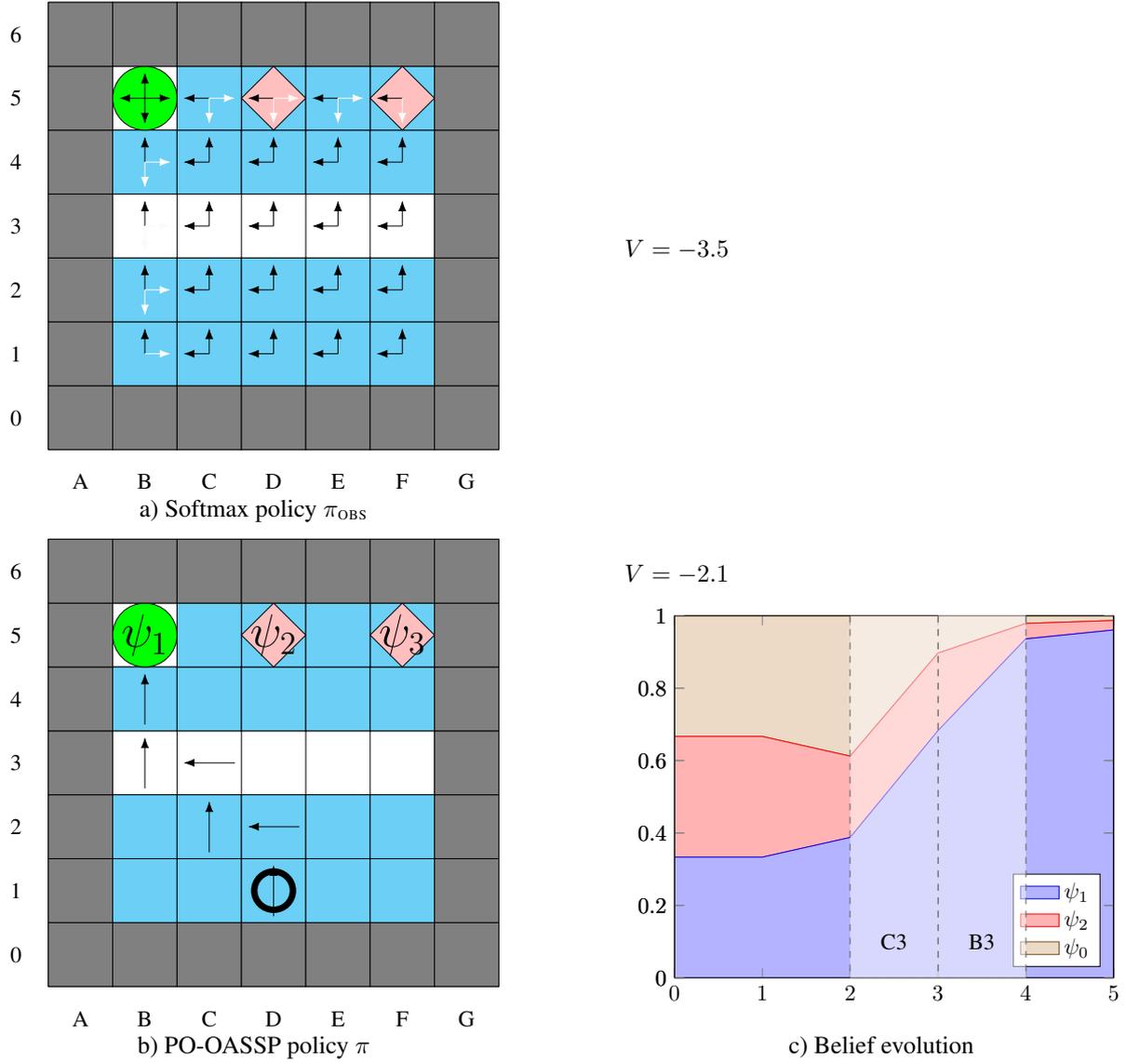
\begin{figure}
	\begin{minipage}{\mpwidth}
		\centering
		\adjustbox{width=\aboxwidth}{
			\begin{tikzpicture}
\draw (0,0) grid (7,7);
\draw[fill=gray] (0,0) rectangle (1,1);
\draw[fill=gray] (1,0) rectangle (2,1);
\draw[fill=gray] (2,0) rectangle (3,1);
\draw[fill=gray] (3,0) rectangle (4,1);
\draw[fill=gray] (4,0) rectangle (5,1);
\draw[fill=gray] (5,0) rectangle (6,1);
\draw[fill=gray] (6,0) rectangle (7,1);
\draw[fill=gray] (0,1) rectangle (1,2);
\draw[fill=cyan!50] (1,1) rectangle (2,2);
\draw[fill=cyan!50] (2,1) rectangle (3,2);
\draw[fill=cyan!50] (3,1) rectangle (4,2);
\draw[fill=cyan!50] (4,1) rectangle (5,2);
\draw[fill=cyan!50] (5,1) rectangle (6,2);
\draw[fill=gray] (6,1) rectangle (7,2);
\draw[fill=gray] (0,2) rectangle (1,3);
\draw[fill=cyan!50] (1,2) rectangle (2,3);
\draw[fill=cyan!50] (2,2) rectangle (3,3);
\draw[fill=cyan!50] (3,2) rectangle (4,3);
\draw[fill=cyan!50] (4,2) rectangle (5,3);
\draw[fill=cyan!50] (5,2) rectangle (6,3);
\draw[fill=gray] (6,2) rectangle (7,3);
\draw[fill=gray] (0,3) rectangle (1,4);
\draw[fill=gray] (6,3) rectangle (7,4);
\draw[fill=gray] (0,4) rectangle (1,5);
\draw[fill=cyan!50] (1,4) rectangle (2,5);
\draw[fill=cyan!50] (2,4) rectangle (3,5);
\draw[fill=cyan!50] (3,4) rectangle (4,5);
\draw[fill=cyan!50] (4,4) rectangle (5,5);
\draw[fill=cyan!50] (5,4) rectangle (6,5);
\draw[fill=cyan!50] (5,5) rectangle (6,6);
\draw[fill=cyan!50] (3,5) rectangle (4,6);
\draw[fill=gray] (6,4) rectangle (7,5);
\draw[fill=gray] (0,5) rectangle (1,6);
\draw[fill=green] (1.500000,5.500000) circle (0.500000);
\draw[fill=cyan!50] (2,5) rectangle (3,6);
\begin{scope}[shift={(3.500000 ,5)}] \draw[fill=pink,rotate=45] rectangle (0.707, 0.707);
\end{scope}
\draw[fill=cyan!50] (4,5) rectangle (5,6);
\begin{scope}[shift={(5.500000 ,5)}] \draw[fill=pink,rotate=45] rectangle (0.707, 0.707);
\end{scope}
\draw[fill=gray] (6,5) rectangle (7,6);
\draw[fill=gray] (0,6) rectangle (1,7);
\draw[fill=gray] (1,6) rectangle (2,7);
\draw[fill=gray] (2,6) rectangle (3,7);
\draw[fill=gray] (3,6) rectangle (4,7);
\draw[fill=gray] (4,6) rectangle (5,7);
\draw[fill=gray] (5,6) rectangle (6,7);
\draw[fill=gray] (6,6) rectangle (7,7);
\node [align=center] at(-0.500000,0.500000) {0};
\node [align=center] at(-0.500000,1.500000) {1};
\node [align=center] at(-0.500000,2.500000) {2};
\node [align=center] at(-0.500000,3.500000) {3};
\node [align=center] at(-0.500000,4.500000) {4};
\node [align=center] at(-0.500000,5.500000) {5};
\node [align=center] at(-0.500000,6.500000) {6};
\node [align=center] at(0.500000,-0.500000) {A};
\node [align=center] at(1.500000,-0.500000) {B};
\node [align=center] at(2.500000,-0.500000) {C};
\node [align=center] at(3.500000,-0.500000) {D};
\node [align=center] at(4.500000,-0.500000) {E};
\node [align=center] at(5.500000,-0.500000) {F};
\node [align=center] at(6.500000,-0.500000) {G};
\draw[-Latex,draw=black!100] (1.500000,5.500000)--(1.100000,5.500000);
\draw[-Latex,draw=black!100] (1.500000,5.500000)--(1.500000,5.900000);
\draw[-Latex,draw=black!100] (1.500000,5.500000)--(1.900000,5.500000);
\draw[-Latex,draw=black!100] (1.500000,5.500000)--(1.500000,5.100000);
\draw[-Latex,draw=black!100] (2.500000,5.500000)--(2.100000,5.500000);
\draw[-Latex,draw=black!1] (2.500000,5.500000)--(2.900000,5.500000);
\draw[-Latex,draw=black!1] (2.500000,5.500000)--(2.500000,5.100000);
\draw[-Latex,draw=black!100] (3.500000,5.500000)--(3.100000,5.500000);
\draw[-Latex,draw=black!2] (3.500000,5.500000)--(3.900000,5.500000);
\draw[-Latex,draw=black!2] (3.500000,5.500000)--(3.500000,5.100000);
\draw[-Latex,draw=black!100] (4.500000,5.500000)--(4.100000,5.500000);
\draw[-Latex,draw=black!2] (4.500000,5.500000)--(4.900000,5.500000);
\draw[-Latex,draw=black!2] (4.500000,5.500000)--(4.500000,5.100000);
\draw[-Latex,draw=black!100] (5.500000,5.500000)--(5.100000,5.500000);
\draw[-Latex,draw=black!2] (5.500000,5.500000)--(5.500000,5.100000);
\draw[-Latex,draw=black!100] (1.500000,4.500000)--(1.500000,4.900000);
\draw[-Latex,draw=black!1] (1.500000,4.500000)--(1.900000,4.500000);
\draw[-Latex,draw=black!1] (1.500000,4.500000)--(1.500000,4.100000);
\draw[-Latex,draw=black!100] (2.500000,4.500000)--(2.100000,4.500000);
\draw[-Latex,draw=black!100] (2.500000,4.500000)--(2.500000,4.900000);
\draw[-Latex,draw=black!100] (3.500000,4.500000)--(3.100000,4.500000);
\draw[-Latex,draw=black!100] (3.500000,4.500000)--(3.500000,4.900000);
\draw[-Latex,draw=black!100] (4.500000,4.500000)--(4.100000,4.500000);
\draw[-Latex,draw=black!100] (4.500000,4.500000)--(4.500000,4.900000);
\draw[-Latex,draw=black!100] (5.500000,4.500000)--(5.100000,4.500000);
\draw[-Latex,draw=black!100] (5.500000,4.500000)--(5.500000,4.900000);
\draw[-Latex,draw=black!100] (1.500000,3.500000)--(1.500000,3.900000);
\draw[-Latex,draw=black!2] (1.500000,3.500000)--(1.900000,3.500000);
\draw[-Latex,draw=black!2] (1.500000,3.500000)--(1.500000,3.100000);
\draw[-Latex,draw=black!100] (2.500000,3.500000)--(2.100000,3.500000);
\draw[-Latex,draw=black!100] (2.500000,3.500000)--(2.500000,3.900000);
\draw[-Latex,draw=black!100] (3.500000,3.500000)--(3.100000,3.500000);
\draw[-Latex,draw=black!100] (3.500000,3.500000)--(3.500000,3.900000);
\draw[-Latex,draw=black!100] (4.500000,3.500000)--(4.100000,3.500000);
\draw[-Latex,draw=black!100] (4.500000,3.500000)--(4.500000,3.900000);
\draw[-Latex,draw=black!100] (5.500000,3.500000)--(5.100000,3.500000);
\draw[-Latex,draw=black!100] (5.500000,3.500000)--(5.500000,3.900000);
\draw[-Latex,draw=black!100] (1.500000,2.500000)--(1.500000,2.900000);
\draw[-Latex,draw=black!2] (1.500000,2.500000)--(1.900000,2.500000);
\draw[-Latex,draw=black!2] (1.500000,2.500000)--(1.500000,2.100000);
\draw[-Latex,draw=black!100] (2.500000,2.500000)--(2.100000,2.500000);
\draw[-Latex,draw=black!100] (2.500000,2.500000)--(2.500000,2.900000);
\draw[-Latex,draw=black!100] (3.500000,2.500000)--(3.100000,2.500000);
\draw[-Latex,draw=black!100] (3.500000,2.500000)--(3.500000,2.900000);
\draw[-Latex,draw=black!100] (4.500000,2.500000)--(4.100000,2.500000);
\draw[-Latex,draw=black!100] (4.500000,2.500000)--(4.500000,2.900000);
\draw[-Latex,draw=black!100] (5.500000,2.500000)--(5.100000,2.500000);
\draw[-Latex,draw=black!100] (5.500000,2.500000)--(5.500000,2.900000);
\draw[-Latex,draw=black!100] (1.500000,1.500000)--(1.500000,1.900000);
\draw[-Latex,draw=black!2] (1.500000,1.500000)--(1.900000,1.500000);
\draw[-Latex,draw=black!100] (2.500000,1.500000)--(2.100000,1.500000);
\draw[-Latex,draw=black!100] (2.500000,1.500000)--(2.500000,1.900000);
\draw[-Latex,draw=black!100] (3.500000,1.500000)--(3.100000,1.500000);
\draw[-Latex,draw=black!100] (3.500000,1.500000)--(3.500000,1.900000);
\draw[-Latex,draw=black!100] (4.500000,1.500000)--(4.100000,1.500000);
\draw[-Latex,draw=black!100] (4.500000,1.500000)--(4.500000,1.900000);
\draw[-Latex,draw=black!100] (5.500000,1.500000)--(5.100000,1.500000);
\draw[-Latex,draw=black!100] (5.500000,1.500000)--(5.500000,1.900000);
\end{tikzpicture}
		}
	\end{minipage}
	\hfill
	\begin{minipage}{\mpwidth}
		$V= -3.5$
	\end{minipage}

	\begin{minipage}{\mpwidth}
		\centering
		a) Softmax policy $\piobs$
	\end{minipage}

	\medskip

	\begin{minipage}{\mpwidth}
		\centering
		\adjustbox{width=\aboxwidth}{
			\begin{tikzpicture}
\draw (0,0) grid (7,7);
\draw[fill=gray] (0,0) rectangle (1,1);
\draw[fill=gray] (1,0) rectangle (2,1);
\draw[fill=gray] (2,0) rectangle (3,1);
\draw[fill=gray] (3,0) rectangle (4,1);
\draw[fill=gray] (4,0) rectangle (5,1);
\draw[fill=gray] (5,0) rectangle (6,1);
\draw[fill=gray] (6,0) rectangle (7,1);
\draw[fill=gray] (0,1) rectangle (1,2);
\draw[fill=cyan!50] (1,1) rectangle (2,2);
\draw[fill=cyan!50] (2,1) rectangle (3,2);
\draw[fill=cyan!50] (3,1) rectangle (4,2);
\draw[fill=cyan!50] (4,1) rectangle (5,2);
\draw[fill=cyan!50] (5,1) rectangle (6,2);
\draw[fill=gray] (6,1) rectangle (7,2);
\draw[fill=gray] (0,2) rectangle (1,3);
\draw[fill=cyan!50] (1,2) rectangle (2,3);
\draw[fill=cyan!50] (2,2) rectangle (3,3);
\draw[fill=cyan!50] (3,2) rectangle (4,3);
\draw[fill=cyan!50] (4,2) rectangle (5,3);
\draw[fill=cyan!50] (5,2) rectangle (6,3);
\draw[fill=gray] (6,2) rectangle (7,3);
\draw[fill=gray] (0,3) rectangle (1,4);
\draw[fill=gray] (6,3) rectangle (7,4);
\draw[fill=gray] (0,4) rectangle (1,5);
\draw[fill=cyan!50] (1,4) rectangle (2,5);
\draw[fill=cyan!50] (2,4) rectangle (3,5);
\draw[fill=cyan!50] (3,4) rectangle (4,5);
\draw[fill=cyan!50] (4,4) rectangle (5,5);
\draw[fill=cyan!50] (5,4) rectangle (6,5);
\draw[fill=cyan!50] (5,5) rectangle (6,6);
\draw[fill=cyan!50] (3,5) rectangle (4,6);
\draw[fill=gray] (6,4) rectangle (7,5);
\draw[fill=gray] (0,5) rectangle (1,6);
\draw[fill=green] (1.500000,5.500000) circle (0.500000);
\draw[fill=cyan!50] (2,5) rectangle (3,6);
\begin{scope}[shift={(3.500000 ,5)}] \draw[fill=pink,rotate=45] rectangle (0.707, 0.707);
\end{scope}
\draw[fill=cyan!50] (4,5) rectangle (5,6);
\begin{scope}[shift={(5.500000 ,5)}] \draw[fill=pink,rotate=45] rectangle (0.707, 0.707);
\end{scope}
\draw[fill=gray] (6,5) rectangle (7,6);
\draw[fill=gray] (0,6) rectangle (1,7);
\draw[fill=gray] (1,6) rectangle (2,7);
\draw[fill=gray] (2,6) rectangle (3,7);
\draw[fill=gray] (3,6) rectangle (4,7);
\draw[fill=gray] (4,6) rectangle (5,7);
\draw[fill=gray] (5,6) rectangle (6,7);
\draw[fill=gray] (6,6) rectangle (7,7);
\node [align=center] at(-0.500000,0.500000) {0};
\node [align=center] at(-0.500000,1.500000) {1};
\node [align=center] at(-0.500000,2.500000) {2};
\node [align=center] at(-0.500000,3.500000) {3};
\node [align=center] at(-0.500000,4.500000) {4};
\node [align=center] at(-0.500000,5.500000) {5};
\node [align=center] at(-0.500000,6.500000) {6};
\node [align=center] at(0.500000,-0.500000) {A};
\node [align=center] at(1.500000,-0.500000) {B};
\node [align=center] at(2.500000,-0.500000) {C};
\node [align=center] at(3.500000,-0.500000) {D};
\node [align=center] at(4.500000,-0.500000) {E};
\node [align=center] at(5.500000,-0.500000) {F};
\node [align=center] at(6.500000,-0.500000) {G};
\draw[line width=1mm] (3.500000,1.500000) circle (0.300000);
\draw[-Latex] (3.500000,1.100000)--(3.500000,1.900000);
\draw[-Latex] (3.900000,2.500000)--(3.100000,2.500000);
\draw[-Latex] (2.500000,2.100000)--(2.500000,2.900000);
\draw[-Latex] (2.900000,3.500000)--(2.100000,3.500000);
\draw[-Latex] (1.500000,3.100000)--(1.500000,3.900000);
\draw[-Latex] (1.500000,4.100000)--(1.500000,4.900000);

\node at (1.500000,5.500000) {\huge$\psi_1$};
\node at (3.500000,5.500000) {\huge$\psi_2$};
\node at (5.500000,5.500000) {\huge$\psi_3$};
\end{tikzpicture}
		}
	\end{minipage}
	\hfill
	\begin{minipage}{\mpwidth}
		$V = -2.1$ 

		\medskip

		\centering
		\adjustbox{width=\aboxwidth}{
			\begin{tikzpicture}
\begin{axis}[
ymin=0,
ymax=1,
stack plots=y,
area style,
enlarge x limits=false,
legend pos=south east
]
\addplot coordinates
{(0,0.333333) (1,0.333333) (2,0.387745) (3,0.682604) (4,0.936174) (5,0.960612) }
\closedcycle;
\addlegendentry{$\psi_1$}\addplot coordinates
{(0,0.333333) (1,0.333333) (2,0.224510) (3,0.214173) (4,0.042597) (5,0.026267) }
\closedcycle;
\addlegendentry{$\psi_2$}\addplot coordinates
{(0,0.333333) (1,0.333333) (2,0.387745) (3,0.103223) (4,0.021229) (5,0.013121) }
\closedcycle;
\addlegendentry{$\psi_0 $}\end{axis}

\begin{axis}[
ymin=0,
ymax=1,
xmin=0,
xmax=5,
stack plots=y,
area style,
enlarge x limits=false,
legend pos=south east,
  axis x line=none,
]
\textBelief{C3}{2}{5}
\ObservationLine[dashed]{2}

\textBelief{B3}{3}{5}
\ObservationLine[dashed]{3}

\end{axis}

\end{tikzpicture}
		}
	\end{minipage}

	\begin{minipage}{\mpwidth}
		\centering
		b) PO-OASSP  policy $\pi$
	\end{minipage}
	\hfill
	\begin{minipage}{\mpwidth}
		\centering
		c) Belief evolution
	\end{minipage}

		\caption{Results for legibility
			\label{RLegibility}}
	\end{figure}

When addressing legibility task in a smaller grid (\Cshref{RLegibility}), the PO-OAMDP agent simply reaches cell $(C,3)$, stays visible by moving left to $(B,3)$, then goes to the actual-goal cell.
It must be noted that at time step $2$, the agent decides not to appear in cell $(D,3)$.
Since the observer would have expected the agent to go in cell $(D,3)$ if its actual goal had been $\target_2$, her belief in the middle goal $\target_2$ decreases just by not observing the agent in that cell.
When the agent appears in cell $(C,3)$, the observer belief in goal $\target_1$ increases but, since $\piobs$ is stochastic, she cannot be sure of the actual goal of the agent.

Then, the agent prefers to quickly reach its actual goal and a terminal state (with future cumulated rewards of $0$) than take time to reduce uncertainties as seen in previous experiment (\Cshref{fig|legibility|trajectories|left}).
This is because, in this new maze, the goal is easily reached and the agent is only penalized one time step by the remaining uncertainty.
This is not the case when the goal is far from the visible cells, which explains back and forth movements observed in \Cshref{fig|legibility|trajectories} to reduce uncertainties regarding the actual goal before following a hidden path to reach that goal.

It must also be noted that, even in this simple problem, the PO-OAMDP policy performs better than the naive $\piobs$ policy.

\subsection{Explicability}
\label{app|XP|explicability}

\begin{figure}
	\begin{minipage}{\mpwidth}
		\centering
		\adjustbox{width=\aboxwidth}{
			\begin{tikzpicture}
\draw (0,0) grid (7,7);
\draw[fill=gray] (0,0) rectangle (1,1);
\draw[fill=gray] (1,0) rectangle (2,1);
\draw[fill=gray] (2,0) rectangle (3,1);
\draw[fill=gray] (3,0) rectangle (4,1);
\draw[fill=gray] (4,0) rectangle (5,1);
\draw[fill=gray] (5,0) rectangle (6,1);
\draw[fill=gray] (6,0) rectangle (7,1);
\draw[fill=gray] (0,1) rectangle (1,2);
\draw[fill=cyan!50] (1,1) rectangle (2,2);
\draw[fill=cyan!50] (2,1) rectangle (3,2);
\draw[fill=cyan!50] (3,1) rectangle (4,2);
\draw[fill=cyan!50] (4,1) rectangle (5,2);
\draw[fill=cyan!50] (5,1) rectangle (6,2);
\draw[fill=gray] (6,1) rectangle (7,2);
\draw[fill=gray] (0,2) rectangle (1,3);
\draw[fill=cyan!50] (1,2) rectangle (2,3);
\draw[fill=cyan!50] (2,2) rectangle (3,3);
\draw[fill=cyan!50] (3,2) rectangle (4,3);
\draw[fill=cyan!50] (4,2) rectangle (5,3);
\draw[fill=cyan!50] (5,2) rectangle (6,3);
\draw[fill=gray] (6,2) rectangle (7,3);
\draw[fill=gray] (0,3) rectangle (1,4);
\draw[fill=gray] (6,3) rectangle (7,4);
\draw[fill=gray] (0,4) rectangle (1,5);
\draw[fill=cyan!50] (1,4) rectangle (2,5);
\draw[fill=cyan!50] (2,4) rectangle (3,5);
\draw[fill=cyan!50] (3,4) rectangle (4,5);
\draw[fill=cyan!50] (4,4) rectangle (5,5);
\draw[fill=cyan!50] (5,4) rectangle (6,5);
\draw[fill=gray] (6,4) rectangle (7,5);
\draw[fill=gray] (0,5) rectangle (1,6);
\draw[fill=cyan!50] (5,5) rectangle (6,6);
\draw[fill=cyan!50] (3,5) rectangle (4,6);
\draw[fill=green] (1.500000,5.500000) circle (0.500000);
\draw[fill=cyan!50] (2,5) rectangle (3,6);
\begin{scope}[shift={(3.500000 ,5)}] \draw[fill=pink,rotate=45] rectangle (0.707, 0.707);
\end{scope}
\draw[fill=cyan!50] (4,5) rectangle (5,6);
\begin{scope}[shift={(5.500000 ,5)}] \draw[fill=pink,rotate=45] rectangle (0.707, 0.707);
\end{scope}
\draw[fill=gray] (6,5) rectangle (7,6);
\draw[fill=gray] (0,6) rectangle (1,7);
\draw[fill=gray] (1,6) rectangle (2,7);
\draw[fill=gray] (2,6) rectangle (3,7);
\draw[fill=gray] (3,6) rectangle (4,7);
\draw[fill=gray] (4,6) rectangle (5,7);
\draw[fill=gray] (5,6) rectangle (6,7);
\draw[fill=gray] (6,6) rectangle (7,7);
\node [align=center] at(-0.500000,0.500000) {0};
\node [align=center] at(-0.500000,1.500000) {1};
\node [align=center] at(-0.500000,2.500000) {2};
\node [align=center] at(-0.500000,3.500000) {3};
\node [align=center] at(-0.500000,4.500000) {4};
\node [align=center] at(-0.500000,5.500000) {5};
\node [align=center] at(-0.500000,6.500000) {6};
\node [align=center] at(0.500000,-0.500000) {A};
\node [align=center] at(1.500000,-0.500000) {B};
\node [align=center] at(2.500000,-0.500000) {C};
\node [align=center] at(3.500000,-0.500000) {D};
\node [align=center] at(4.500000,-0.500000) {E};
\node [align=center] at(5.500000,-0.500000) {F};
\node [align=center] at(6.500000,-0.500000) {G};
\draw[-Latex,draw=black!100] (1.500000,5.500000)--(1.100000,5.500000);
\draw[-Latex,draw=black!100] (1.500000,5.500000)--(1.500000,5.900000);
\draw[-Latex,draw=black!100] (1.500000,5.500000)--(1.900000,5.500000);
\draw[-Latex,draw=black!100] (1.500000,5.500000)--(1.500000,5.100000);
\draw[-Latex,draw=black!100] (2.500000,5.500000)--(2.100000,5.500000);
\draw[-Latex,draw=black!1] (2.500000,5.500000)--(2.900000,5.500000);
\draw[-Latex,draw=black!1] (2.500000,5.500000)--(2.500000,5.100000);
\draw[-Latex,draw=black!100] (3.500000,5.500000)--(3.100000,5.500000);
\draw[-Latex,draw=black!2] (3.500000,5.500000)--(3.900000,5.500000);
\draw[-Latex,draw=black!2] (3.500000,5.500000)--(3.500000,5.100000);
\draw[-Latex,draw=black!100] (4.500000,5.500000)--(4.100000,5.500000);
\draw[-Latex,draw=black!2] (4.500000,5.500000)--(4.900000,5.500000);
\draw[-Latex,draw=black!2] (4.500000,5.500000)--(4.500000,5.100000);
\draw[-Latex,draw=black!100] (5.500000,5.500000)--(5.100000,5.500000);
\draw[-Latex,draw=black!2] (5.500000,5.500000)--(5.500000,5.100000);
\draw[-Latex,draw=black!100] (1.500000,4.500000)--(1.500000,4.900000);
\draw[-Latex,draw=black!1] (1.500000,4.500000)--(1.900000,4.500000);
\draw[-Latex,draw=black!1] (1.500000,4.500000)--(1.500000,4.100000);
\draw[-Latex,draw=black!100] (2.500000,4.500000)--(2.100000,4.500000);
\draw[-Latex,draw=black!100] (2.500000,4.500000)--(2.500000,4.900000);
\draw[-Latex,draw=black!100] (3.500000,4.500000)--(3.100000,4.500000);
\draw[-Latex,draw=black!100] (3.500000,4.500000)--(3.500000,4.900000);
\draw[-Latex,draw=black!100] (4.500000,4.500000)--(4.100000,4.500000);
\draw[-Latex,draw=black!100] (4.500000,4.500000)--(4.500000,4.900000);
\draw[-Latex,draw=black!100] (5.500000,4.500000)--(5.100000,4.500000);
\draw[-Latex,draw=black!100] (5.500000,4.500000)--(5.500000,4.900000);
\draw[-Latex,draw=black!100] (1.500000,3.500000)--(1.500000,3.900000);
\draw[-Latex,draw=black!2] (1.500000,3.500000)--(1.900000,3.500000);
\draw[-Latex,draw=black!2] (1.500000,3.500000)--(1.500000,3.100000);
\draw[-Latex,draw=black!100] (2.500000,3.500000)--(2.100000,3.500000);
\draw[-Latex,draw=black!100] (2.500000,3.500000)--(2.500000,3.900000);
\draw[-Latex,draw=black!100] (3.500000,3.500000)--(3.100000,3.500000);
\draw[-Latex,draw=black!100] (3.500000,3.500000)--(3.500000,3.900000);
\draw[-Latex,draw=black!100] (4.500000,3.500000)--(4.100000,3.500000);
\draw[-Latex,draw=black!100] (4.500000,3.500000)--(4.500000,3.900000);
\draw[-Latex,draw=black!100] (5.500000,3.500000)--(5.100000,3.500000);
\draw[-Latex,draw=black!100] (5.500000,3.500000)--(5.500000,3.900000);
\draw[-Latex,draw=black!100] (1.500000,2.500000)--(1.500000,2.900000);
\draw[-Latex,draw=black!2] (1.500000,2.500000)--(1.900000,2.500000);
\draw[-Latex,draw=black!2] (1.500000,2.500000)--(1.500000,2.100000);
\draw[-Latex,draw=black!100] (2.500000,2.500000)--(2.100000,2.500000);
\draw[-Latex,draw=black!100] (2.500000,2.500000)--(2.500000,2.900000);
\draw[-Latex,draw=black!100] (3.500000,2.500000)--(3.100000,2.500000);
\draw[-Latex,draw=black!100] (3.500000,2.500000)--(3.500000,2.900000);
\draw[-Latex,draw=black!100] (4.500000,2.500000)--(4.100000,2.500000);
\draw[-Latex,draw=black!100] (4.500000,2.500000)--(4.500000,2.900000);
\draw[-Latex,draw=black!100] (5.500000,2.500000)--(5.100000,2.500000);
\draw[-Latex,draw=black!100] (5.500000,2.500000)--(5.500000,2.900000);
\draw[-Latex,draw=black!100] (1.500000,1.500000)--(1.500000,1.900000);
\draw[-Latex,draw=black!2] (1.500000,1.500000)--(1.900000,1.500000);
\draw[-Latex,draw=black!100] (2.500000,1.500000)--(2.100000,1.500000);
\draw[-Latex,draw=black!100] (2.500000,1.500000)--(2.500000,1.900000);
\draw[-Latex,draw=black!100] (3.500000,1.500000)--(3.100000,1.500000);
\draw[-Latex,draw=black!100] (3.500000,1.500000)--(3.500000,1.900000);
\draw[-Latex,draw=black!100] (4.500000,1.500000)--(4.100000,1.500000);
\draw[-Latex,draw=black!100] (4.500000,1.500000)--(4.500000,1.900000);
\draw[-Latex,draw=black!100] (5.500000,1.500000)--(5.100000,1.500000);
\draw[-Latex,draw=black!100] (5.500000,1.500000)--(5.500000,1.900000);
\end{tikzpicture}
		}
	\end{minipage}
	\hfill
	\begin{minipage}{\mpwidth}
		$V=-1.96$
	\end{minipage}

	\begin{minipage}{\mpwidth}
		\centering
		a) Softmax policy $\piobs$
	\end{minipage}

	\medskip

	\begin{minipage}{\mpwidth}
		\centering
		\adjustbox{width=\aboxwidth}{
			\begin{tikzpicture}
\draw (0,0) grid (7,7);
\draw[fill=gray] (0,0) rectangle (1,1);
\draw[fill=gray] (1,0) rectangle (2,1);
\draw[fill=gray] (2,0) rectangle (3,1);
\draw[fill=gray] (3,0) rectangle (4,1);
\draw[fill=gray] (4,0) rectangle (5,1);
\draw[fill=gray] (5,0) rectangle (6,1);
\draw[fill=gray] (6,0) rectangle (7,1);
\draw[fill=gray] (0,1) rectangle (1,2);
\draw[fill=cyan!50] (1,1) rectangle (2,2);
\draw[fill=cyan!50] (2,1) rectangle (3,2);
\draw[fill=cyan!50] (3,1) rectangle (4,2);
\draw[fill=cyan!50] (4,1) rectangle (5,2);
\draw[fill=cyan!50] (5,1) rectangle (6,2);
\draw[fill=gray] (6,1) rectangle (7,2);
\draw[fill=gray] (0,2) rectangle (1,3);
\draw[fill=cyan!50] (1,2) rectangle (2,3);
\draw[fill=cyan!50] (2,2) rectangle (3,3);
\draw[fill=cyan!50] (3,2) rectangle (4,3);
\draw[fill=cyan!50] (4,2) rectangle (5,3);
\draw[fill=cyan!50] (5,2) rectangle (6,3);
\draw[fill=gray] (6,2) rectangle (7,3);
\draw[fill=gray] (0,3) rectangle (1,4);
\draw[fill=gray] (6,3) rectangle (7,4);
\draw[fill=gray] (0,4) rectangle (1,5);
\draw[fill=cyan!50] (1,4) rectangle (2,5);
\draw[fill=cyan!50] (2,4) rectangle (3,5);
\draw[fill=cyan!50] (3,4) rectangle (4,5);
\draw[fill=cyan!50] (4,4) rectangle (5,5);
\draw[fill=cyan!50] (5,4) rectangle (6,5);
\draw[fill=cyan!50] (5,5) rectangle (6,6);
\draw[fill=cyan!50] (3,5) rectangle (4,6);
\draw[fill=gray] (6,4) rectangle (7,5);
\draw[fill=gray] (0,5) rectangle (1,6);
\draw[fill=green] (1.500000,5.500000) circle (0.500000);
\draw[fill=cyan!50] (2,5) rectangle (3,6);
\begin{scope}[shift={(3.500000 ,5)}] \draw[fill=pink,rotate=45] rectangle (0.707, 0.707);
\end{scope}
\draw[fill=cyan!50] (4,5) rectangle (5,6);
\begin{scope}[shift={(5.500000 ,5)}] \draw[fill=pink,rotate=45] rectangle (0.707, 0.707);
\end{scope}
\draw[fill=gray] (6,5) rectangle (7,6);
\draw[fill=gray] (0,6) rectangle (1,7);
\draw[fill=gray] (1,6) rectangle (2,7);
\draw[fill=gray] (2,6) rectangle (3,7);
\draw[fill=gray] (3,6) rectangle (4,7);
\draw[fill=gray] (4,6) rectangle (5,7);
\draw[fill=gray] (5,6) rectangle (6,7);
\draw[fill=gray] (6,6) rectangle (7,7);
\node [align=center] at(-0.500000,0.500000) {0};
\node [align=center] at(-0.500000,1.500000) {1};
\node [align=center] at(-0.500000,2.500000) {2};
\node [align=center] at(-0.500000,3.500000) {3};
\node [align=center] at(-0.500000,4.500000) {4};
\node [align=center] at(-0.500000,5.500000) {5};
\node [align=center] at(-0.500000,6.500000) {6};
\node [align=center] at(0.500000,-0.500000) {A};
\node [align=center] at(1.500000,-0.500000) {B};
\node [align=center] at(2.500000,-0.500000) {C};
\node [align=center] at(3.500000,-0.500000) {D};
\node [align=center] at(4.500000,-0.500000) {E};
\node [align=center] at(5.500000,-0.500000) {F};
\node [align=center] at(6.500000,-0.500000) {G};
\draw[line width=1mm] (3.500000,1.500000) circle (0.300000);
\draw[-Latex] (3.500000,1.100000)--(3.500000,1.900000);
\draw[-Latex] (3.500000,2.100000)--(3.500000,2.900000);
\draw[-Latex] (3.500000,3.100000)--(3.500000,3.900000);
\draw[-Latex] (3.500000,4.100000)--(3.500000,4.900000);
\draw[-Latex] (3.900000,5.500000)--(3.100000,5.500000);
\draw[-Latex] (2.900000,5.500000)--(2.100000,5.500000);

\node at (1.500000,5.500000) {\huge$\psi_1$};
\node at (3.500000,5.500000) {\huge$\psi_2$};
\node at (5.500000,5.500000) {\huge$\psi_3$};

\end{tikzpicture}
		}
	\end{minipage}
	\hfill
	\begin{minipage}{\mpwidth}
		$V= -0.75$

		\medskip

		\centering
		\adjustbox{width=\aboxwidth}{
			\begin{tikzpicture}
\begin{axis}[
ymin=0,
ymax=1,
stack plots=y,
area style,
enlarge x limits=false,
legend pos=south east
]
\addplot coordinates
{(0,0.250000) (1,0.250000) (2,0.200993) (3,0.152023) (4,0.305880) (5,0.312895) }
\closedcycle;
\addlegendentry{$\psi_1$}\addplot coordinates
{(0,0.250000) (1,0.250000) (2,0.535752) (3,0.648862) (4,0.308291) (5,0.301539) }
\closedcycle;
\addlegendentry{$\psi_2$}\addplot coordinates
{(0,0.250000) (1,0.250000) (2,0.200993) (3,0.152023) (4,0.305880) (5,0.312895) }
\closedcycle;
\addlegendentry{$\psi_3$}\addplot coordinates
{(0,0.250000) (1,0.250000) (2,0.062261) (3,0.047091) (4,0.079949) (5,0.072670) }
\closedcycle;
\addlegendentry{$\psi_0 $}\end{axis}

\begin{axis}[
ymin=0,
ymax=1,
xmin=0,
xmax=5,
stack plots=y,
area style,
enlarge x limits=false,
legend pos=south east,
  axis x line=none,
]
\textBelief{D3}{1}{5}
\ObservationLine[dashed]{1}
\end{axis}

\end{tikzpicture}
		}
	\end{minipage}

	\begin{minipage}{\mpwidth}
		\centering
		b)   PO-OASSP  policy $\pi$
	\end{minipage}
	\hfill
	\begin{minipage}{\mpwidth}
		\centering
		c) Belief evolution
	\end{minipage}

		\caption{Results for explicability
			\label{RExplicability}}
	\end{figure}
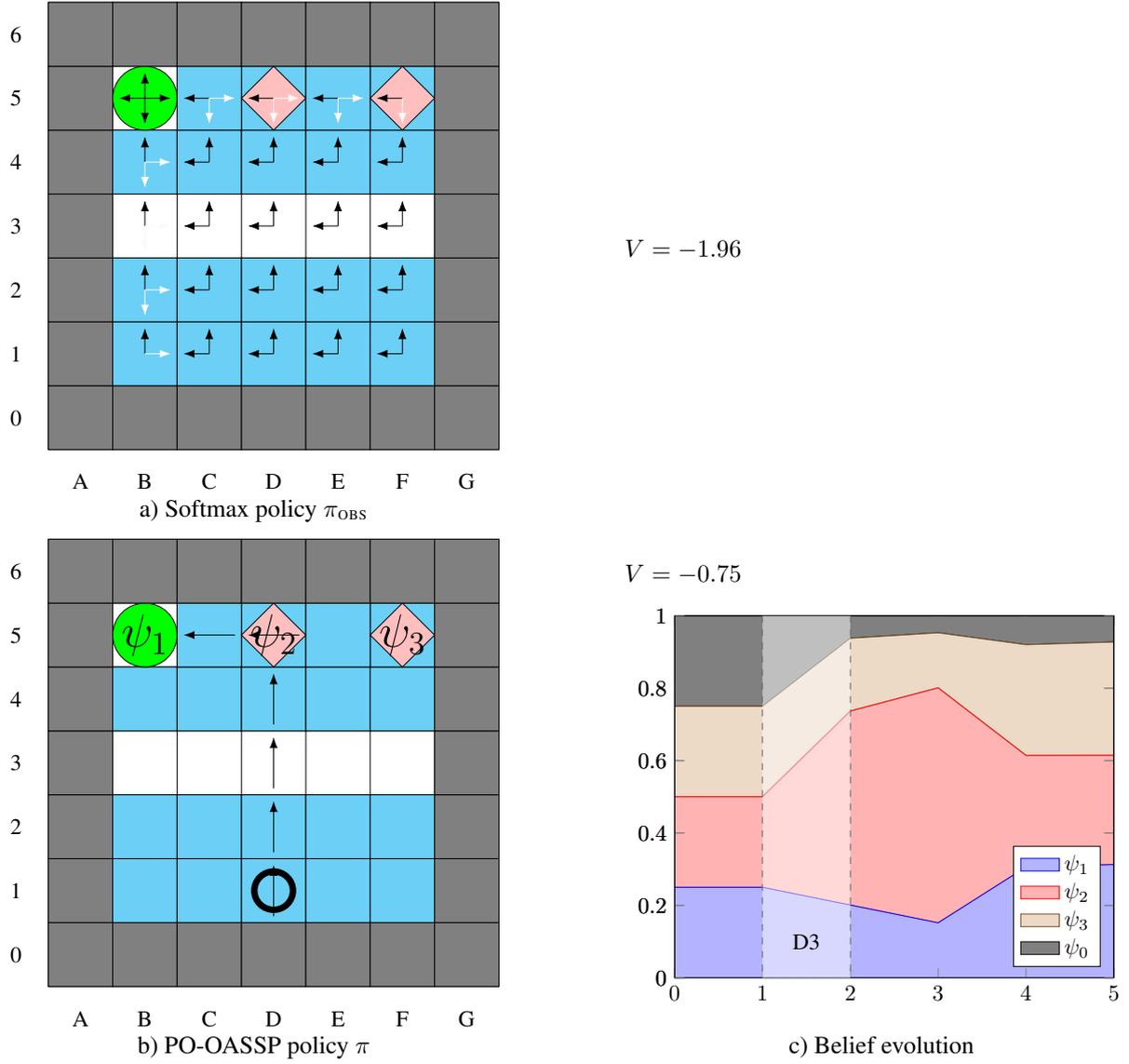

When addressing explicability in a smaller grid, the PO-OAMDP agent directly crosses the visible line and reaches its actual goal as fast as possible (\Cshref{RExplicability}).

By appearing in cell $(D,3)$, it maintains an ambiguity regarding its actual goal, but reduces the probability of the random policy $\target_0$ since the probability to reach that cell when acting randomly is lower than when trying to reach one of the goals.
Being not observed at the next time step reduces the probability of acting randomly (when acting randomly, the agent would have a $0.5$ probability to appear in a visible a cell), and also increases the belief in the middle goal (because left- and right-goal policies have a non-negligible probability to maintain in the visible row 3).

The uncertainty regarding the actual goal is maintained until the actual goal of the agent is reached and the agent is observed in a terminal state.
The fact that the agent is not observed at time step $4$ also reduces the probability of the middle goal ($\target_2$) to be the actual goal.

\subsection{Predictability}
\label{app|XP|predictability}

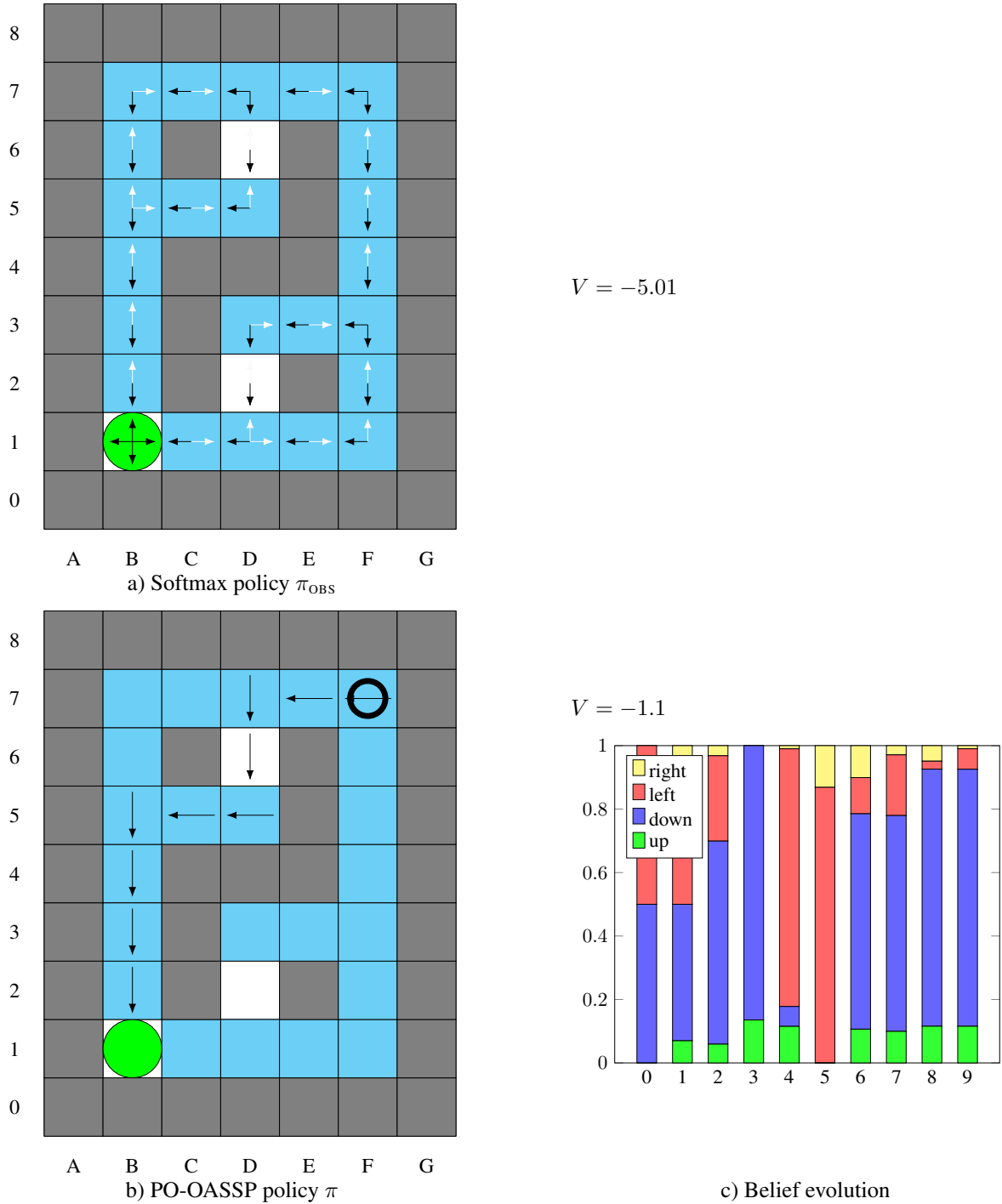
\begin{figure}
	\begin{minipage}{\mpwidth}
		\centering
		\adjustbox{width=\aboxwidth}{
			\begin{tikzpicture}
\draw (0,0) grid (7,9);
\draw[fill=gray] (0,0) rectangle (1,1);
\draw[fill=gray] (1,0) rectangle (2,1);
\draw[fill=gray] (2,0) rectangle (3,1);
\draw[fill=gray] (3,0) rectangle (4,1);
\draw[fill=gray] (4,0) rectangle (5,1);
\draw[fill=gray] (5,0) rectangle (6,1);
\draw[fill=gray] (6,0) rectangle (7,1);
\draw[fill=gray] (0,1) rectangle (1,2);
\draw[fill=green] (1.500000,1.500000) circle (0.500000);
\draw[fill=cyan!50] (2,1) rectangle (3,2);
\draw[fill=cyan!50] (3,1) rectangle (4,2);
\draw[fill=cyan!50] (4,1) rectangle (5,2);
\draw[fill=cyan!50] (5,1) rectangle (6,2);
\draw[fill=gray] (6,1) rectangle (7,2);
\draw[fill=gray] (0,2) rectangle (1,3);
\draw[fill=cyan!50] (1,2) rectangle (2,3);
\draw[fill=gray] (2,2) rectangle (3,3);
\draw[fill=gray] (4,2) rectangle (5,3);
\draw[fill=cyan!50] (5,2) rectangle (6,3);
\draw[fill=gray] (6,2) rectangle (7,3);
\draw[fill=gray] (0,3) rectangle (1,4);
\draw[fill=cyan!50] (1,3) rectangle (2,4);
\draw[fill=gray] (2,3) rectangle (3,4);
\draw[fill=cyan!50] (3,3) rectangle (4,4);
\draw[fill=cyan!50] (4,3) rectangle (5,4);
\draw[fill=cyan!50] (5,3) rectangle (6,4);
\draw[fill=gray] (6,3) rectangle (7,4);
\draw[fill=gray] (0,4) rectangle (1,5);
\draw[fill=cyan!50] (1,4) rectangle (2,5);
\draw[fill=gray] (2,4) rectangle (3,5);
\draw[fill=gray] (3,4) rectangle (4,5);
\draw[fill=gray] (4,4) rectangle (5,5);
\draw[fill=cyan!50] (5,4) rectangle (6,5);
\draw[fill=gray] (6,4) rectangle (7,5);
\draw[fill=gray] (0,5) rectangle (1,6);
\draw[fill=cyan!50] (1,5) rectangle (2,6);
\draw[fill=cyan!50] (2,5) rectangle (3,6);
\draw[fill=cyan!50] (3,5) rectangle (4,6);
\draw[fill=gray] (4,5) rectangle (5,6);
\draw[fill=cyan!50] (5,5) rectangle (6,6);
\draw[fill=gray] (6,5) rectangle (7,6);
\draw[fill=gray] (0,6) rectangle (1,7);
\draw[fill=cyan!50] (1,6) rectangle (2,7);
\draw[fill=gray] (2,6) rectangle (3,7);
\draw[fill=gray] (4,6) rectangle (5,7);
\draw[fill=cyan!50] (5,6) rectangle (6,7);
\draw[fill=gray] (6,6) rectangle (7,7);
\draw[fill=gray] (0,7) rectangle (1,8);
\draw[fill=cyan!50] (1,7) rectangle (2,8);
\draw[fill=cyan!50] (2,7) rectangle (3,8);
\draw[fill=cyan!50] (3,7) rectangle (4,8);
\draw[fill=cyan!50] (4,7) rectangle (5,8);
\draw[fill=cyan!50] (5,7) rectangle (6,8);
\draw[fill=gray] (6,7) rectangle (7,8);
\draw[fill=gray] (0,8) rectangle (1,9);
\draw[fill=gray] (1,8) rectangle (2,9);
\draw[fill=gray] (2,8) rectangle (3,9);
\draw[fill=gray] (3,8) rectangle (4,9);
\draw[fill=gray] (4,8) rectangle (5,9);
\draw[fill=gray] (5,8) rectangle (6,9);
\draw[fill=gray] (6,8) rectangle (7,9);
\node [align=center] at(-0.500000,0.500000) {0};
\node [align=center] at(-0.500000,1.500000) {1};
\node [align=center] at(-0.500000,2.500000) {2};
\node [align=center] at(-0.500000,3.500000) {3};
\node [align=center] at(-0.500000,4.500000) {4};
\node [align=center] at(-0.500000,5.500000) {5};
\node [align=center] at(-0.500000,6.500000) {6};
\node [align=center] at(-0.500000,7.500000) {7};
\node [align=center] at(-0.500000,8.500000) {8};
\node [align=center] at(0.500000,-0.500000) {A};
\node [align=center] at(1.500000,-0.500000) {B};
\node [align=center] at(2.500000,-0.500000) {C};
\node [align=center] at(3.500000,-0.500000) {D};
\node [align=center] at(4.500000,-0.500000) {E};
\node [align=center] at(5.500000,-0.500000) {F};
\node [align=center] at(6.500000,-0.500000) {G};
\draw[-Latex,draw=black!2] (1.500000,7.500000)--(1.900000,7.500000);
\draw[-Latex,draw=black!100] (1.500000,7.500000)--(1.500000,7.100000);
\draw[-Latex,draw=black!100] (2.500000,7.500000)--(2.100000,7.500000);
\draw[-Latex,draw=black!2] (2.500000,7.500000)--(2.900000,7.500000);
\draw[-Latex,draw=black!100] (3.500000,7.500000)--(3.100000,7.500000);
\draw[-Latex,draw=black!100] (3.500000,7.500000)--(3.500000,7.100000);
\draw[-Latex,draw=black!100] (4.500000,7.500000)--(4.100000,7.500000);
\draw[-Latex,draw=black!2] (4.500000,7.500000)--(4.900000,7.500000);
\draw[-Latex,draw=black!100] (5.500000,7.500000)--(5.100000,7.500000);
\draw[-Latex,draw=black!100] (5.500000,7.500000)--(5.500000,7.100000);
\draw[-Latex,draw=black!2] (1.500000,6.500000)--(1.500000,6.900000);
\draw[-Latex,draw=black!100] (1.500000,6.500000)--(1.500000,6.100000);
\draw[-Latex,draw=black!2] (3.500000,6.500000)--(3.500000,6.900000);
\draw[-Latex,draw=black!100] (3.500000,6.500000)--(3.500000,6.100000);
\draw[-Latex,draw=black!2] (5.500000,6.500000)--(5.500000,6.900000);
\draw[-Latex,draw=black!100] (5.500000,6.500000)--(5.500000,6.100000);
\draw[-Latex,draw=black!2] (1.500000,5.500000)--(1.500000,5.900000);
\draw[-Latex,draw=black!2] (1.500000,5.500000)--(1.900000,5.500000);
\draw[-Latex,draw=black!100] (1.500000,5.500000)--(1.500000,5.100000);
\draw[-Latex,draw=black!100] (2.500000,5.500000)--(2.100000,5.500000);
\draw[-Latex,draw=black!2] (2.500000,5.500000)--(2.900000,5.500000);
\draw[-Latex,draw=black!100] (3.500000,5.500000)--(3.100000,5.500000);
\draw[-Latex,draw=black!2] (3.500000,5.500000)--(3.500000,5.900000);
\draw[-Latex,draw=black!2] (5.500000,5.500000)--(5.500000,5.900000);
\draw[-Latex,draw=black!100] (5.500000,5.500000)--(5.500000,5.100000);
\draw[-Latex,draw=black!2] (1.500000,4.500000)--(1.500000,4.900000);
\draw[-Latex,draw=black!100] (1.500000,4.500000)--(1.500000,4.100000);
\draw[-Latex,draw=black!2] (5.500000,4.500000)--(5.500000,4.900000);
\draw[-Latex,draw=black!100] (5.500000,4.500000)--(5.500000,4.100000);
\draw[-Latex,draw=black!2] (1.500000,3.500000)--(1.500000,3.900000);
\draw[-Latex,draw=black!100] (1.500000,3.500000)--(1.500000,3.100000);
\draw[-Latex,draw=black!2] (3.500000,3.500000)--(3.900000,3.500000);
\draw[-Latex,draw=black!100] (3.500000,3.500000)--(3.500000,3.100000);
\draw[-Latex,draw=black!100] (4.500000,3.500000)--(4.100000,3.500000);
\draw[-Latex,draw=black!2] (4.500000,3.500000)--(4.900000,3.500000);
\draw[-Latex,draw=black!100] (5.500000,3.500000)--(5.100000,3.500000);
\draw[-Latex,draw=black!100] (5.500000,3.500000)--(5.500000,3.100000);
\draw[-Latex,draw=black!1] (1.500000,2.500000)--(1.500000,2.900000);
\draw[-Latex,draw=black!100] (1.500000,2.500000)--(1.500000,2.100000);
\draw[-Latex,draw=black!2] (3.500000,2.500000)--(3.500000,2.900000);
\draw[-Latex,draw=black!100] (3.500000,2.500000)--(3.500000,2.100000);
\draw[-Latex,draw=black!2] (5.500000,2.500000)--(5.500000,2.900000);
\draw[-Latex,draw=black!100] (5.500000,2.500000)--(5.500000,2.100000);
\draw[-Latex,draw=black!100] (1.500000,1.500000)--(1.100000,1.500000);
\draw[-Latex,draw=black!100] (1.500000,1.500000)--(1.500000,1.900000);
\draw[-Latex,draw=black!100] (1.500000,1.500000)--(1.900000,1.500000);
\draw[-Latex,draw=black!100] (1.500000,1.500000)--(1.500000,1.100000);
\draw[-Latex,draw=black!100] (2.500000,1.500000)--(2.100000,1.500000);
\draw[-Latex,draw=black!1] (2.500000,1.500000)--(2.900000,1.500000);
\draw[-Latex,draw=black!100] (3.500000,1.500000)--(3.100000,1.500000);
\draw[-Latex,draw=black!2] (3.500000,1.500000)--(3.500000,1.900000);
\draw[-Latex,draw=black!2] (3.500000,1.500000)--(3.900000,1.500000);
\draw[-Latex,draw=black!100] (4.500000,1.500000)--(4.100000,1.500000);
\draw[-Latex,draw=black!2] (4.500000,1.500000)--(4.900000,1.500000);
\draw[-Latex,draw=black!100] (5.500000,1.500000)--(5.100000,1.500000);
\draw[-Latex,draw=black!2] (5.500000,1.500000)--(5.500000,1.900000);
\end{tikzpicture}
		}
	\end{minipage}
	\hfill
	\begin{minipage}{\mpwidth}
		$V= -5.01$
	\end{minipage}

	\begin{minipage}{\mpwidth}
		\centering
		a) Softmax policy $\piobs$
	\end{minipage}

	\medskip

	\begin{minipage}{\mpwidth}
		\centering
		\adjustbox{width=\aboxwidth}{
			\begin{tikzpicture}
\draw (0,0) grid (7,9);
\draw[fill=gray] (0,0) rectangle (1,1);
\draw[fill=gray] (1,0) rectangle (2,1);
\draw[fill=gray] (2,0) rectangle (3,1);
\draw[fill=gray] (3,0) rectangle (4,1);
\draw[fill=gray] (4,0) rectangle (5,1);
\draw[fill=gray] (5,0) rectangle (6,1);
\draw[fill=gray] (6,0) rectangle (7,1);
\draw[fill=gray] (0,1) rectangle (1,2);
\draw[fill=green] (1.500000,1.500000) circle (0.500000);
\draw[fill=cyan!50] (2,1) rectangle (3,2);
\draw[fill=cyan!50] (3,1) rectangle (4,2);
\draw[fill=cyan!50] (4,1) rectangle (5,2);
\draw[fill=cyan!50] (5,1) rectangle (6,2);
\draw[fill=gray] (6,1) rectangle (7,2);
\draw[fill=gray] (0,2) rectangle (1,3);
\draw[fill=cyan!50] (1,2) rectangle (2,3);
\draw[fill=gray] (2,2) rectangle (3,3);
\draw[fill=gray] (4,2) rectangle (5,3);
\draw[fill=cyan!50] (5,2) rectangle (6,3);
\draw[fill=gray] (6,2) rectangle (7,3);
\draw[fill=gray] (0,3) rectangle (1,4);
\draw[fill=cyan!50] (1,3) rectangle (2,4);
\draw[fill=gray] (2,3) rectangle (3,4);
\draw[fill=cyan!50] (3,3) rectangle (4,4);
\draw[fill=cyan!50] (4,3) rectangle (5,4);
\draw[fill=cyan!50] (5,3) rectangle (6,4);
\draw[fill=gray] (6,3) rectangle (7,4);
\draw[fill=gray] (0,4) rectangle (1,5);
\draw[fill=cyan!50] (1,4) rectangle (2,5);
\draw[fill=gray] (2,4) rectangle (3,5);
\draw[fill=gray] (3,4) rectangle (4,5);
\draw[fill=gray] (4,4) rectangle (5,5);
\draw[fill=cyan!50] (5,4) rectangle (6,5);
\draw[fill=gray] (6,4) rectangle (7,5);
\draw[fill=gray] (0,5) rectangle (1,6);
\draw[fill=cyan!50] (1,5) rectangle (2,6);
\draw[fill=cyan!50] (2,5) rectangle (3,6);
\draw[fill=cyan!50] (3,5) rectangle (4,6);
\draw[fill=gray] (4,5) rectangle (5,6);
\draw[fill=cyan!50] (5,5) rectangle (6,6);
\draw[fill=gray] (6,5) rectangle (7,6);
\draw[fill=gray] (0,6) rectangle (1,7);
\draw[fill=cyan!50] (1,6) rectangle (2,7);
\draw[fill=gray] (2,6) rectangle (3,7);
\draw[fill=gray] (4,6) rectangle (5,7);
\draw[fill=cyan!50] (5,6) rectangle (6,7);
\draw[fill=gray] (6,6) rectangle (7,7);
\draw[fill=gray] (0,7) rectangle (1,8);
\draw[fill=cyan!50] (1,7) rectangle (2,8);
\draw[fill=cyan!50] (2,7) rectangle (3,8);
\draw[fill=cyan!50] (3,7) rectangle (4,8);
\draw[fill=cyan!50] (4,7) rectangle (5,8);
\draw[fill=cyan!50] (5,7) rectangle (6,8);
\draw[fill=gray] (6,7) rectangle (7,8);
\draw[fill=gray] (0,8) rectangle (1,9);
\draw[fill=gray] (1,8) rectangle (2,9);
\draw[fill=gray] (2,8) rectangle (3,9);
\draw[fill=gray] (3,8) rectangle (4,9);
\draw[fill=gray] (4,8) rectangle (5,9);
\draw[fill=gray] (5,8) rectangle (6,9);
\draw[fill=gray] (6,8) rectangle (7,9);
\node [align=center] at(-0.500000,0.500000) {0};
\node [align=center] at(-0.500000,1.500000) {1};
\node [align=center] at(-0.500000,2.500000) {2};
\node [align=center] at(-0.500000,3.500000) {3};
\node [align=center] at(-0.500000,4.500000) {4};
\node [align=center] at(-0.500000,5.500000) {5};
\node [align=center] at(-0.500000,6.500000) {6};
\node [align=center] at(-0.500000,7.500000) {7};
\node [align=center] at(-0.500000,8.500000) {8};
\node [align=center] at(0.500000,-0.500000) {A};
\node [align=center] at(1.500000,-0.500000) {B};
\node [align=center] at(2.500000,-0.500000) {C};
\node [align=center] at(3.500000,-0.500000) {D};
\node [align=center] at(4.500000,-0.500000) {E};
\node [align=center] at(5.500000,-0.500000) {F};
\node [align=center] at(6.500000,-0.500000) {G};
\draw[line width=1mm] (5.500000,7.500000) circle (0.300000);
\draw[-Latex] (5.900000,7.500000)--(5.100000,7.500000);
\draw[-Latex] (4.900000,7.500000)--(4.100000,7.500000);
\draw[-Latex] (3.500000,7.900000)--(3.500000,7.100000);
\draw[-Latex] (3.500000,6.900000)--(3.500000,6.100000);
\draw[-Latex] (3.900000,5.500000)--(3.100000,5.500000);
\draw[-Latex] (2.900000,5.500000)--(2.100000,5.500000);
\draw[-Latex] (1.500000,5.900000)--(1.500000,5.100000);
\draw[-Latex] (1.500000,4.900000)--(1.500000,4.100000);
\draw[-Latex] (1.500000,3.900000)--(1.500000,3.100000);
\draw[-Latex] (1.500000,2.900000)--(1.500000,2.100000);
\end{tikzpicture}
		}
	\end{minipage}
	\hfill
	\begin{minipage}{\mpwidth}
		$V= -1.1$

		\medskip

		\centering
		\adjustbox{width=\aboxwidth}{
			\pgfplotstableread{
Label up down left right
0 2.046671543220787E-44 0.5 0.5 2.046671543220787E-44 
1 0.06984306428702557 0.4301569357129744 0.4301569357129744 0.06984306428702557 
2 0.059143991786248865 0.6400569456493197 0.26904400172259413 0.0317550608418373 
3 0.13530913145380427 0.8646908685461958 3.4428359316920585E-44 3.4428359316920585E-44 
4 0.11511803484075357 0.06266016793269948 0.8122330016381417 0.009988795588405335 
5 4.357466815965515E-44 3.541874241262539E-44 0.868599746491959 0.13140025350804083 
6 0.10629521496297636 0.6790382070756311 0.11396046419335704 0.10070611376803573 
7 0.09941602808266961 0.6802978799577257 0.19133342401113798 0.028952667948466567 
8 0.11561226389275903 0.8104459397840177 0.024976329771565133 0.04896546655165821 
9 0.11550774296368418 0.8101321795266003 0.06458615463454781 0.009773922875167932 
}\testdata\begin{tikzpicture}
\begin{axis}[
ybar stacked,
 ymin=0,
ymax=1,
 xtick=data,
legend style={cells={anchor=west}, legend pos=north west},
reverse legend=true,
xticklabels from table={\testdata}{Label},
xticklabel style={text width=2cm,align=center},
]
\addplot [fill=green!80] table [y=up, meta=Label, x expr=\coordindex] {\testdata};
\addlegendentry{up}
\addplot [fill=blue!60] table [y=down, meta=Label, x expr=\coordindex] {\testdata};
\addlegendentry{down}
\addplot [fill=red!60] table [y=left, meta=Label, x expr=\coordindex] {\testdata};
\addlegendentry{left}
\addplot [fill=yellow!60] table [y=right, meta=Label, x expr=\coordindex] {\testdata};
\addlegendentry{right}
\end{axis}\end{tikzpicture}
		}
	\end{minipage}

	\begin{minipage}{\mpwidth}
		\centering
		b)   PO-OASSP  policy $\pi$
	\end{minipage}
	\hfill
	\begin{minipage}{\mpwidth}
		\centering
		c)  Belief evolution
	\end{minipage}

		\caption{Results for action predictability
			\label{RPredictability}}
	\end{figure}

The predictability task described in this section (\Cshref{RPredictability}) is simpler than the one in the body of the article  depicted in \Cshref{fig|predictability|trajectories}.

In the maze presented in \Cref{sec|MazeProblems}, the agent had to cross an empty area, which induces a lot of uncertainties regarding its next action (as many trajectories cross that area) or its actual state (which depends on how he decided to cross the area).
It required essentially adding a small negative reward $\delta$ (through $\Robs$) to ensure that the problem corresponds to a valid SSP and the agent reaches its actual goal (otherwise, the agent might get stuck in a situation where its next action is the most probable one for the observer, but its state does not change, having a cumulated $0$ value).

In the setting of \Cshref{RPredictability}, once the agent has appeared in $(D,6)$, the remainder of its trajectory (as modeled by the observer through $\piobs$) is much less ambiguous, except for randomly sampled sub-optimal moves.
There is thus no need for an additional negative reward.

\section{More Example Scenarios}
\label{app|moreExampleScenarios}

\subsection{Obfuscation}

Opposite problems can also be considered.
The agent then attempts to hide information. 
It may, for instance, have multiple possible goals, and try to not reveal its actual goal to the observer.
Obfuscation in the PO-OAMDP setting presents the same difficulties as in the OAMDP setting:
\begin{itemize}
\item if the objective is only about obfuscating information, but not on achieving a task, then the agent may simply not do anything to hide its goal, and
\item to derive the observer's belief over the goals, one assumes that the observer does not known that the agent may be trying to trick her.
\end{itemize}
Relaxing the last assumption would typically require considering a game-theoretic setting, which is out of scope of this paper.

\subsection{Broadening to Other Types of Problems}

For now we have discussed problems already modeled in the OAMDP and p-OAMDP frameworks by considering the observer's partial observability.
Yet, PO-OAMDPs allow modeling other problems in which the agent will not try to exhibit a legible, explicable or predictible behavior (for instance), but could attempt to convey as much information as possible about the state of the world currently watched by the observer.

\paragraph{Scenario \#1:}

Let us consider an office-like environment (see \Cref{portes}) with doors that are either opened or locked up, and an agent trying to let an external observer know the state of the doors through its actions.
This can of courses ve achieved by opening the  doors visible to the observer, but also by showing up in certain zones that can be reached only by opening certain doors.
Thus, even if these doors are never seen by the observer, the agent's presence may allow inferring that some doors are open.

\begin{figure}
\begin{center}
\resizebox{0.48\columnwidth}{!}{
	\begin{tikzpicture}[xscale=1,yscale=1]

\tikzstyle{mur}=[fill, draw=white]
\tikzstyle{porte}=[fill=brown, draw=black]
\tikzstyle{test}=[fill=green, draw=white]
\tikzstyle{hidden}=[fill=black!40, opacity=0.5, draw=black]

\def\t{8}

\foreach \x in {0,...,\t} {
  \draw[mur] (0,\x) rectangle (1,\x+1);
  \draw[mur] (\t,\x) rectangle (\t+1,\x+1);
  \draw[mur] (\x,0) rectangle (\x+1,1);
  \draw[mur] (\x,\t) rectangle (\x+1,\t+1);
}

\draw[mur] (1,3) rectangle (2,4);
\draw[mur] (3,3) rectangle (4,4);
\draw[mur] (4,3) rectangle (5,4);
\draw[mur] (6,3) rectangle (7,4);

\draw[mur] (4,4) rectangle (5,5);
\draw[mur] (4,6) rectangle (5,7);
\draw[mur] (5,6) rectangle (6,7);
\draw[mur] (6,6) rectangle (7,7);
\draw[mur] (6,5) rectangle (7,6);
\draw[mur] (6,4) rectangle (7,5);

\draw[mur] (4,3) rectangle (5,2);
\draw[mur] (4,2) rectangle (5,1);

\draw[porte] (2,3.3) rectangle (3,3.7);
\node (d1) at (2.5,4) {\LARGE \texttt{d1}};


\draw[porte] (4.3,7) rectangle (4.7,8);
\node (d3) at (5.1,7.5) {\LARGE \texttt{d2}};

\draw[hidden] (0,3) rectangle (\t+1,\t+1);

\node at (2.5,1.5) {\LARGE \texttt{A}};
\node at (5.5,2.5) {\LARGE \texttt{B}};
\node at (7.5,2.5) {\LARGE \texttt{C}};
\node at (6.5,1.5) {\LARGE \texttt{O}};

\end{tikzpicture}
}
\hfill
\resizebox{0.48\columnwidth}{!}{
	\begin{tikzpicture}[xscale=1,yscale=1]

\tikzstyle{mur}=[fill, draw=white]
\tikzstyle{porte}=[fill=brown, draw=black]
\tikzstyle{porteO}=[fill=white, draw=black]
\tikzstyle{test}=[fill=green, draw=white]
\tikzstyle{hidden}=[fill=black!40, opacity=0.5, draw=black]

\def\t{8}

\foreach \x in {0,...,\t} {
  \draw[mur] (0,\x) rectangle (1,\x+1);
  \draw[mur] (\t,\x) rectangle (\t+1,\x+1);
  \draw[mur] (\x,0) rectangle (\x+1,1);
  \draw[mur] (\x,\t) rectangle (\x+1,\t+1);
}

\draw[mur] (1,3) rectangle (2,4);
\draw[mur] (3,3) rectangle (4,4);
\draw[mur] (4,3) rectangle (5,4);
\draw[mur] (6,3) rectangle (7,4);

\draw[mur] (4,4) rectangle (5,5);
\draw[mur] (4,6) rectangle (5,7);
\draw[mur] (5,6) rectangle (6,7);
\draw[mur] (6,6) rectangle (7,7);
\draw[mur] (6,5) rectangle (7,6);
\draw[mur] (6,4) rectangle (7,5);

\draw[mur] (4,3) rectangle (5,2);
\draw[mur] (4,2) rectangle (5,1);

\draw[porteO] (2,3.3) rectangle (3,3.7);
\node (d1) at (2.5,4) {\LARGE \texttt{d1}};


\draw[porteO] (4.3,7) rectangle (4.7,8);
\node (d2) at (5.1,7.5) {\LARGE \texttt{d2}};

\draw[hidden] (0,3) rectangle (\t+1,\t+1);

\node (B) at (5.5,2.5) {\LARGE \texttt{B}};
\node (A) at (2.5,1.5) {\LARGE \texttt{A}};
\node (C) at (7.5,2.5) {\LARGE \texttt{C}};
\node at (6.5,1.5) {\LARGE \texttt{O}};

\draw[thick,->] (A) -- (2.5,3.2);
\draw[thick,->] (d1) -- (2.5,7.5) -- (4.2,7.5);
\draw[thick,->] (d2) -- (7.5,7.5) -- (C);

\end{tikzpicture}
}
\end{center}
\caption{
A grid environment with doors that may be locked.
Walls are represented by black cells, doors by brown rectangles.
The grey zone correspond to a zone hidden to the observer.
The agent starts in \texttt{A} and has to go to \texttt{O}.
}
\label{portes}
\end{figure}
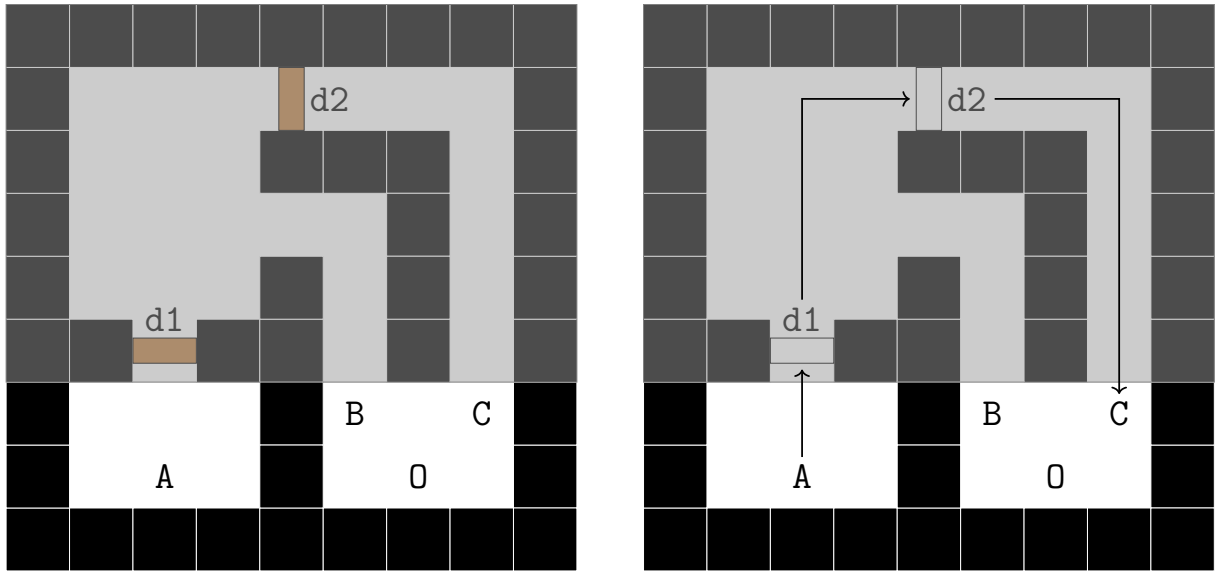

In the exemple of \Cref{portes}, by choosing a longer path in the hidden zone and reappearing in \texttt{C}, it tells the observer that doors \texttt{d1} and \texttt{d2} are not locked.
Becoming visible in \texttt{B} would allow achieving the objective, but without providing information to the observer about the state of door \texttt{d2}.
Solving this problem requires the agent to reason about
\begin{enumerate}
\item the consequences of its actions,
\item its visibility, which depends on its location,
\item the observer's possible inferences, and
\item in particular the doors whose state the agent wants the observer to know.
\end{enumerate}

\paragraph{Scenario \#2:}
In a second scenario, let us consider an agent responsible for tracking intruders in an environment an observer cannot perceive (see \Cref{intrus}).
By modeling the observer's reasoning process, the agent can leverage the observer's expectations to act, appearing in certain places and let the observer know about the presence and location of an intruder.

\begin{figure}
\begin{center}
\resizebox{0.48\columnwidth}{!}{
	\begin{tikzpicture}[xscale=1,yscale=1]

\tikzstyle{mur}=[fill, draw=white]
\tikzstyle{porte}=[fill=brown, draw=black]
\tikzstyle{test}=[fill=green, draw=white]
\tikzstyle{hidden}=[fill=black!40, opacity=0.5, draw=black]

\def\t{12}

\foreach \x in {0,...,\t} {
  \draw[mur] (0,\x) rectangle (1,\x+1);
  \draw[mur] (\t,\x) rectangle (\t+1,\x+1);
  \draw[mur] (\x,0) rectangle (\x+1,1);
  \draw[mur] (\x,\t) rectangle (\x+1,\t+1);
}

\draw[mur] (2,2) rectangle (3,3);
\draw[mur] (2,3) rectangle (3,4);
\draw[mur] (3,2) rectangle (4,3);
\draw[mur] (2,5) rectangle (3,6);
\draw[mur] (2,6) rectangle (3,7);
\draw[mur] (5,2) rectangle (6,3);
\draw[mur] (6,2) rectangle (7,3);
\draw[mur] (6,3) rectangle (7,4);

\draw[mur] (3,6) rectangle (4,7);

\draw[mur] (5,6) rectangle (6,7);
\draw[mur] (6,6) rectangle (7,7);
\draw[mur] (6,5) rectangle (7,6);


\draw[mur] (7,2) rectangle (8,3);

\draw[mur] (9,2) rectangle (10,3);
\draw[mur] (10,2) rectangle (11,3);
\draw[mur] (10,3) rectangle (11,4);
\draw[mur] (10,5) rectangle (11,6);
\draw[mur] (10,6) rectangle (11,7);
\draw[mur] (9,6) rectangle (10,7);
\draw[mur] (7,6) rectangle (8,7);

\draw[mur] (10,7) rectangle (11,8);
\draw[mur] (10,9) rectangle (11,10);
\draw[mur] (10,10) rectangle (11,11);
\draw[mur] (9,10) rectangle (10,11);
\draw[mur] (7,10) rectangle (8,11);
\draw[mur] (6,10) rectangle (7,11);
\draw[mur] (6,9) rectangle (7,10);
\draw[mur] (6,7) rectangle (7,8);

\draw[mur] (5,10) rectangle (6,11);
\draw[mur] (3,10) rectangle (4,11);
\draw[mur] (2,10) rectangle (3,11);
\draw[mur] (2,9) rectangle (3,10);
\draw[mur] (2,7) rectangle (3,8);




\draw[fill=cyan] (1.500000,1.500000) circle (0.500000);
\draw[fill=red] (5.500000,9.500000) circle (0.500000);
\draw[hidden] (2,2) rectangle (11,11);


\end{tikzpicture}
}
\hfill
\resizebox{0.48\columnwidth}{!}{
	\begin{tikzpicture}[xscale=1,yscale=1]

\tikzstyle{mur}=[fill, draw=white]
\tikzstyle{porte}=[fill=brown, draw=black]
\tikzstyle{test}=[fill=green, draw=white]
\tikzstyle{hidden}=[fill=black!40, opacity=0.5, draw=black]

\def\t{12}

\foreach \x in {0,...,\t} {
  \draw[mur] (0,\x) rectangle (1,\x+1);
  \draw[mur] (\t,\x) rectangle (\t+1,\x+1);
  \draw[mur] (\x,0) rectangle (\x+1,1);
  \draw[mur] (\x,\t) rectangle (\x+1,\t+1);
}

\draw[mur] (2,2) rectangle (3,3);
\draw[mur] (2,3) rectangle (3,4);
\draw[mur] (3,2) rectangle (4,3);
\draw[mur] (2,5) rectangle (3,6);
\draw[mur] (2,6) rectangle (3,7);
\draw[mur] (5,2) rectangle (6,3);
\draw[mur] (6,2) rectangle (7,3);
\draw[mur] (6,3) rectangle (7,4);

\draw[mur] (3,6) rectangle (4,7);

\draw[mur] (5,6) rectangle (6,7);
\draw[mur] (6,6) rectangle (7,7);
\draw[mur] (6,5) rectangle (7,6);


\draw[mur] (7,2) rectangle (8,3);

\draw[mur] (9,2) rectangle (10,3);
\draw[mur] (10,2) rectangle (11,3);
\draw[mur] (10,3) rectangle (11,4);
\draw[mur] (10,5) rectangle (11,6);
\draw[mur] (10,6) rectangle (11,7);
\draw[mur] (9,6) rectangle (10,7);
\draw[mur] (7,6) rectangle (8,7);

\draw[mur] (10,7) rectangle (11,8);
\draw[mur] (10,9) rectangle (11,10);
\draw[mur] (10,10) rectangle (11,11);
\draw[mur] (9,10) rectangle (10,11);
\draw[mur] (7,10) rectangle (8,11);
\draw[mur] (6,10) rectangle (7,11);
\draw[mur] (6,9) rectangle (7,10);
\draw[mur] (6,7) rectangle (7,8);

\draw[mur] (5,10) rectangle (6,11);
\draw[mur] (3,10) rectangle (4,11);
\draw[mur] (2,10) rectangle (3,11);
\draw[mur] (2,9) rectangle (3,10);
\draw[mur] (2,7) rectangle (3,8);




\draw[fill=cyan] (4.500000,11.500000) circle (0.500000);
\draw[fill=red] (5.500000,9.500000) circle (0.500000);
\draw[hidden] (2,2) rectangle (11,11);

\draw[thick] (1.5,1.5) -- (1.5,11.5);
\draw[thick,->] (1.5,11.5) -- (3.8,11.5);
\draw[thick,dashed,->] (4.5,10.8) -- (4.5,9.5);

\end{tikzpicture}
}
\end{center}

\caption{
Grid environment with an intruder.
Wall are represented by black cells.
The grey area corresponds to a zone hidden to the observer.
The intruder is represented by a red circle and the agent by a blue circle.
}
\label{intrus}
\end{figure}
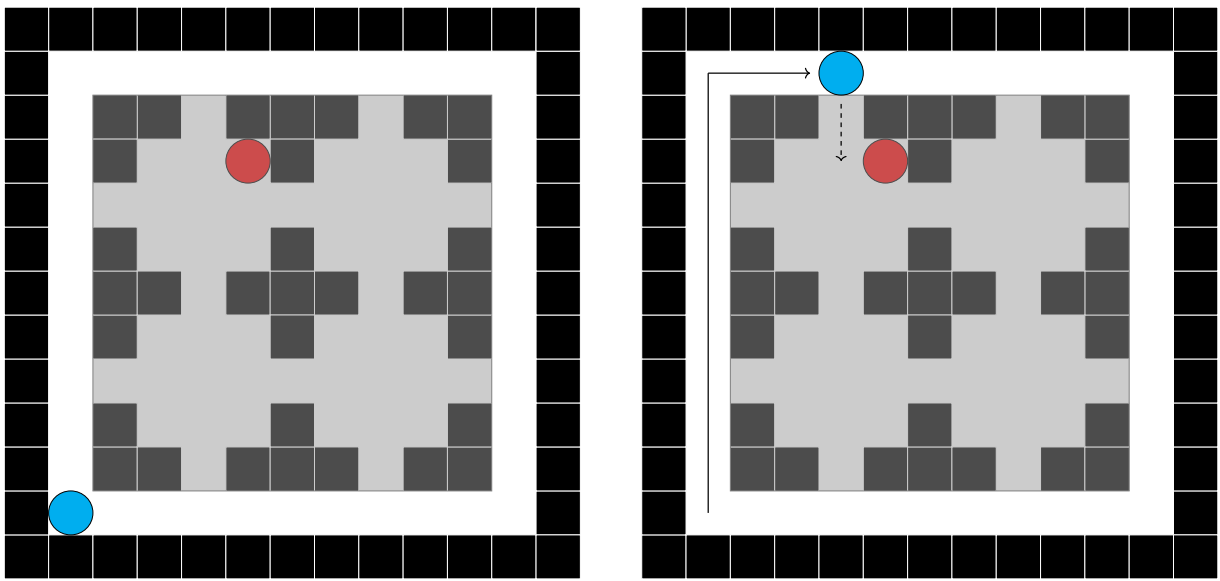

In the example illustrated by \Cref{intrus}, the observer expects the agent to try getting close to the intruder.
The agent can thus inform the observer about the intruder's location by choosing among the possible trajectories bringing as close as possible to the intruder, a path that is often visible to the observer.

\paragraph{Scenario \#3:}
Finally, in complex tasks that require achieving several intermediate sub-tasks (/objectives), the agent may try to convey information about the progress of the ongoing sub-task by following longer paths but
\begin{enumerate}
\item which are partially visible to the observer, and
\item which leave less ambiguity about its intermediate objective.
\end{enumerate}
By trying to make it easier to infer the intermediate objectives it attempts to achieve, the agent can thus transmit the status of the current sub-task, what can be crucial in a collaborative scenario (that would require in return a particular action from the human).

\medskip

These various situations show that the PO-OAMDP formalism allows broadening the family of problems covered by conveying information not only about the agent's behavior.
This framework allows modeling problems close to active information gathering, as formalized by $\rho$-POMDPs \citep{AraBufThoCha-nips10}.
In $\rho$-POMDPs, an agent partially observes its (own) environment, and has to act as well as possible to obtain relevant observations and maximize some information measure about its target variables (such as its location).
The main difference with PO-OAMDPs is that, in the latter, the agent wants to control the information acquired by a third party (the observer), not its own information (which is complete).
This requires in particular a model of the observer that the agent will take advantage of to indirectly control the observer's belief.

\end{document}